\def\eqref#1{equation~\ref{#1}}
\def\floor#1{\lfloor #1 \rfloor}
\def\1{\bm{1}}
\def\vh{{\bm{h}}}
\def\vx{{\bm{x}}}
\def\vy{{\bm{y}}}
\def\mA{{\bm{A}}}
\def\mD{{\bm{D}}}
\def\mI{{\bm{I}}}
\def\mL{{\bm{L}}}
\def\mW{{\bm{W}}}
\def\mX{{\bm{X}}}
\DeclareMathAlphabet{\mathsfit}{\encodingdefault}{\sfdefault}{m}{sl}
\SetMathAlphabet{\mathsfit}{bold}{\encodingdefault}{\sfdefault}{bx}{n}
\def\gA{{\mathcal{A}}}
\def\gD{{\mathcal{D}}}
\def\gE{{\mathcal{E}}}
\def\gF{{\mathcal{F}}}
\def\gG{{\mathcal{G}}}
\def\gH{{\mathcal{H}}}
\def\gN{{\mathcal{N}}}
\def\gO{{\mathcal{O}}}
\def\gP{{\mathcal{P}}}
\def\gS{{\mathcal{S}}}
\def\gU{{\mathcal{U}}}
\def\gV{{\mathcal{V}}}
\def\sY{{\mathbb{Y}}}
\newcommand{\R}{\mathbb{R}}
\newtheorem{assumption}{Assumption}[section]
\newcommand*\justify{%
  \fontdimen2\font=0.4em
  \fontdimen3\font=0.2em
  \fontdimen4\font=0.1em
  \fontdimen7\font=0.1em
  \hyphenchar\font=`\-
}
\newcommand{\payAttention}[3]{\textcolor{#1}{{{#2}: {#3}}}}
\newcommand{\xSays}[2]{\payAttention{blue}{#1 Says}{#2}}
\newcommand{\Sid}[1]{\xSays{Siddhartha}{#1}}
\newcommand{\ags}{\texttt{AGS-GNN}}
\newcommand{\agsns}{\texttt{AGS-NS}}
\newcommand{\agsgs}{\texttt{AGS-GS}}
\newcommand{\agsgsrw}{\texttt{AGS-GS-RW}}
\newcommand{\agsgsdis}{\texttt{AGS-GS-Disjoint}}
  \providecommand\BibTeX{{%
    \normalfont B\kern-0.5em{\scshape i\kern-0.25em b}\kern-0.8em\TeX}}}
\renewcommand\footnotetextcopyrightpermission[1]{} 
\begin{document}

\title{AGS-GNN: Attribute-guided Sampling for Graph Neural Networks}


\author{Siddhartha Shankar Das}
\email{das90@purdue.edu}
\affiliation{%
  \institution{Purdue University}
  \streetaddress{P.O. Box 1212}
  \city{West Lafayette}
  \state{IN}
  \country{USA}
  \postcode{47906}
}

\author{S M Ferdous}
\email{sm.ferdous@pnnl.gov}
\affiliation{%
  \institution{Pacific Northwest National Lab.}
  \streetaddress{P.O. Box 999}
  \city{Richland}
  \state{WA}
  \country{USA}
  \postcode{99352}
}

\author{Mahantesh M Halappanavar}
\email{hala@pnnl.gov}
\affiliation{%
  \institution{Pacific Northwest National Lab.}
  \streetaddress{P.O. Box 999}
  \city{Richland}
  \state{WA}
  \country{USA}
  \postcode{99352}
}

\author{Edoardo Serra}
\email{edoardoserra@boisestate.edu}
\affiliation{%
  \institution{Boise State University}
  \streetaddress{P.O. Box 999}
  \city{Boise}
  \state{ID}
  \country{USA}
  \postcode{83725}
}

\author{Alex Pothen}
\email{apothen@purdue.edu}
\affiliation{%
  \institution{Purdue University}
  \streetaddress{P.O. Box 1212}
  \city{West Lafayette}
  \state{IN}
  \country{USA}
  \postcode{47906}
}



\renewcommand{\shortauthors}{Trovato and Tobin, et al.}


\begin{abstract}
We propose \ags{}, a novel attribute-guided sampling algorithm for Graph Neural Networks (GNNs) that exploits node features and connectivity structure of a graph while simultaneously adapting for both homophily and heterophily in graphs. 
(In homophilic graphs vertices of the same class are more likely to be connected, and vertices of different classes tend to be linked in heterophilic graphs.) 
While GNNs have been successfully applied to homophilic graphs, their application to heterophilic graphs remains challenging. 
The best-performing GNNs for heterophilic graphs do not fit the sampling paradigm, suffer high computational costs, and are not inductive.
We employ samplers based on feature-similarity and feature-diversity 
to select subsets of neighbors for a node, and adaptively capture information from homophilic and heterophilic neighborhoods using dual channels.
Currently, \ags{} is the only algorithm that we know of that explicitly controls homophily in the sampled subgraph through similar and diverse neighborhood samples. 
For diverse neighborhood sampling, we employ submodularity, which was not used in this context prior to our work. The sampling distribution is pre-computed and highly parallel, achieving the desired scalability.
Using an extensive dataset consisting of 35 small ($\leq100K$ nodes) and large ($>100K$ nodes) homophilic and heterophilic graphs, we demonstrate the superiority of \ags{}  compare to the current approaches in the literature. \ags{} achieves comparable test accuracy to the best-performing heterophilic GNNs, even outperforming methods using the entire graph for node classification. \ags{} also converges faster compared to methods 
that sample neighborhoods randomly, and can be incorporated into existing GNN models that employ node or graph sampling.
\end{abstract}

\keywords{Graph Neural Networks, Heterophily, Submodular Functions}



\maketitle

\section{Introduction}
\label{sec:introduction}

Traditional Graph Neural Networks (GNNs)  rely on the \emph{homophilic} property of the learning problem, which assumes that a significant portion of a node's neighbors share the class label of the node. However, this assumption has been  challenged in recent years since graphs in several practical applications 
do not satisfy homophily~\citep{zheng2022graph,platonov2022characterizing}. 
Consequently,  GNNs  designed with the assumption of homophily fail to classify  heterophilic graphs accurately due to noisy or improper neighborhood aggregation~\citep{luan2022revisiting}.  
A simple Multi-layer Perceptron (MLP) based model that ignores the graph structure  can outperform existing homophilic GNNs on 
heterophilic graphs~\citep{ye2021sparse, lim2021large, zhu2020beyond,  liu2021non,luan2020complete, chien2020adaptive}. As a result, a number of special-purpose GNNs have been developed to classify heterophilic 
graphs~\citep{zheng2022graph,luan2022revisiting, zhu2020beyond,  liu2021non,luan2020complete, chien2020adaptive, zhu2021graph, yan2022two,li2022finding,xu2023node}.

Although GNNs have been adapted for heterophilic graphs in earlier work,
their applicability is limited since they do not scale to large graphs and are transductive. Unlike  homophilic GNNs~\citep{hamilton2017inductive, zeng2019graphsaint,chen2018stochastic,chen2018fastgcn,huang2018adaptive,chiang2019cluster}, where subgraph sampling strategies have been developed for scaling, currently there are no effective sampling approaches for heterophilic graphs~\citep{lim2021large}. 
Recent authors have  
enabled scaling by first transforming node features and adjacency matrix into lower dimensional representations,  and then applying mini-batching on the combined 
representations~\citep{lim2021large,liao2023ld}. 
Thus inference on heterophilic graphs via graph sampling remains 
challenging. 

The dichotomy of classifying graphs into  heterophilic or homophilic 
graphs, as used in current literature,  is blurred in practice by the presence of locally homophilic and locally heterophilic nodes.  
As shown in Fig.~\ref{fig:local_node_homophily} (\S\ref{sec:prelim}), 
both homophilic and heterophilic graphs could have nodes with high local homophily or  heterophily. However, there is no systematic and scalable approach for neighborhood sampling that distinguishes each node w.r.t. its local homophily property.
We  propose a new sampling strategy that incorporates both the adjacency structure of the graph as well as the feature information of the nodes that is capable of making this distinction. 
For a homophilic node, we build our sampling strategy based on a widely used \emph{smoothing} assumption~\citep{van2020survey}  that labels and features generally correlate positively. Heterophilic nodes of the same labels, however, are expected to have the same dissimilar neighborhood~\cite {luan2022revisiting}, and our sampling strategy exploits this.
Thus, we generate two sets of local neighborhood samples for each node: one based on feature similarity that potentially improves local homophily, while the other encourages diversity and increases heterophily. These two samples are adaptively learned using an MLP to select the appropriate one based on the downstream task. Our attribute-based sampling strategy can be seamlessly plugged into GNNs designed for classifying heterophilic graphs and  performs well in practice. The strength of our approach, however, is that even when paired with  GNNs designed for homophilic graphs, we  obtain better accuracies for heterophilic graphs, thus rendering an overall scalable approach for the latter graphs. The key contributions and findings of this work are summarized as follows:
\begin{enumerate}[wide, labelwidth=!, labelindent=2pt,itemsep=1pt,topsep=1pt] 
    \item We propose a novel scalable and inductive unsupervised and supervised feature-guided sampling framework, \ags{}, to learn node representations for both homophilic and heterophilic graphs.

    \item \ags{} incorporates sampling based on similarity and diversity (modeled by a submodular function). We are not aware of earlier work that uses submodularity in this context. \ags{} employs dual channels with MLPs to learn from both similar and diverse neighbors of a node.
    


    \item We experimented with $35$ benchmark datasets for node classification and compared them against GNNs designed for heterophilic and homophilic graphs. 
    For both types of graphs, \ags{} achieved improved accuracies relative to earlier methods (Fig.~\ref{fig:perfplot}) 
   (\S\ref{subsec:ablation_study}). Further, \ags{} also requires fewer iterations  (up to $50\%$ less) (\S\ref{subsec:runtimeandconvergence}) to 
   converge relative to random sampling.
\end{enumerate}

\section{Preliminaries}
\label{sec:prelim}

Consider a weighted graph $\gG(\gV, \gE)$ with set of vertices $\gV$,  and
set of edges $\gE$.  
We denote the number of vertices and edges by 
 $|\gV| \equiv n$ and $|\gE| \equiv m$. 
The adjacency matrix of the graph will be denoted by $\mA\in \R^{n\times n}$,  
and the $f$-dimensional feature matrix of nodes by 
$\mX\in \R^{n\times f}$. 
We denote $y_u \in \sY = \{1,2,\cdots,c\}$ to be the label of a node $u$ that belongs to one of the $c$ classes, 
and the vector $\vy\in \sY^{n}$ to denote the labels of all nodes. 
Additionally, the graph may have associated edge features of dimension $f_e$. 
The degree of node $u$ is denoted by $d_u$, the average degree by $d$, 
and 
the set $\gN(u)$ denotes the set of neighboring vertices of a node $u$. 
For a GNN, $\ell$ denotes the number of layers,  $H$  the number of neurons in the hidden layer, and $\mW^i$  the learnable weight of the $i$-th layer.

\subsection{Homophily Measures}
\label{subsec:homophiliydefinition}
The \emph{homophily} of a graph characterizes how likely vertices with the same labels are neighbors of each other. Many measures of homophily fit this definition, and Appendix~\ref{subsec:appendixhomophily_measure} discusses  seven of them, with the values computed on our benchmark dataset. However, for conciseness, we will focus here on {\em node homophily} ($\gH_{n}$) (intuitive), and {\em adjusted homophily} ($\gH_{a}$) (handles class imbalance).
The {\em local node homophily} of node $u$ is $\gH_n(u)  = \frac{| \{v\in \gN(u) : y_v = y_u\}|}{|\gN(u)|}$, and its mean value  {\em node homophily}~\citep{pei2020geom}is defined as follows:  
\begin{align}
\gH_{n} & = \frac{1}{|\gV|} \sum_{u\in \gV} \gH_n(u).
\end{align}
The {\em edge homophily}~\citep{zhu2020beyond} of a graph is 
$
    \gH_{e} = \frac{|\{(u,v)\in \gE : y_u = y_v\}|}{|\gE|}. 
$
Let $D_k = \sum_{v:y_v=k}d_v$ denote the sum of degrees of the nodes belonging to class $k$. Then the 
{\em adjusted homophily}~\citep{platonov2022characterizing} is defined as
\begin{equation}
    \gH_{a} = \frac{\gH_{e}-\sum_{k=1}^{c} D_k^2/(2|\gE|^2)}{1-\sum_{k=1}^c D_k^2/2|\gE|^2}. 
\end{equation}



The values of the node homophilies and edge homophily range from $0$ to $1$, and the adjusted homophily  ranges from $-\frac{1}{3}$ to $+1$ (Proposition 1 in~\citep{platonov2022characterizing}). 
In this paper, we will classify graphs with adjusted homophily less than $0.50$ as heterophilic.
Fig.~\ref{fig:local_node_homophily}
shows the distribution of the {\em local node homophily} of a  homophilic and a heterophilic graph. We see that both the graphs have a mix of locally heterophilic and locally homophilic nodes. 

\begin{figure}[!htbp]
\centering
\subfloat[Reddit (homophilic)]{%
\includegraphics[width=0.45\linewidth]{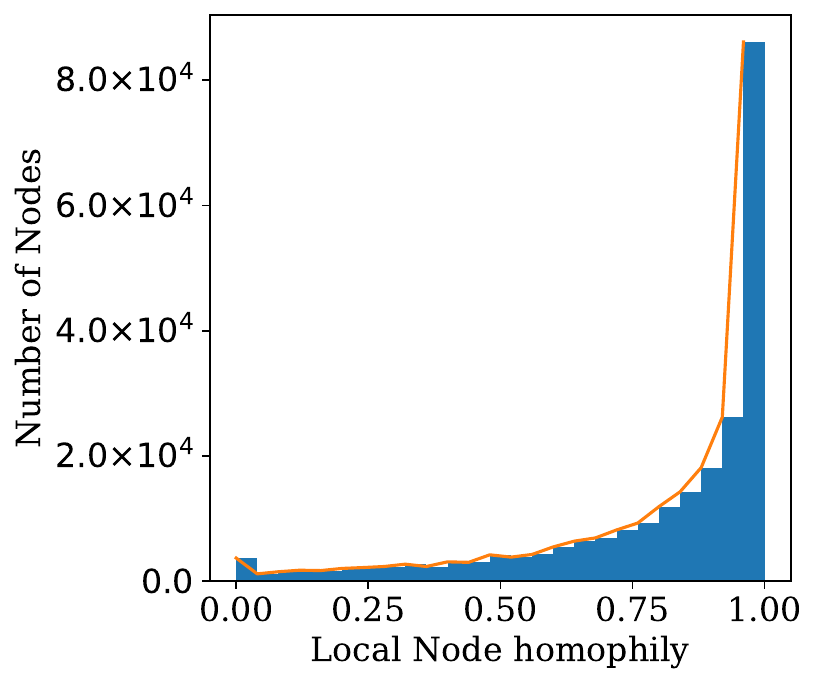}%
}
\hfill
\subfloat[Penn94 (heterophilic)]{%
\includegraphics[width=0.45\linewidth]{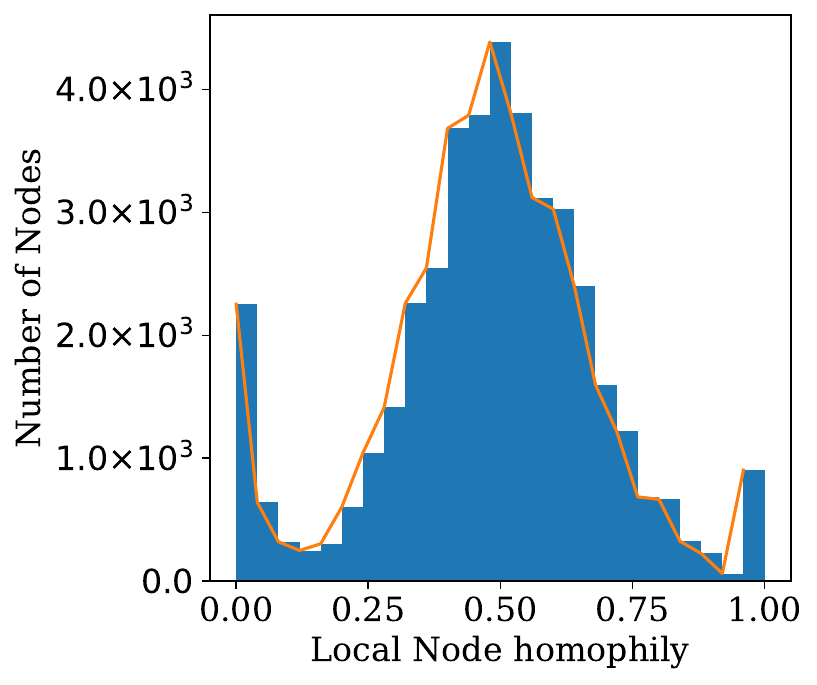}%
}

\caption{The distribution of local node homophily in a homophilic and a heterophilic graph. 
Figs.~\ref{fig:homophily_distributions_1} and \ref{fig:homophily_distributions_2} in Appendix~\ref{subsec:distribution} show this for all datasets.
}

\label{fig:local_node_homophily}
\end{figure}

\subsection{Effect of Homophily on Classification }
\label{subsec:synthetic}
To highlight the effect of homophily, we conduct an experiment on node classification using synthetic graphs with different levels of node homophily. These synthetic graphs are generated from the existing graphs, ignoring the structure information of the graph but retaining the node features and class labels. 
Following~\citep {luan2022revisiting}, to generate an undirected graph with an average degree of $d$ and node homophily $\gH_n$, for each node $u$  we
randomly assign $\gH_n\cdot d/2$ edges from the same class as $u$ and $(1-\gH_n) \cdot d/2$ edges from other classes. 
We left the class distribution unbalanced, as it is in the original graph, making it more challenging for GNNs since the neighborhood of a heterophilic node could potentially have more nodes from the majority class.\looseness=-1 

Fig.~\ref{fig:synthetic_hetrophily} shows the performance of GSAGE~\citep{hamilton2017inductive} (a homophilic GNN) and  ACM-GCN~\citep{luan2022revisiting} (a heterophilic GNN) on the synthetic graphs generated from Squirrel and Chameleon~\citep{rozemberczki2021multi} datasets. We use two versions of ACM-GCN, one with three channels (low-pass, high-pass, and identity) and the other with an additional channel with graph structure information (ACM-GCN-struc).
The original Squirrel and Chameleon datasets are heterophilic, and the ACM-GCN-struc is the best-performing. For synthetic graphs, on both Squirrel and Chameleon, (Fig.~\ref{fig:synthetic_hetrophily}), we see that surprisingly the worst $F_1$ score is not achieved on the graphs whose homophily value is zero but for values near $0.25$.
When the homophily score is high, GNNs perform well since it aggregates relevant information, but as we observe ACM-GCN also does well at a very low homophily.
This is because some locally heterophilic nodes become easier to classify after neighborhood aggregation on features (note that features are not considered in the definition of homophily based solely on labels). 
Another intuitive reason is that when two nodes are adjacent to the same dissimilar (wrt class labels) neighbors, the high pass filters (e.g., graph Laplacian) used in ACM-GCN treat these nodes as similar and can classify them correctly (Appendix~\ref{sec:gnnfilter}). 

\begin{figure}[t]
\centering
\subfloat[Squirrel Synthetic]{%
\includegraphics[width=0.4\linewidth]{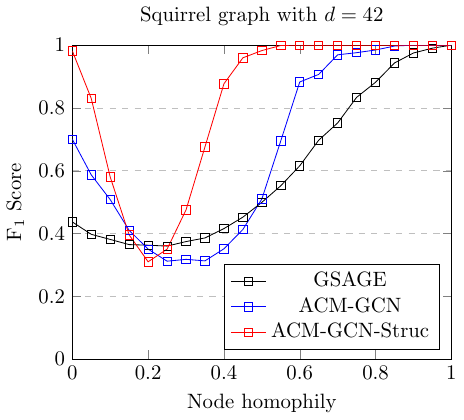}%
}
\hfill
\subfloat[Chameleon Synthetic]{%
\includegraphics[width=0.4\linewidth]{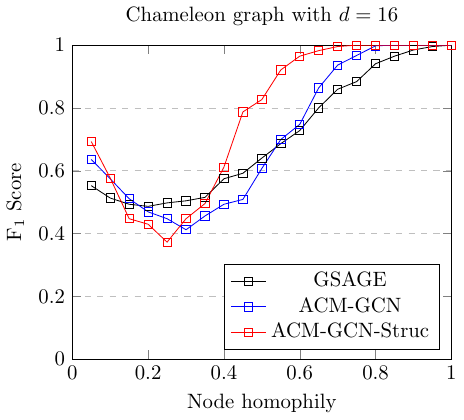}%
}
\caption{$F_1$ Score comparison of GSAGE and ACM-GCN on synthetic graphs generated from Squirrel (a) and Chameleon (b) datasets with varying node homophily. }
\label{fig:synthetic_hetrophily}
\end{figure}



\subsection{Similarity, Diversity, and Homophily}
\label{subsec:prelimhomophily}

Nodes with similar features tend to have similar labels~\citep{van2020survey}. We carried out an experiment to validate this statement and show in Appendix~\ref{subsec:featurevslabel} that the labels of nodes often correlate positively with their features.
Therefore, if instead of sampling the neighbors of a node $u$ uniformly at random, we sample neighbors that are similar to $u$ in feature space,
we are likely to increase the homophily of the sampled subgraph. 

However, this strategy alone is not enough for heterophilic graphs as they include both locally homophilic and heterophilic nodes (Fig.~\ref{fig:local_node_homophily}).
Two heterophilic nodes with the same label are expected to have similar class label distributions in their neighbors. In other words, the diversity in their class labels makes them similar.
It has been shown~\citep{luan2022revisiting,zheng2022graph, xu2023node} that high-pass filters~\citep{ekambaram2014graph} (e.g., variants of graph Laplacians: $\mL=\mD-\mA$, $\mL_{sym}=\mD^{-1/2}\mL\mD^{-1/2}$, or $\mL_{rw}=\mD^{-1}\mL$) capture the difference between nodes,  and low-pass filters (e.g., scaled adjacency matrices, $\mA_{sym}=
\mD^{-1/2}\mA \mD^{-1/2}$, or $\mA_{rw}=\mD^{-1} \mA$) retain the commonality of node features (Appendix~\ref{sec:gnnfilter}). 
Therefore, if we sample diverse neighbors (in feature space) and use a GNN with a high-pass filter, we expect a higher chance of mapping two heterophilic nodes of same class to the same space (after feature transformation) since they will have the same dissimilar neighborhood.

A  mathematical approach to ensure diversity is through 
\emph{submodular function} maximization~\citep{krause2014submodular,schreiber2020apricot}. 
A submodular function is a set function that favors a new node that is most distant in feature space to add to a partially sampled neighborhood. It accomplishes this by 
defining a suitable marginal gain of the new element with respect to the 
current neighborhood, and maximizing this value. 
However,  employing only diverse neighborhoods  can also cause issues since two nodes with different labels may have similar neighborhoods after sampling. In this scenario,  sampling based on similarity is more appropriate.
For the {\em spectral} domain, \ags{} 
considers two channels: one with a sampled subgraph ensuring diversity (used with a high-pass filter) and the other with a subgraph sampled based on similarity (used with a low-pass filter). Similar to ACM-GCN, we can also use an identity channel. 
However, {\em spectral} GNNs are difficult to scale as they often do not support mini-batching, and  
are  transductive. Hence we consider  {\em spatial} GNNs for heterophilic graphs,   
which employ  graph  topology to develop aggregation strategies. However, for heterophilic graphs, both similar and dissimilar neighbors need to be considered, and  in \ags{}, we achieve this through attribute-guided biased sampling of similar and diverse samples. \looseness = -1


\subsubsection{Node Homophily with Similar and Diverse neighborhood:}


Consider an ego node $t=\{\vx_t,y_t\}$ with feature $\vx_t$,  label $y_t$, and local node homophily $\gH_n(t)$. Let the feature and label tuples of the neighbors  of $t$ be 
$\gN(t)=\{(\vx_1,y_1),(\vx_2,y_2),\cdots, (\vx_{d_t},y_{d_t})\}$.
From the definition of homophily, the probability of randomly selecting a neighbor $i\in \gN(t)$ with the same label as the ego node is $P_\gU(y_i=y_t)=\gH_n(t)$. Here, $\gU$ refers to the distribution of selecting a neighbor uniformly at random.
Let $s(\vx_i,\vx_t)$  be a positive similarity score in features between a neighbor, $i$, and the ego node, $t$. \looseness=-1 

\begin{assumption}
    For a node $t$,  the average similarity of neighbors with the same label as $t$  is greater than  or equal to the average similarity of all neighbors. 
\end{assumption}

\begin{lemma}
If the probability of selecting a neighboring node is proportional to its similarity to the ego node $t$, the local node homophily of sampled neighborhood $\gH_n'(t)\ge \gH_n(t)$. If the  sampling probability distribution is $\gS$,  then $P_\gS(y_i=y_t)\ge P_\gU(y_i=y_t)$.
\label{lm:similar}
\end{lemma}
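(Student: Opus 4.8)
The plan is to reduce the statement to a one-line algebraic manipulation of Assumption~1. First I would fix notation: let $d_t = |\gN(t)|$, let $\gN_{=}(t) = \{i \in \gN(t) : y_i = y_t\}$ be the set of same-label neighbors, and let $k = |\gN_{=}(t)|$. By the definition of local node homophily, $\gH_n(t) = k/d_t$, which is exactly the probability $P_\gU(y_i = y_t) = k/d_t$ of drawing a same-label neighbor uniformly at random. Writing $s_i = s(\vx_i,\vx_t)$ for the positive similarity scores, the similarity-proportional distribution $\gS$ assigns $P_\gS(i) = s_i / \sum_{j \in \gN(t)} s_j$, so I would record that
\begin{equation}
P_\gS(y_i = y_t) = \sum_{i \in \gN_{=}(t)} P_\gS(i) = \frac{\sum_{i \in \gN_{=}(t)} s_i}{\sum_{j \in \gN(t)} s_j}.
\end{equation}

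Next I would rewrite Assumption~1 in these terms. It states that the average similarity over same-label neighbors is at least the average similarity over all neighbors, i.e.
\begin{equation}
\frac{1}{k} \sum_{i \in \gN_{=}(t)} s_i \;\ge\; \frac{1}{d_t} \sum_{j \in \gN(t)} s_j.
\end{equation}
Because every $s_i$ is positive, the total $\sum_{j \in \gN(t)} s_j$ is strictly positive, so multiplying both sides by the positive factor $k/\sum_{j} s_j$ would rearrange this into
\begin{equation}
\frac{\sum_{i \in \gN_{=}(t)} s_i}{\sum_{j \in \gN(t)} s_j} \;\ge\; \frac{k}{d_t}.
\end{equation}
The left-hand side is precisely $P_\gS(y_i = y_t)$ and the right-hand side is $\gH_n(t) = P_\gU(y_i = y_t)$. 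Identifying the sampled homophily $\gH_n'(t)$ with the expected same-label fraction $P_\gS(y_i = y_t)$ would then deliver both claimed inequalities at once.

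The routine content is thus a single rearrangement; the real work is conceptual rather than computational, so the main obstacle is not a hard estimate but getting the setup right. Specifically, I would be careful to state explicitly that $\gH_n'(t)$ is \emph{defined} as the expected fraction of same-label neighbors under $\gS$ (equivalently, $P_\gS(y_i=y_t)$), since the lemma implicitly equates the ``homophily of the sampled neighborhood'' with this expectation; once that identification is made, the two sentences of the conclusion are the same inequality written twice. I would also dispose of the degenerate case $k=0$, where $\gH_n(t)=0$ and the bound holds trivially, and invoke $s_i>0$ to guarantee the normalizing constant is nonzero so that $\gS$ is well defined. The one honest caveat to flag is that the result is exactly as strong as Assumption~1 and no stronger: the lemma is essentially a restatement of the hypothesis in the language of sampling probabilities.
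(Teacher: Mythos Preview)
Your proposal is correct and follows essentially the same argument as the paper: both set up $P_\gS(y_i=y_t)=\bigl(\sum_{i:y_i=y_t} s_i\bigr)\big/\bigl(\sum_j s_j\bigr)$, restate Assumption~1 as $\frac{1}{k}\sum_{i:y_i=y_t} s_i \ge \frac{1}{d_t}\sum_j s_j$, and rearrange by the positive factor $k/\sum_j s_j$ to obtain $P_\gS(y_i=y_t)\ge k/d_t = P_\gU(y_i=y_t)=\gH_n(t)$. Your treatment is slightly more careful than the paper's in flagging the degenerate case $k=0$ and making explicit that $\gH_n'(t)$ is identified with $P_\gS(y_i=y_t)$.
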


To retrieve a diverse set of neighbors we can employ a submodular function. 
An example  could be the {\em facility location} function based on maximizing pairwise similarities between the points in the data set and their nearest chosen point, 
$
    f(S, A) = \sum_{y\in A} \max_{x\in S} \phi(x, y),   
$
where $A$ is the ground set, $S \subseteq A$ is a subset, and $\phi$ is the similarity measure.
In our context $S$ is the current set of selected nodes initialized with ego node $t$, and ground set $A = \gN(t)\cup \{t\}$. 
The \emph{marginal gain} is $f_i = f(S\cup \{i\},A)-f(S,A)$ for each neighbor $i\in \gN(t)\setminus S$.  Successively,  neighbors are added to the sampled neighborhood by choosing them to have maximum marginal gain with respect to the current sample of the neighborhood.

\begin{assumption}
    The average marginal gain of the neighbors  of a node $t$ with the same label as $t$  is less than  or equal to the average marginal gain of all neighbors.
\end{assumption}

\begin{lemma}
If the probability of selecting a neighboring node is proportional to its marginal gain wrt the ego node $t$, then the  local node homophily of sampled neighborhood $\gH_n'(t)\le \gH_n(t)$. If the sampling probability distribution is $\gD$,  then $P_\gD(y_i=y_t)\le P_\gU(y_i=y_t)$.
\label{lm:diverse}
\end{lemma}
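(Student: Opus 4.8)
The plan is to prove Lemma~\ref{lm:diverse} as the exact dual of Lemma~\ref{lm:similar}, with the marginal gain $f_i$ playing the role that the similarity score played there, and the direction of the inequality reversed because the governing assumption (that same-label neighbors have \emph{smaller} average marginal gain) points the opposite way. First I would partition the neighborhood $\gN(t)$ into the same-label set $\gN_+(t)=\{i\in\gN(t):y_i=y_t\}$ and its complement $\gN_-(t)=\gN(t)\setminus\gN_+(t)$, set $n_+=|\gN_+(t)|$, and record that $P_\gU(y_i=y_t)=\gH_n(t)=n_+/d_t$.

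Since under $\gD$ the probability of drawing neighbor $i$ is proportional to its marginal gain $f_i$, I would write $p_i=f_i/Z$ with $Z=\sum_{j\in\gN(t)}f_j$, so that
\begin{equation}
P_\gD(y_i=y_t)=\sum_{i\in\gN_+(t)}p_i=\frac{\sum_{i\in\gN_+(t)}f_i}{\sum_{j\in\gN(t)}f_j}.
\end{equation}
Writing $\bar{f}_+=\frac{1}{n_+}\sum_{i\in\gN_+(t)}f_i$ for the average marginal gain over same-label neighbors and $\bar{f}=\frac{1}{d_t}\sum_{j\in\gN(t)}f_j$ for the average over all neighbors, this rearranges to
\begin{equation}
P_\gD(y_i=y_t)=\frac{n_+\,\bar{f}_+}{d_t\,\bar{f}}=\gH_n(t)\cdot\frac{\bar{f}_+}{\bar{f}}.
\end{equation}
The stated assumption is exactly $\bar{f}_+\le\bar{f}$, hence $\bar{f}_+/\bar{f}\le 1$ and $P_\gD(y_i=y_t)\le\gH_n(t)=P_\gU(y_i=y_t)$. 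Reading $\gH_n'(t)$ as the expected fraction of same-label neighbors in a sample drawn under $\gD$, linearity of expectation identifies it with the single-draw probability $P_\gD(y_i=y_t)$, so the two inequalities in the statement are the same fact and both claims follow.

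The algebra above is a two-line rearrangement, so the genuine work lies elsewhere. First I must check that $\gD$ is well-defined: for the monotone facility-location function every marginal gain satisfies $f_i\ge 0$, and $Z>0$ whenever at least one neighbor strictly increases the coverage, so $\{p_i\}$ is a bona fide distribution. Second, the whole inequality is driven by the stated assumption, which absorbs the substantive modeling claim; I would therefore flag that the lemma is essentially a reformulation of that hypothesis, and that the nontrivial question---under what feature geometry the facility-location marginal gains really are smaller on average for same-label neighbors---is the point deferred to the assumption rather than established here.
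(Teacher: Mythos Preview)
Your proof is correct and follows essentially the same approach as the paper: write $P_\gD(y_i=y_t)$ as the ratio of the same-label marginal-gain sum to the total, invoke the assumption that the same-label average is at most the overall average, and rearrange to obtain the inequality. Your multiplicative presentation $P_\gD(y_i=y_t)=\gH_n(t)\cdot(\bar f_+/\bar f)$ and your explicit remarks on well-definedness of $\gD$ and on the assumption carrying all the substantive content are slightly cleaner than the paper's version, but the argument is the same.
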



Proofs of Lemmas~\ref{lm:similar} and~\ref{lm:diverse} are included in the Appendix~\ref{subsec:prob_similar} and Appendix~\ref{subsec:prob_diverse} respectively. We also report experimental verification of these properties in Appendix~\ref{subsec:emphlemma}. 
From the Lemma~\ref{lm:similar}, sampling neighbors based on feature-similarity improves homophily, which potentially helps to map homophilic nodes of the same labels into the same space.
We assume features and labels to be positively correlated; thus, if we ensure feature diversity among neighbors, we can also expect label diversity in the samples, reducing local homophily 
(as shown in Lemma~\ref{lm:diverse}) and increasing the chances of mapping two heterophilic nodes into the same space.
We devise feature-similarity and feature-diversity-based sampling based on these results in the next section.

\section{Proposed Method: \ags{}}
\label{sec:method}



Fig.~\ref{fig:agssampling_paradigm} shows an overview of the \ags{} framework. 
\ags{} has a \emph{pre-computation} step that ranks the neighbors of nodes and computes edge weights to form a distribution to sample from in the \emph{sampling} phases during training. 

       

\begin{figure*}[!htbp]
 \centering

 \subfloat[Rank of neighbors for similar and diverse sampling.]{%
\includegraphics[width=0.5\linewidth]{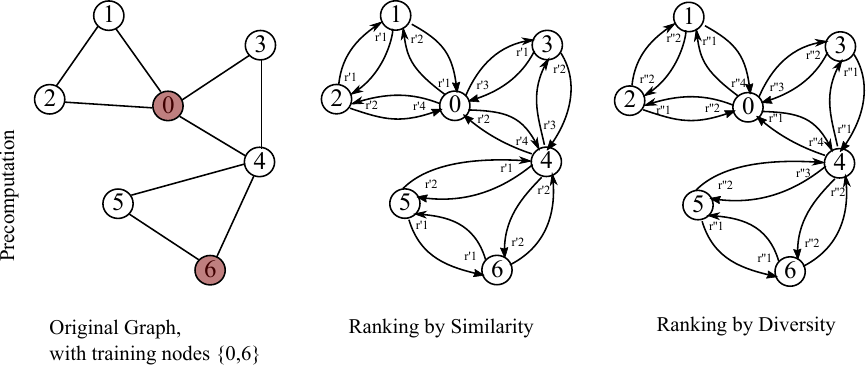}%
}
\hfill
\subfloat[Node Sampling from training vertices with hop size $2$]{%
\includegraphics[width=0.45\linewidth]{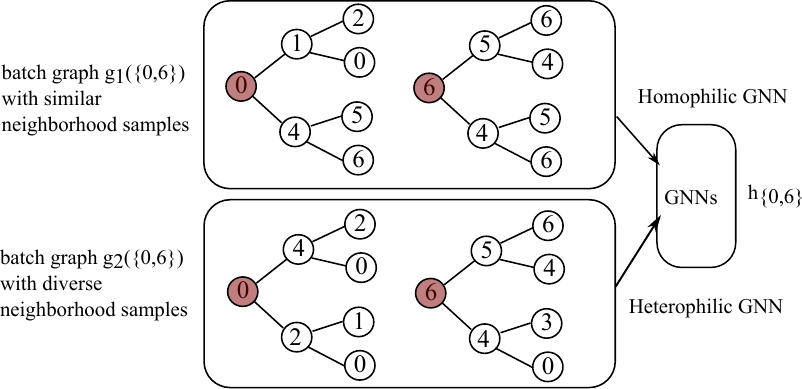}%
}

 \caption{\ags{} framework with Node Sampling. a) Pre-computation step to rank the neighbor of vertices. b) Demonstrates how weighted node sampling is performed based on the selection probabilities of ranked neighbors.}
 \label{fig:agssampling_paradigm}
\end{figure*}
\subsection{Pre-computing Probability Distribution}
\label{subsec:precomputation}
An ideal graph sampling process should have the following key requirements: 
\begin{enumerate*}[label=(\roman*)]
\item Nodes with strong mutual influences (in terms of structure and attributes) should be sampled together in a subgraph~\citep{zeng2019graphsaint}.
\item Should be able to distinguish between similar and dissimilar neighbors (especially for heterophilic graphs)~\citep{zheng2022graph}.
\item Every edge should have a non-zero probability of being sampled 
in order to generalize and explore the full feature and label space.  
\end{enumerate*}

In this section, we will devise sampling strategies satisfying these requirements. We assume that we have access to a similarity measure between any two nodes in the graph. This similarity function typically depends on the problem and the dataset. For an example, in text based datasets, we may use cosine similarity of the feature vectors (e.g., \texttt{TF-IDF}) 
generated from the texts of the corresponding items. We may also learn the similarities when an appropriate similarity measure is not apparent. Once we have the similarity function, for each vertex $u$, we construct a probability distribution over its neighbors as follows: 
\begin{enumerate*}[label=(\roman*)] 
\item \emph{rank} the neighboring nodes ($\gN(u)$) using the similarity scores to $u$.
\item assign weights to the adjacent edges of $u$ based on this ranking. A few choices are 
shown in Fig.~\ref{fig:densityfunction} in Appendix.
\item normalize the weights to construct the probability mass function (PMF) $\gP(u)$ of the distribution over $\gN(u)$.
\end{enumerate*}

We construct the probability distribution  using the rank rather than the actual similarity values, since the distribution using the similarity values could be skewed and extremely biased towards a few top items.
Here we consider two choices of rankings of neighborhoods that are suited for homophilic and heterophilic graphs.

\subsubsection{Ranking based on similarity:}
\label{subsubsec:precompnodesample}
For this case, our goal is to sample subgraphs favoring similar edges to be present more frequently in the subgraph. To achieve that we propose to construct a probability distribution over the edges based on the similarity scores. 
We sort the similarity values of all the neighboring vertices of a vertex from high to low and assign ranking based on the order to the adjacent edges. Thus, although the similarity function is symmetric, we may get two different ranks for each edge. 
Note that the computation required to generate the rankings are local to each vertex and thus highly parallel. For cosine similarity the time complexity to compute the ranking of the adjacent edges of a vertex $u$ is $\gO(fd_u \log{d_u})$.

Once we have the rankings, we can use these to construct different probability distributions. Some choices of Probability Mass Functions (PMF)  can be {\em linear} or {\em exponential} decay with non-zero selection probability to the later elements in the ranking order. Another options is the {\em step} function where the top $k_1\%$  neighbors of a vertex are given a uniform weight ($\lambda_1$), the next $k_2\%$ as $\lambda_2$, and the rest $\lambda_3$, where $\lambda_1 > \lambda_2 > \lambda_3$. The benefit of using such function is that we can partially sort the top $(k_1+k_2)\%$ of neighbors avoiding the full ordering. Fig.~\ref{fig:densityfunction} in Appendix~\ref{sec:apadditonaldetails} shows pictorial representations of some of the PMFs mentioned.

\subsubsection{Ranking based on diversity:}
\label{subsec:rank-div}
As discussed in \cref{subsec:prelimhomophily}, for a heterophilic graph, in general it is desirable to construct subgraphs based on diversity in the class labels. To accomplish that, we propose a sampling strategy based on optimizing (a nonlinear) submodular function. A submodular function is a set function that has the diminishing returns property. Formally, given a ground set $V$ and  two subsets $A$ and $B$ with $ A \subseteq B \subseteq V$, a function $f$ is submodular if and only if, for every $e \in V \setminus B$, $f(A \cup e) - f(A) \geq f(B \cup e) - f(B)$. 
The quantity  on either side of the inequality is called the \emph{marginal gain} of an element $e$ with respect to the two sets $A$ or $B$. 
For maximizing  cardinality constrained submodular functions, where the solution set ($S$) is required to be at most $k$ in size, a natural \emph{Greedy} algorithm that starts with empty solution and augments it with the element with highest marginal gain is  $(1-1/e)$-approximate~\cite{nemhauser1978analysis}. 

To see how a submodular function may behave differently than the (linear) similarity based function (\cref{subsubsec:precompnodesample}), consider a paper citation graph~\citep{sen2008collective} where the nodes are the scientific documents and the feature on each node is the  binary count vector of the associate text. 
For a vertex $u$, our goal is to find a set of $k$ neighboring vertices of $u$, ($S \subseteq \gN(u)$ where $|S| = k$), such that given the initial set $\{u\}$, we maximize the number of \emph{unique} word counts. This objective can be modeled as a submodular function known as maximum $k$-coverage problem, and the Greedy approach would select successive nodes with maximum marginal gains wrt to $S$.
Intuitively, the Greedy algorithm prioritizes neighbors that have more distinct words than what has been covered through the selected nodes. Therefore, if different word sets correspond to different class labels, the final set $S$ will likely represent a diverse set. This contrasts sharply with the ranking based on similarity, where we would encourage neighbors with similar words. 
{\em Facility Location, Feature-based functions},  and {\em Maximum Coverage} are some submodular functions applicable in the graph context. \looseness=-1

Given a submodular function, for a vertex $u$, we execute the Greedy algorithm on the neighbors of $u$ to compute their marginal gains, assuming $u$ is in the initial solution. We use these marginal gains to rank neighbors of $u$ and then use the ranks to construct a probability distribution as described in \cref{subsubsec:precompnodesample}. 
%
To compute the solution faster, we employed a variant of the Greedy algorithm which is called Lazy Greedy~\cite{minoux1978accelerated} that can reduce the number of marginal gain computations. Algorithm~\ref{alg:subweights}  and \ref{alg:lazygreddy} in Appendix~\ref{subsec:submodular_funcs} shows the pseudocode of the Lazy Greedy version of the ranking procedure using the {\em Facility Location} function. Since we need to compute pairwise similarity for this function,  the complexity of computing the ranks of the neighbors of a vertex $u$  is $\gO(fd_u^2)$. 

\subsubsection{Learnable Similarity Function:}
\label{subsubsec:learnable}
When we lack domain knowledge to compute an appropriate similarity metric, we can use the training subgraph to train a regression model to learn the edge weights of the graphs~\citep{chien2020adaptive}. We form training batches using equal numbers of  edges and randomly chosen  non-edges of the graph. We set the target label as $1$ for nodes with the same labels and $0$ otherwise, and  then train a model using the batches to get an approximation of edge weights. 
We use these weights as similarity values to compute ranks based on similarity or diversity (as detailed in \cref{subsubsec:precompnodesample} and \cref{subsec:rank-div}),  and to compute a probability distribution over the directed edges. 
Given the number of training edges and an equal number of non-edges, we can expect the computation complexity to be $\mathcal{O}(fm)$. The neural network model is shallow, learnable parameters are relatively small, and computation is performed in batches, making it very fast.
Appendix~\ref{subsec:edge_weight} shows the architecture of a Siamese~\citep{chicco2021siamese} edge-weight computation model and the training and inference process (Algorithm~\ref{alg:learnweights}) in detail.

\subsection{Sampling}
\label{subsec:node_sampling}


Algorithm~\ref{alg:agsnodesampling} describes  \ags{}, using the node sampling process for training denoted as \agsns{}. The method {\em NodeSample} is employed for sampling an $\ell$-hop neighborhood based on the probability distribution derived from learned or computed weights. 




\begin{algorithm}[htbp]
\small
\caption{\agsns{} ($\gG, \mX, \vy$) [Node Sampling]} 
\SetKwInOut{Input}{Input}
\Input{Graph $\gG(\gV, \gE)$, Feature matrix $\mX \in \R^{n\times f}$, Label, $\vy\in \sY^n$}

\begin{algorithmic}[1]
    \STATE  $\gS =  \mathrm{LearnSimilarity (\gG,\mX,\vy)}$
    \STATE  $\gP =  \mathrm {'step'}$ \tcc{probability mass function}
    
    \STATE $\mathrm{R_1 = RankBySimilarity(\gG, \mX, \gS, \gP)}$\tcc{Alg.~\ref{alg:nnweights}}

    \STATE $\mathrm{R_2 = RankByDiversity(\gG, \mX, \gS, \gP)}$\tcc{Alg.~\ref{alg:subweights}}
    
    \FOR{$\mathrm {epoch}$ \textbf{in} $\mathrm{num\_epochs}$}
        \STATE $\mathrm{nodes, y_{nodes} = BatchOfTrainingNodes (\gG,\vy)}$ \tcc{random nodes from training vertices}
        \STATE $\mathrm{g_1 = NodeSample(R_1, nodes)}$ \tcc{similar samples}
        \STATE $\mathrm{g_2 = NodeSample(R_2, nodes)}$ \tcc{diverse samples}        
        \STATE $\mathrm{output = GNN_\mW(g_1, g_2)}$
        \STATE  $l = \mathrm{loss (output, y_{nodes}}$)
        \STATE Backward propagation from $l$ to update $\mW$ 
    \ENDFOR
\end{algorithmic}
\label{alg:agsnodesampling}
\end{algorithm}

AGS can be plugged in with other GNNs that use sampling strategy. Here, we demonstrate this with GSAGE and call it AGS-GSAGE.
The original formulation of GSAGE for $i$-th layer is, 
\begin{equation}
    \vh_{v}^{(i)} = \sigma(\mW^{(i)}, \vh_v^{(i-1)}|\mathrm{AGG}(\{\vh_{u}^{(i-1)}:u\in \gN_{rn}(v)\}).
\end{equation}
Here $\vh$ denotes feature vectors, 
$\gN_{rn}(v)$ ($\subseteq \gN(v)$) is the subset of neighbors of $v$ (of size $k$) sampled uniformly at random. AGG is any permutation invariant function (\texttt{mean, sum, max, LSTM}, etc.). 

In AGS-GSAGE, we use the probability distribution derived in \cref{subsec:precomputation} to sample with or without replacement from the neighborhood of a node. 
Depending on the nature of the graph, single or dual-channel GNNs might be necessary. 
Some homophilic graphs may have only a few locally heterophilic nodes, where samples from a distribution generated by similarity only may be sufficient. However, we expect the dual-channel AGS to perform better for heterophilic graphs since they typically have both locally homophilic and heterophilic nodes (see Fig.~\ref{fig:local_node_homophily}, and Fig.~\ref{fig:homophily_distributions_1}, 
Fig.~\ref{fig:homophily_distributions_2} from Appendix~\ref{subsec:distribution}). 
Fig.~\ref{fig:nodesampling_computation_graph} shows some possible computation graphs.
The dual-channel AGS mechanism can be incorporated easily into GSAGE. We generate two similar and diverse neighborhood samples for the target node, compute the transformed feature representations using both  samples, and use MLPs to combine these representations. 

\begin{align}
    \vh_{v_{sim}}^{(i)} = \sigma(\mW^{(i)}_{sim}, \vh_v^{(i-1)}|\mathrm{AGG}(\{\vh_{u}^{(i-1)}:u\in \gN_{sim}(v)\})\\
    \vh_{v_{div}}^{(i)} = \sigma(\mW^{(i)}_{div}, \vh_v^{(i-1)}|\mathrm{AGG}(\{\vh_{u}^{(i-1)}:u\in \gN_{div}(v)\}).
\end{align}

Here $\gN_{sim}(v)$ and $\gN_{div}(v)$ are the subset of neighbors of $v$ sampled from the distribution based on similarity and diversity, respectively. We can combine these representations by concatenating, $\vh_{v}^{(i)} = \mathrm{MLP}(\vh_{v_{sim}}^{(i)}|\vh_{v_{div}}^{(i)})$
or using skip connections,
\begin{align}
\vh_{v}^{(i)} = \mathrm{MLP}(\sigma(\mW(\vh_{v_{sim}}^{(i)}|\vh_{v_{div}}^{(i)})+\vh_{v_{sim}}^{(i)}+\vh_{v_{div}}^{(i)})).    
\end{align}




\begin{figure}[t]
 \centering
 \includegraphics[width=1.0\linewidth, angle = 0 ]{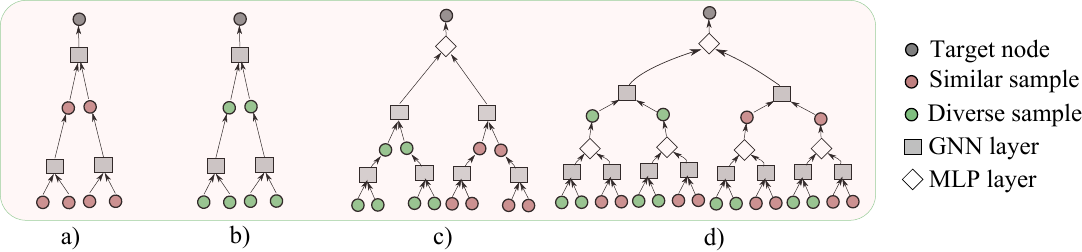}
 \caption{Computation graph with sample size $k=2$ and hop-size $2$. a), b) samples from similarity and diversity ranking for a single channel, c) dual channel with combined representation at the target node, and d) similar and diverse weighted samples at each sampled node.}
 \label{fig:nodesampling_computation_graph}
\end{figure}

Combining representations at the root of the computation graph as shown in Fig.~\ref{fig:nodesampling_computation_graph}c can be better than combining two different samples at each node of the tree, as in Fig.~\ref{fig:nodesampling_computation_graph}d. This is to avoid overfitting and make it computationally efficient. 
For transductive learning, as we have already precomputed the probability distributions, the inference process works similarly  to the training phase. 
In inductive setting, 
we have to compute the probability distribution over the neighbors of a node as described in \cref{subsec:precomputation}. 

\subsubsection{Other sampling strategies and models:}
AGS can also be integrated into ACM-GCN~\citep{luan2022revisiting} and other filter-based spectral GNNs. At the start of the epoch, we sample $k$ neighbors of each node from the similarity and diversity based distributions and construct two sparse subgraphs. 
We can then use the subgraph based on similarity with a low-pass filter and the subgraph based on diversity with a high-pass filter.
We can employ  \emph{graph sampling} strategies  (instead of node sampling), 
such as weighted random walks, and use them with existing GNNs.
We can also incorporate heuristics by first computing edge-disjoint subgraphs and then sampling a sparse subgraph. 
We call our graph sampling GNN  \agsgs{}, and its details are provided in Appendix~\ref{sec:graphsampling} for conciseness. We call \agsgsrw{} (Appendix~\ref{subsec:agsgsrw}) the  weighted random walk version,  and call \agsgsdis{} (Appendix~\ref{subsec:agsgsdisjoint}) the  edge-disjoint subgraph-based sampling version).
For the downstream model, there is flexibility to adapt existing models like ChebNet~\citep{he2022convolutional},
GSAINT~\citep{zeng2019graphsaint}, GIN~\citep{xu2018powerful}, GCN~\citep{kipf2016semi}, and GAT~\citep{velivckovic2017graph}. We can also use two separate GNNs in two separate channels. 



\subsection{Computation Complexity}
\label{subsec:method_complexity}

The pre-computation of the probability distribution requires $\mathcal{O}(fn\cdot d\log d)$ and $\mathcal{O}(fnd^2)$ operations for similarity and facility location based ranking, respectively. If the similarity metric is required to be learned the added time complexity is $\mathcal{O}(fm)$. The training and memory complexity depend on the usage of underlying GNNs. Let the number of hidden dimensions ($H$) be fixed and equal to  $H=f$ for all $\ell$ layers. For Stochastic Gradient Descent (SGD)-based approaches, let $b$ be the batch size, and $k$ be the number of sampled neighbors per node. When a single channel is used, each node expands to $k^\ell$ nodes in its computation graph, and requires $f^2$ operations to compute the embedding 
(a matrix-vector multiplication) in every epoch. Therefore, for $n$ nodes, the per epoch training time complexity is $\mathcal{O}(nk^\ell f^2)$.
The memory complexity is $\mathcal{O}(bk^\ell f+\ell f^2)$, where
The first term corresponds to the space for storing the embedding, and 
the second term corresponds to 
storage for  all weights of neurons of size, $\mW \in \R^{f\times f}$.

For single channel node sampling (Fig.~\ref{fig:nodesampling_computation_graph}a,b), the training and memory complexity of \ags{} is similar to GSAGE. The training and memory 
requirements are twice as much for dual channels using the same sampling size, leaving the asymptotic bounds unchanged. However, for the dual channel with computation graph scenario shown in Fig.~\ref{fig:nodesampling_computation_graph}d, where each node generates two types of samples of size $k$, the per epoch computation complexity becomes $\mathcal{O}(n 2^\ell k^\ell f^2)$.
One way to ameliorate  this cost is to reduce the sample neighborhood size by half.

\section{Related Work}
\label{subsec:related_works}

While {\em Spatial} GNNs focus on graph structure (topology) to develop aggregation strategies, {\em spectral} GNNs leverage graph signal processing to design graph filters.
Spectral GNNs use low-pass and high-pass filters to extract low-frequency and high-frequency signals adaptively for heterophilic graphs. 
ACM-GCN~\citep{luan2022revisiting} is one of the best-performing heterophilic GNNs that uses adaptive channels with low-pass and high-pass filters.
Recently, the authors of ~\citep{xu2023node} proposed an adaptive filter-based GNN, ALT, that combines signals from two filters with complementary filter characteristics to classify homophilic and heterophilic graphs. 
These methods perform well for small heterophilic graphs but do not scale to large graphs.
In the \emph{spectral} domain, \ags{} can be used in conjunction with these approaches by computing feature-similarity and feature-diversity-based sparse graphs at first before applying filters for large graphs. \looseness=-1

For applying {\em spatial} GNNs  to heterophilic graphs, rather than using average aggregation (as in homophilic GNNs), edge-aware weights of neighbors can be assigned according to the spatial graph topology and node labels.  DMP~\citep{yang2021diverse} considers node attributes as 
weak labels and aggregates element-wise weights from neighbors.
GGCN~\citep{yan2022two} uses cosine similarity to create signed neighbor features; the intuition is to send positive attributes to neighbors in the same class and negative ones to neighbors in different classes. GPNN~\citep{yang2022graph} considers ranking the neighbors based on similarity and uses a 1D convolution operator on the sequential nodes.
Another related work is SimP-GCN~\citep{jin2021node}, which computes the node similarity and then chooses the top $k$ similar node pairs in terms of feature-level cosine similarity for each ego node, and then  
constructs the neighbor set using the  $k$-NN algorithm. \ags{}, in contrast, uses submodularity, node-similarity, reweighting, and sampling of the subgraph instead of reconstructing neighbor sets.


When learned weight functions are considered, \ags{} can be placed into the category of supervised sampling. 
LAGCN~\citep{chen2020label} trains an edge classifier from the existing graph topology, similar to our regression task of weight approximation.
NeuralSparse~\citep{zheng2020robust} learns a sparsification network to sample a $k$-neighbor subgraph (with a predefined $k$), which is fed to GCN~\citep{kipf2016semi}, GraphSAGE~\citep{hamilton2017inductive} or GAT~\citep{velivckovic2017graph}. Unlike our heuristic-based sampler, NeuralSparse has end-to-end training of the sampler network and GNN,  and may require more iterations to find appropriate sampling probabilities.


There are only a few scalable GNNs for heterophilic graphs. The most notable one is LINKX~\citep{lim2021large}, which is transductive as the model architecture depends on node size.
A recent scalable GNN, LD2~\citep{liao2023ld}, attempts to remedy this by transforming the adjacency and the feature matrix into embeddings as a pre-computation and then applying feature transformation in a mini-batch fashion. 

\section{Experiments}
\label{sec:experiments}

\subsection{Dataset, Setup, and Methods}

\subsubsection{Dataset:} We experimented with $35$  graphs of different sizes and varying homophily. We also generated synthetic graphs of different homophily and degrees while retaining the node features for ablation studies and scaling experiments. 
We considered the node classification task in our experiments.
For heterophily studies, we used:
\texttt{Cornell, Texas}, \texttt{Wisconsin} 
~\citep{pei2020geom}; \texttt{Chameleon}, \texttt{Squirrel} ~\citep{rozemberczki2021multi}; \texttt{Actor} ~\citep{pei2020geom}; \texttt{\justify Wiki, ArXiv-year, Snap-Patents, Penn94, Pokec, Genius, Twitch-Gamers, reed98, amherst41, cornell5}, and \texttt{Yelp}~\citep{lim2021large}. 
We also experiment on some recent benchmark datasets, \texttt{\justify Roman-empire, Amazon-ratings, Mineswe-eper, Tolokers}, and \texttt{Questions} from~\citep{platonov2023critical}.
We converted a few multi-label multiclass classification problems (\texttt{\justify  Flickr, AmazonProducts}) to single-label multiclass node classification, and their homophily values become relatively small, making them heterophilic.
For homophily studies we used
\texttt{\justify Cora}~\citep{sen2008collective}; \texttt{\justify Citeseer}~\citep{giles1998citeseer}; \texttt{\justify pubmed} \citep{namata2012query}; \texttt{Coauthor-cs}, \texttt{Coauthor-physics}~\citep{shchur2018pitfalls}; \texttt{Amazon-computers},  \texttt{Ama-zon-photo} ~\citep{shchur2018pitfalls}; \texttt{Reddit}~\citep{hamilton2017inductive}; \texttt{Reddit2}~\citep{zeng2019graphsaint}; and, \texttt{dblp}~\citep{fu2020magnn}. 
Details of datasets, homophily measures (Table~\ref{tab:dataset_description}), homophily distributions (Fig.~\ref{fig:homophily_distributions_1},~\ref{fig:homophily_distributions_2}), origins, splits, etc.  
(Table~\ref{tab:additonaldataset}) are in Appendix~\ref{sec:appendixdataset}.

\subsubsection{Experimental Setup:} All evaluations are repeated $10$ times 
using a 
split of $60\%/20\%/20\%$ (train/validation/test), unless a specific split is specified. 
All experiments are executed on 24GB NVIDIA A10 Tensor Core GPU. For all benchmark models, we use the settings specified in their corresponding papers, and for \ags{}, we use two message-passing layers ($\ell=2$) and a hidden layer with dimension $H=256$. We use the Adam~\citep{kingma2014adam} optimizer with a learning rate of $10^{-3}$ and train for $250$ epochs or until convergence. A model converges if the standard deviation of the training loss in the most recent $5$ epochs (after at least $5$ epochs) is below $10^{-4}$. 
For node sampling, we use a batch size of $512$ to $1024$ with $k=[25, 10]$ or $[8,4]$ in the two layers unless otherwise specified. For graph sampling, we use a batch size of $6000$ and a random walk step size of $2$.  For reporting accuracy, we select the model that gives the best validation performance. Depending on the models, we use a dropout probability of $0.2$ or $0.5$ during training. All of the implementations are in PyTorch~\citep{paszke2019pytorch}, PyTorch Geometric~\citep{fey2019fast}, and Deep Graph Library (DGL)~\citep{wang2019deep}. 
For computing distribution based on submodular ranking, we use Apricot library~\citep{schreiber2020apricot}. We modified the Apricot code to make the implementation more efficient.
Source codes for all our implementations are provided anonymously on GitHub\footnote{\href{https://github.com/anonymousauthors001/AGS-GNN}{GitHub link for \ags{} and all other methods.}}. \looseness=-1

\subsubsection{Implemented Methods:}
We consider graphs with $\gH_a \le 0.5$ to be heterophilic. The small instances contain graphs with less than $100K$ nodes, and they fit in the GPU memory. The larger instances are compared against only scalable homophilic and heterophilic GNNs. We compare \ags{} (the Node Sampling \agsns{} variant) with other Node Sampling methods (GSAGE~\citep{hamilton2017inductive}), GraphSampling (GSAINT~\citep{zeng2019graphsaint}), and Heterophilic GNNs (LINKX~\citep{lim2021large}, ACM-GCN~\citep{luan2022revisiting}).
LINKX is used for only the small graphs, 
where the entire graph fits into GPU memory. For large graphs, we used the row-wise minibatching of the adjacency matrix (\emph{AdjRowLoader}) for LINKX, which is denoted by LINKX$+$. 
Since, most of the heterophilic GNNs require entire graph and do not support mini-batching, we compare \ags{} with $18$ standard heterophilic and homophilic GNNs on small heterophilic graphs only. Appendix~\ref{subsec:othermethods} provides details on these method and results.

\subsection{Key Results}
\label{sec:keyresults}

\begin{figure*}[!htbp]
\centering
\subfloat[Small Heterophilic Graphs]{
\includegraphics[width=0.24\linewidth]{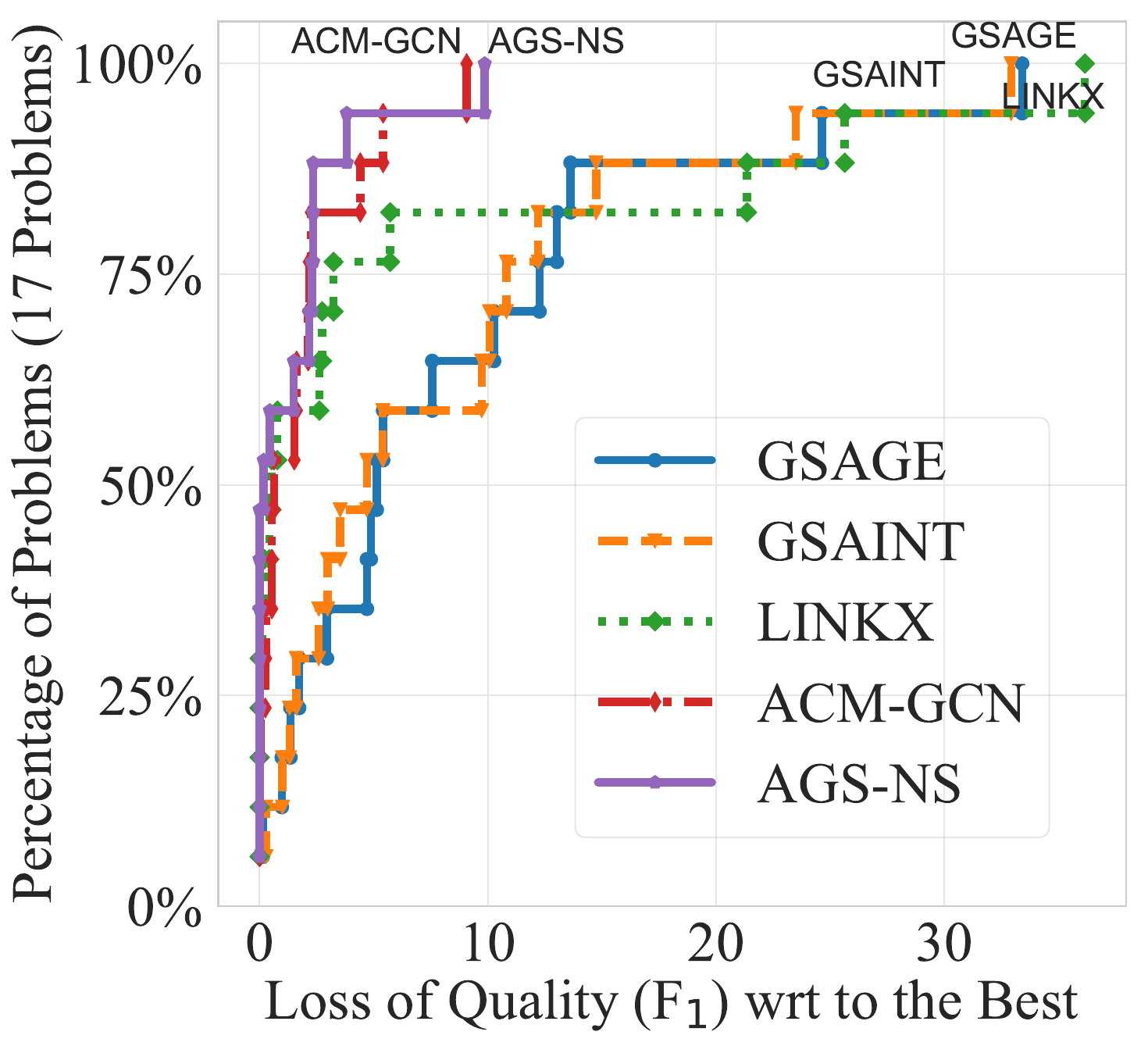}
\label{fig:smallhetero}
}
\subfloat[Small Homophilic Graphs]{
\includegraphics[width=0.24\linewidth]{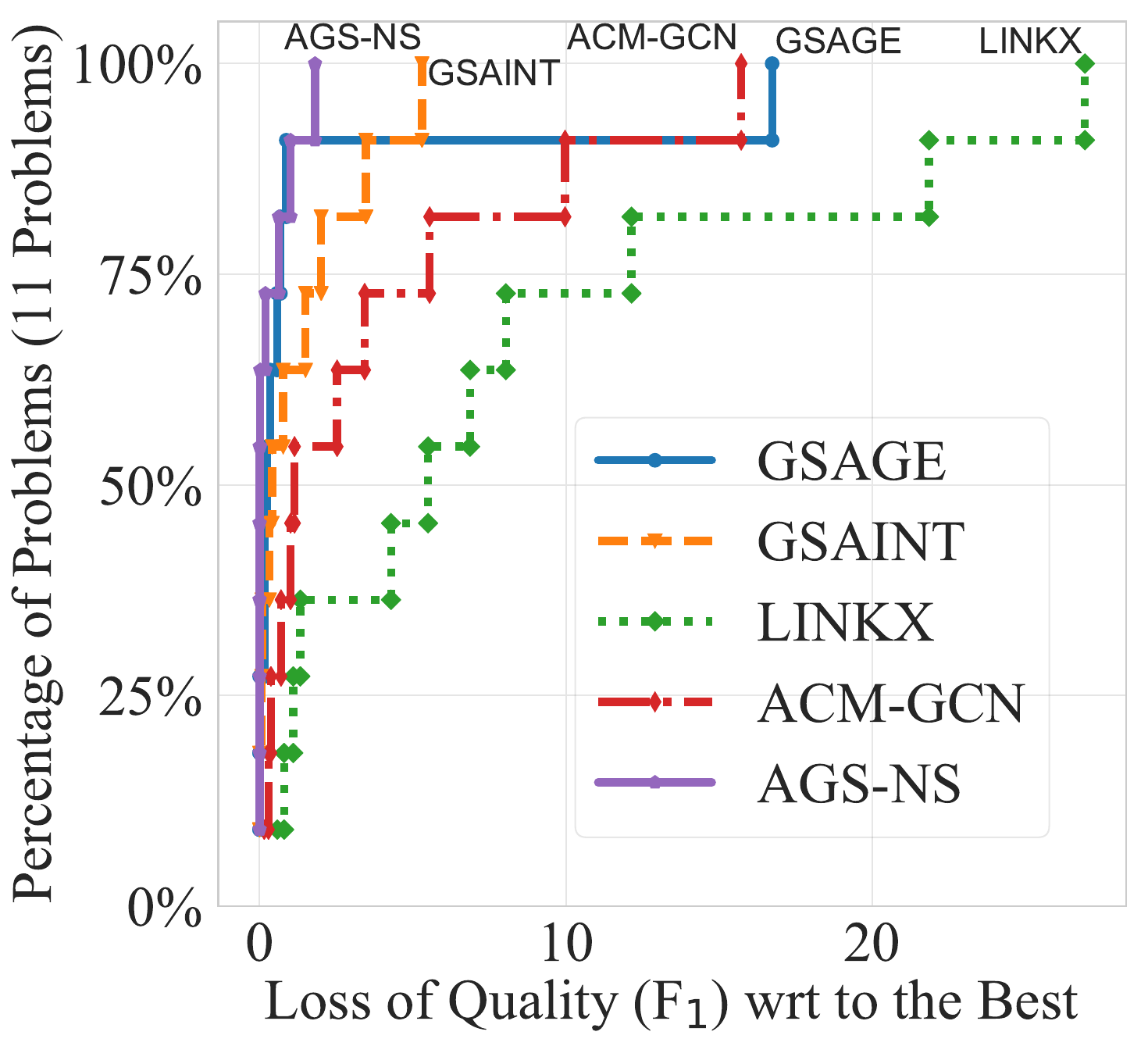}
\label{fig:smallhomo}
}
\subfloat[Large Heterophilic Graphs]{
\includegraphics[width=0.24\linewidth]{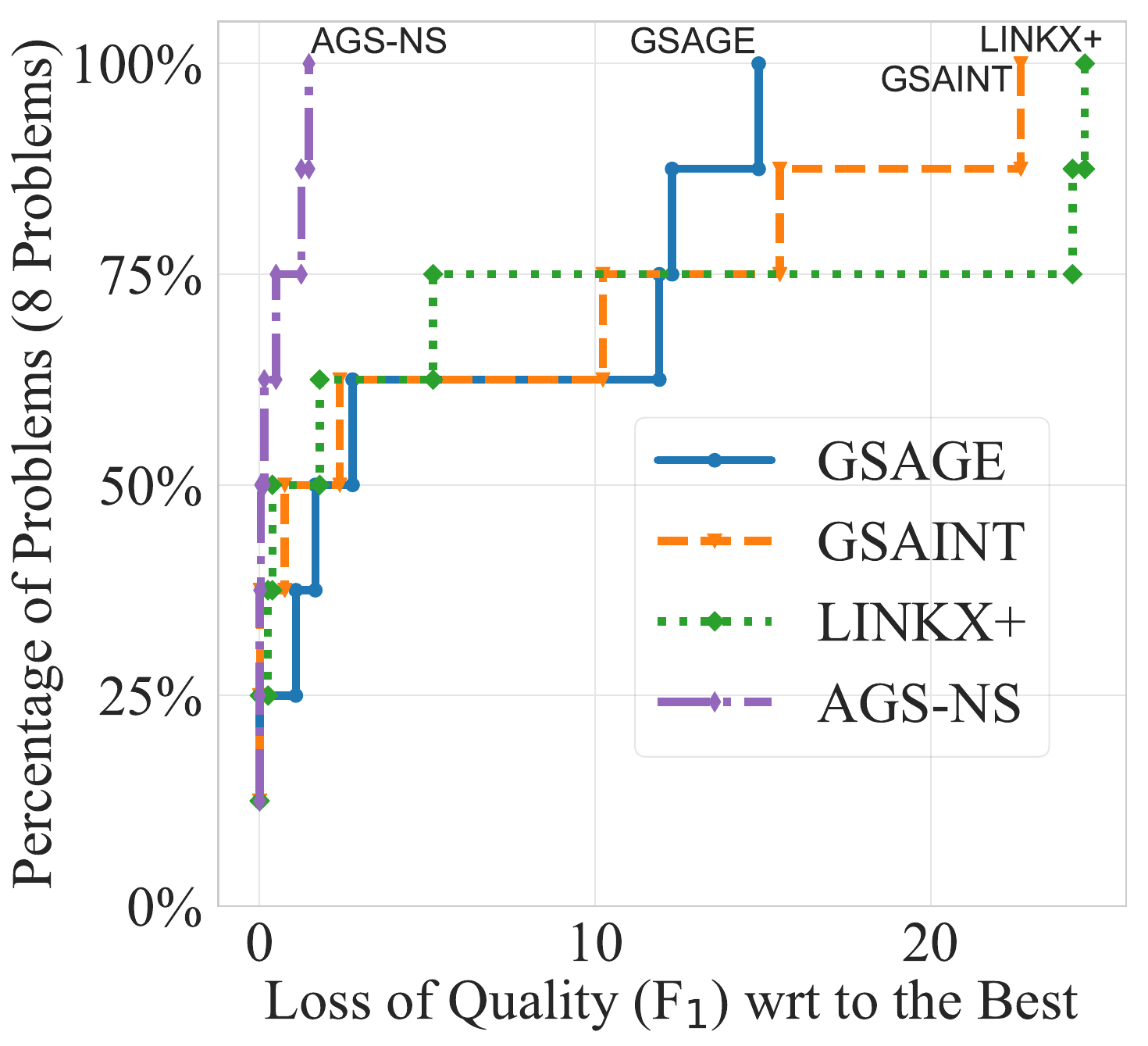}
\label{fig:largehetero}
}
\subfloat[Large Homophilic Graphs]{
\includegraphics[width=0.24\linewidth]{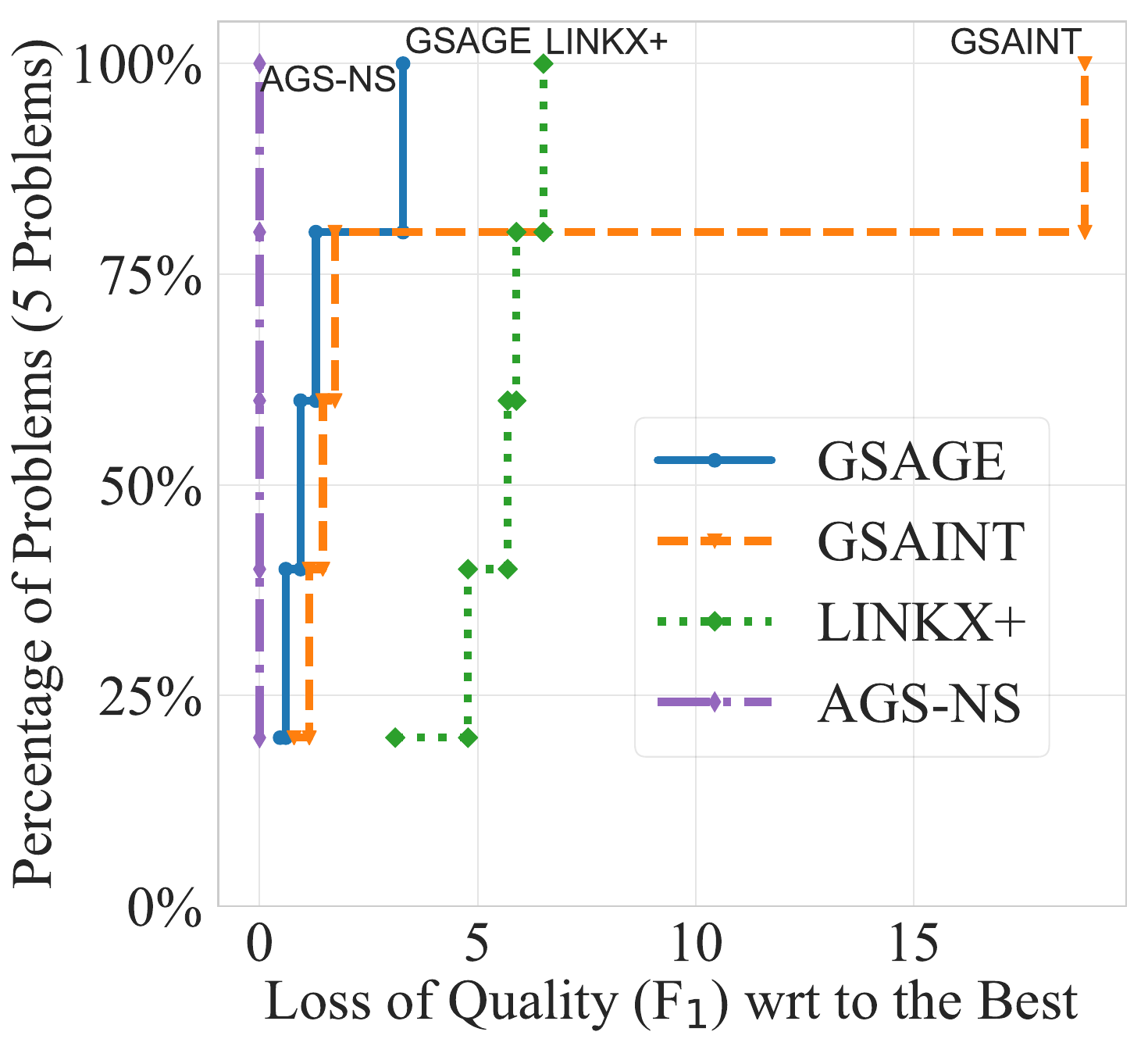}
\label{fig:largehomo}
}

\caption{{\em Performance Profile:} The X-axis shows the differences in $F_1$-scores (scaled to 100) between the best algorithm for a specific problem, and the Y-axis shows the fraction of the problems. We compare AGS to two scalable homophilic GNNs (GSAGE, GSAINT) and two heterophilic GNNs (ACM-GCN, LINKX). For small ($<100K$ vertices) and large ($\ge100K$ vertices)  graphs, we consider LINKX (full-batch) and LINKX$+$ (mini-batch), respectively. 
Full results are in Appendix~\ref{subsec:AGSNSresults}.}
\label{fig:perfplot}

\end{figure*}

In Fig.~\ref{fig:perfplot}, we show the performance profile plot of the algorithms w.r.t. their relative $F_1$-score. For each graph, we compute the relative $F_1$-score for all algorithms by subtracting their  $F_1$-score from the best one. 
Thus the best performing algorithm for a problem receives a score of  $0$, and for all other algorithms the difference is positive. The X-axis of Fig.~\ref{fig:perfplot} represents these relative values from the best-performing algorithms across the graph problems, and the Y-axis shows the fraction of problems that achieve $F_1$-score within the bound on the difference specified by the X-axis value. Thus, the performance plot reveals a ranking of the algorithms in terms of their quality. The closer a curve (algorithm) is to the Y-axis, and the smaller the difference in the X-axis, the better its performance w.r.t. other algorithms across all the problems considered. Fig.~\ref{fig:smallhetero} summarizes results from five algorithms across $17$ test problems for small heterophilic graphs, where we observe that \agsns{} performs the best or close-to-best for about half of the problems and has up to $10\%$ lower $F_1$ scores compared to the best algorithm for the other half. ACM-GCN performs similarly to \agsns{} for these small graphs. 
While LINKX achieves comparable accuracies to \agsns{} for most of the problems (about $80\%$), for a few problems it achieves lower $F_1$ scores.

For large heterophilic graphs (Fig.~\ref{fig:largehetero}),  performance improvement for \agsns{} is considerably better than all algorithms. LINKX$+$ performs second-best for $75\%$ of the problems.
In small homophilic graphs (Fig.~\ref{fig:smallhomo}), \agsns{} and GSAGE are the top two performers for most of the problems, followed by GSAINT, ACM-GCN, and LINKX. This is expected since ACM-GCN and LINKX are tailored for heterophilic graphs. In large homophilic graphs (Fig.~\ref{fig:largehomo}), \agsns{} is the best-performing algorithm in terms of accuracy, followed by GSAGE. We also observe from this figure that for homophilic graphs, LINKX$+$ is not competitive.

In Table.~\ref{tab:smallothergnns} (Appendix~\ref{subsec:othermethods}) we present  performance of \agsns{} compared to $18$ recent competing algorithms on small heterophilic graphs. 
ACM-GCN, \agsns{}, and LINKX remain the best-performing, with \agsns{} as the leading method for these inputs.
The full set of results with numerical values are provided in Appendix (Tables~\ref{tab:smallhetero},~\ref{tab:smallhomophilic},~\ref{tab:large_heterophilic_graphs},~\ref{tab:large_homophilic_graphs},~\ref{tab:smallothergnns}). 
These tables present the mean ($\mu$) and standard deviation ($\sigma$) of the runs. We use a one-tailed statistical hypothesis $t$-test to verify whether one set of values is better than the other. The better-performing results are highlighted in boldface.


\subsection{Ablation study}
\label{subsec:ablation_study} 
We now investigate the contribution of individual components of \ags{},  employing different sampling strategies and GNNs.
For GNNs, we consider GSAGE (\emph{spatial}) and  ChebNet~\cite{he2022convolutional} (\emph{spectral}). 
We sample $\floor{d_u\cdot k^\prime}$ ($k^\prime \in [0,1]$) neighbors for each node $u$ using  \emph{similarity} and \emph{diversity}-based sparsification using precomputed weights (detailed in \cref{subsec:node_sampling}), and \emph{random} sparsification that selects a subgraph uniformly at random. Only a \emph{single} sampled subgraph is used throughout the training.
For experiments, we generate three synthetic versions of \texttt{Cora} with average degree $d=200$, keeping the original nodes and features the same but changing the connectivity to have strong ($0.05$), moderate ($0.25$), and weak ($0.50$) heterophily. The distribution of heterophily for each node is close to uniform. 
Table~\ref{tab:corasynsparse} shows that diversity-based selection performs best with strong heterophily, and with spectral GNN, it even achieves accuracy better than using the entire graph. In contrast, on moderate heterophily, the similarity-based selection performs the best (even better than the whole graph). For weak heterophily (homophily), similar and random sparse perform  alike. 
When we convert these into a sampling paradigm (node sampling or graph sampling), similar behavior can be seen (Table~\ref{tab:corasynsampler}) as we get the best performance from diversity-based selection for strong heterophily and similarity-based for moderate heterophily with a wide margin than random. For weak heterophily or homophily, the similarity-based  performs slightly better than the random sampler. 

\begin{table}[t]
\centering
\resizebox{1.0\linewidth}{!}
{
\def\arraystretch{1.0}
\begin{tabular}{l|cc|cc|cc|cc|cc|cc}
\toprule
Dataset             & \multicolumn{4}{c|}{CoraSyn0.05}   & \multicolumn{4}{c|}{CoraSyn0.25}   & \multicolumn{4}{c}{CoraSyn0.50}                  \\\midrule
GNN &
  \multicolumn{2}{c}{GSAGE} &
  \multicolumn{2}{c|}{ChebNet} &
  \multicolumn{2}{c}{GSAGE} &
  \multicolumn{2}{c|}{ChebNet} &
  \multicolumn{2}{c}{GSAGE} &
  \multicolumn{2}{c}{ChebNet} \\\midrule
Sparse & $\mu$   & $\sigma$ & $\mu$   & $\sigma$ & $\mu$   & $\sigma$ & $\mu$   & $\sigma$ & $\mu$   & $\sigma$ & $\mu$           & $\sigma$        \\\midrule
Random & 57.08 & 0.46 & 56.44 & 0.14 & 61.85 & 0.94 & 62.14 & 1.29 & 92.09 & 0.79 & 92.48 & 0.72 \\
Similar & 48.50 & 0.29 & 50.91 & 0.22 & \textbf{71.66} & 0.42 & \textbf{71.02} & 0.17 & 92.48 & 0.51 & 91.71 & 0.25 \\
Diverse & 61.79 & 0.17 & \textbf{64.67} & 0.22 & 53.91 & 0.22 & 57.03 & 0.30 & 79.48 & 0.36 & 76.07 & 0.36 \\
Whole & \textbf{63.67} & 0.43 & 62.79 & 0.25 & 69.49 & 0.38 & 67.61 & 0.22 & \textbf{96.94} & 0.08 & \textbf{96.47} & 0.14\\\bottomrule
\end{tabular}
}
\caption{$F_1$ score of a single sparse subgraph  keeping $\floor{0.25\cdot d_u}$ neighbors of node $u$. Three synthetic versions of Cora are produced ($d = 200$ and $\gH_n$ $0.05, 0.25$, and $0.50$, respectively). 
}
\label{tab:corasynsparse}
\end{table}
\begin{table}[t]
\centering
\resizebox{1.0\linewidth}{!}
{
\def\arraystretch{1.0}
\begin{tabular}{l|cc|cc|cc|cc|cc|cc}
\toprule
Dataset & \multicolumn{4}{c|}{CoraSyn0.05} & \multicolumn{4}{c|}{CoraSyn0.25} & \multicolumn{4}{c}{CoraSyn0.50} \\\midrule
GNN &
  \multicolumn{2}{c}{GSAGE} &
  \multicolumn{2}{c|}{ChebNet} &
  \multicolumn{2}{c}{GSAGE} &
  \multicolumn{2}{c|}{ChebNet} &
  \multicolumn{2}{c}{GSAGE} &
  \multicolumn{2}{c}{ChebNet} \\\midrule
Sampler &
  $\mu$ &
  $\sigma$ &
  $\mu$ &
  $\sigma$ &
  $\mu$ &
  $\sigma$ &
  $\mu$ &
  $\sigma$ &
  \multicolumn{1}{c}{$\mu$} &
  \multicolumn{1}{c}{$\sigma$} &
  \multicolumn{1}{c}{$\mu$} &
  \multicolumn{1}{c}{$\sigma$} \\\midrule
Random & 59.20 & 0.22 & 57.50 & 0.80 & 62.85 & 0.55 & 61.67 & 0.46 & 91.06 & 0.36 & 88.89 & 1.32 \\
Similar & 54.44 & 0.68 & 54.03 & 0.51 & \textbf{69.19} & 0.08 & \textbf{67.43} & 0.30 & \textbf{92.00} & 0.08 & \textbf{91.71} & 0.29 \\
Diverse & \textbf{60.96} & 0.55 & \textbf{61.73} & 0.43 & 56.67 & 0.46 & 58.38 & 0.25 & 86.65 & 0.44 & 83.54 & 0.79
\\\bottomrule
\end{tabular}
}
\caption{$F_1$ score using different subgraph samples ($k=[25, 25]$). Three synthetic versions of Cora are produced ($d = 200$ and $\gH_n$ $0.05, 0.25$, and $0.50$, respectively).
}
\label{tab:corasynsampler}
\end{table}

Since real-world graphs have nonuniform node homophily, we generated a synthetic version of \texttt{Cora} where individual nodes have different local node homophily in the range $[0.05,0.50]$. Table~\ref{tab:coratwochannel} shows that our proposed two channel (one for homophily and one for heterophily) \ags{} performs the best, significantly outperforming ACM-GCN and LINKX. 
Detailed studies on the parameters and different submodular functions used in diversity-based sampling are presented in Appendix~\ref{subsec:submdoularablation}.



\begin{table}[t]
\centering
\resizebox{1.0\linewidth}{!}
{
\def\arraystretch{1.0}
\begin{tabular}{cc|cc|cc|cc|cc}
\toprule
\multicolumn{2}{c|}{ACM-GCN} & \multicolumn{2}{c|}{LINKX} & \multicolumn{2}{c|}{AGS (Sim. + Sim.)} & \multicolumn{2}{c|}{AGS (Div. + Div.)} & \multicolumn{2}{c}{AGS (Sim. + Div.)} \\
$\mu$ & $\sigma$ & $\mu$ & $\sigma$ & $\mu$ & $\sigma$ & $\mu$ & $\sigma$ & $\mu$ & $\sigma$ \\\midrule
35.3394 & 0.6922 & 34.2081 & 0.7889 & 60.8145 & 0.7759 & 62.8394 & 0.8522 & \textbf{63.0317} & 0.5079\\\bottomrule
\end{tabular}
}
\caption{$F_1$ Score of two-channel GNNs with a differently weighted sampler in synthetic Cora graph with locally heterophilic and homophilic nodes mixed ($\gH_n(u) =[0.05,0.50])$.}
\label{tab:coratwochannel}
\end{table}

\begin{figure}[!htbp]
 \centering
 \includegraphics[width=0.7\linewidth, angle = 0 ]{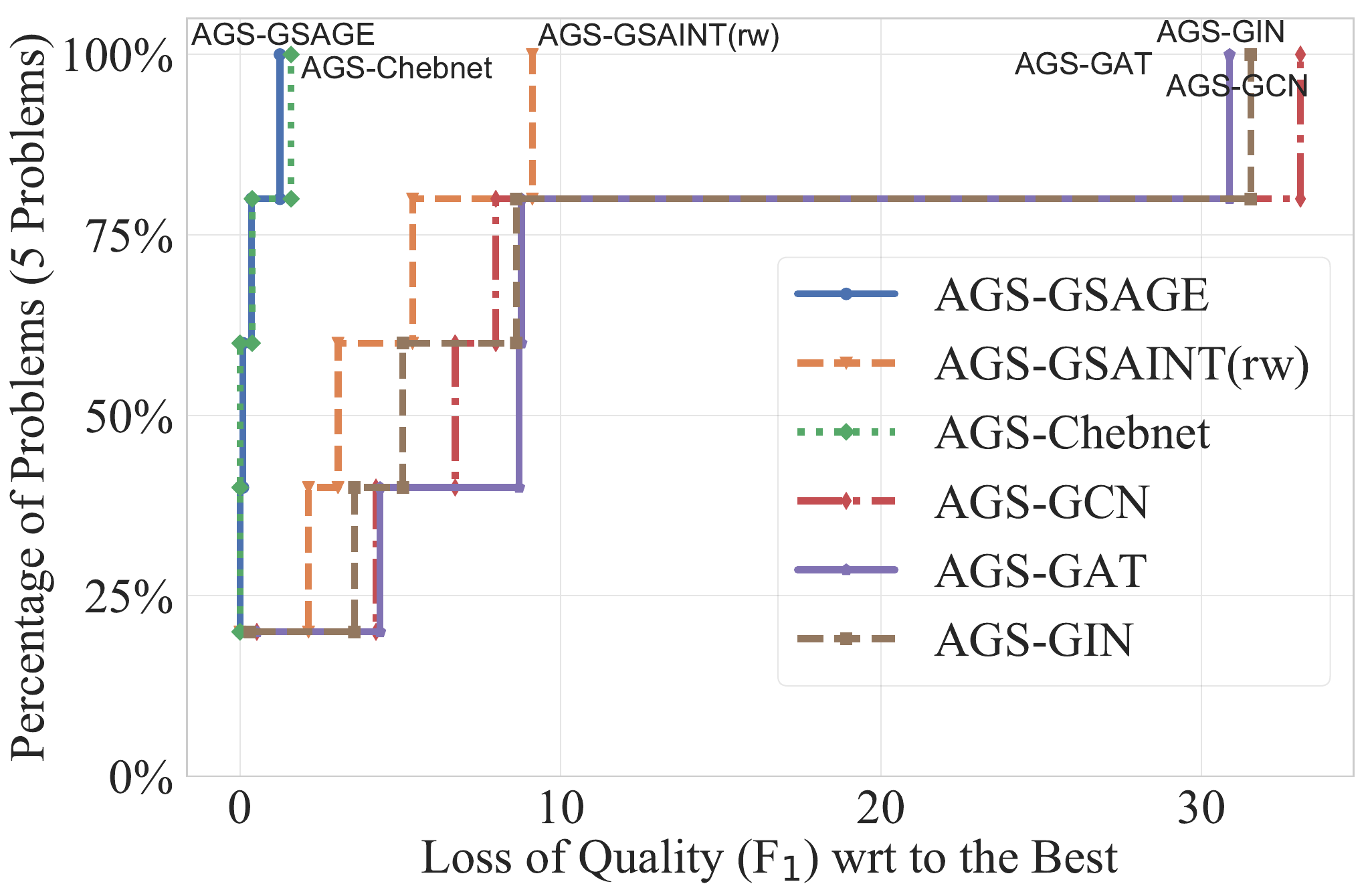}
 \caption{{\em Performance Profile:} X-axis is the difference in $F_1$-scores (scaled to 100) for different GNNs coupled with AGS sampler on five benchmark heterophilic graphs.}
 \label{fig:agsgnnvariants}
\end{figure}

We coupled our sampling strategy (both node sampling and graph sampling (\S\ref{sec:graphsampling})) to existing GNNs (GSAGE, ChebNet, GSAINT, GIN, GAT, and GCN) and evaluated them on five heterophilic graphs (\texttt{\justify Reed98, Roman-empire, Actor, Minesweeper, Tolokers}). While detailed numerical values are provided in Table~\ref{tab:agsgnnvariants} in Appendix~\ref{subsec:ablation}, we summarize the key results as a performance profile in Fig.~\ref{fig:agsgnnvariants}.
We observe that AGS with GSAGE, ChebNet, and GSAINT performed the best. 





\subsection{Experimental Runtime and Convergence}
\label{subsec:runtimeandconvergence}
Table~\ref{tab:training_time} shows per epoch training time of different methods under the same settings. 
For large datasets such as \texttt{Reddit}, with single worker thread, our current implementation of weighted sampling requires around $3$ times more than the random sampling used in GSAGE due to the dual channels and a few implementation differences with PyTorchGeometric. We plan to improve our implementation next. Since our precomputation is embarrassingly parallel, we can accelerate our algorithm by increasing the number of worker threads. Our weighted random walk, is faster than the library implementation even for sequential execution.
Fig.~\ref{fig:convergence_iteration} shows the number of epochs required to converge for random sampling (GSAGE) and our weighted sampling (\ags{}). Using the same settings, we see that \ags{} is more stable and requires fewer epochs on average to converge than GSAGE.

\begin{table}[t]
\scriptsize
\centering
\resizebox{1.0\linewidth}{!}
{
\def\arraystretch{1.0}
\begin{tabular}{l|l|ccc}
\toprule
Method & \multicolumn{1}{c|}{Settings}  & Reddit & Genius & Yelp \\
\midrule
GSAGE  & $k$ = [25, 10], $b$ = 1024, $\ell$ = 2, H=256       & 5.47  & 2.16 & 7.71  \\
GSAINT & $b$ = 4096, rw, step = 2, $\ell$ = 2, H = 256 & 8.01  & 2.10 & 4.34  \\
LINKX$+$      & random 100 parts, b = 2, H = 32        & 10.63 & 2.35 & 3.54  \\
AGS-NS     & $k$ = [25, 10], b = 1024, $\ell$ = 2, H = 256       & 18.44 & 2.43 & 13.30 \\
AGS-GS     & $b$ = 4096, wrw, step = 2, $\ell$ = 2, H = 256 & 5.90  & 1.72 & 4.06 \\
\bottomrule
\end{tabular}
}
\caption{Average training time (seconds/epoch). \texttt{rw} and \texttt{wrw} refer to random walk and weighted random walk, respectively.}
\label{tab:training_time}
\end{table}

\begin{table}[t]
\scriptsize
\centering
\resizebox{1.0\linewidth}{!}{
\def\arraystretch{1.0}
\begin{tabular}{c|c|cccc}
\toprule
Method & Settings & Reddit & Genius & Yelp\\\midrule
GraphSAINT        & Norm Computation             & 414.93 & 92.30 & 215.99 \\
Similarity Ranking  & $k_1=$ 20\%, $k_2=$ 20\%            & 60.03  & 6.00  & 41.20 \\
Diversity Ranking       & $k_1=$ 20\%, $k_2=$ 20\%    & 600.00       & 16.37 & 55.62  \\
LearningSimilarity           & H = 256, b = 10000 &       7.01 &  1.00     &   4.03     \\
\bottomrule  
\end{tabular}
}
\caption{Pre-computation time (seconds) of different components of AGS-GNN and GSAINT with single worker thread.}
\label{tab:experiment_precomputation_time}
\end{table}

\begin{figure}[t]
 \centering
 \includegraphics[width=0.8\linewidth, angle = 0 ]{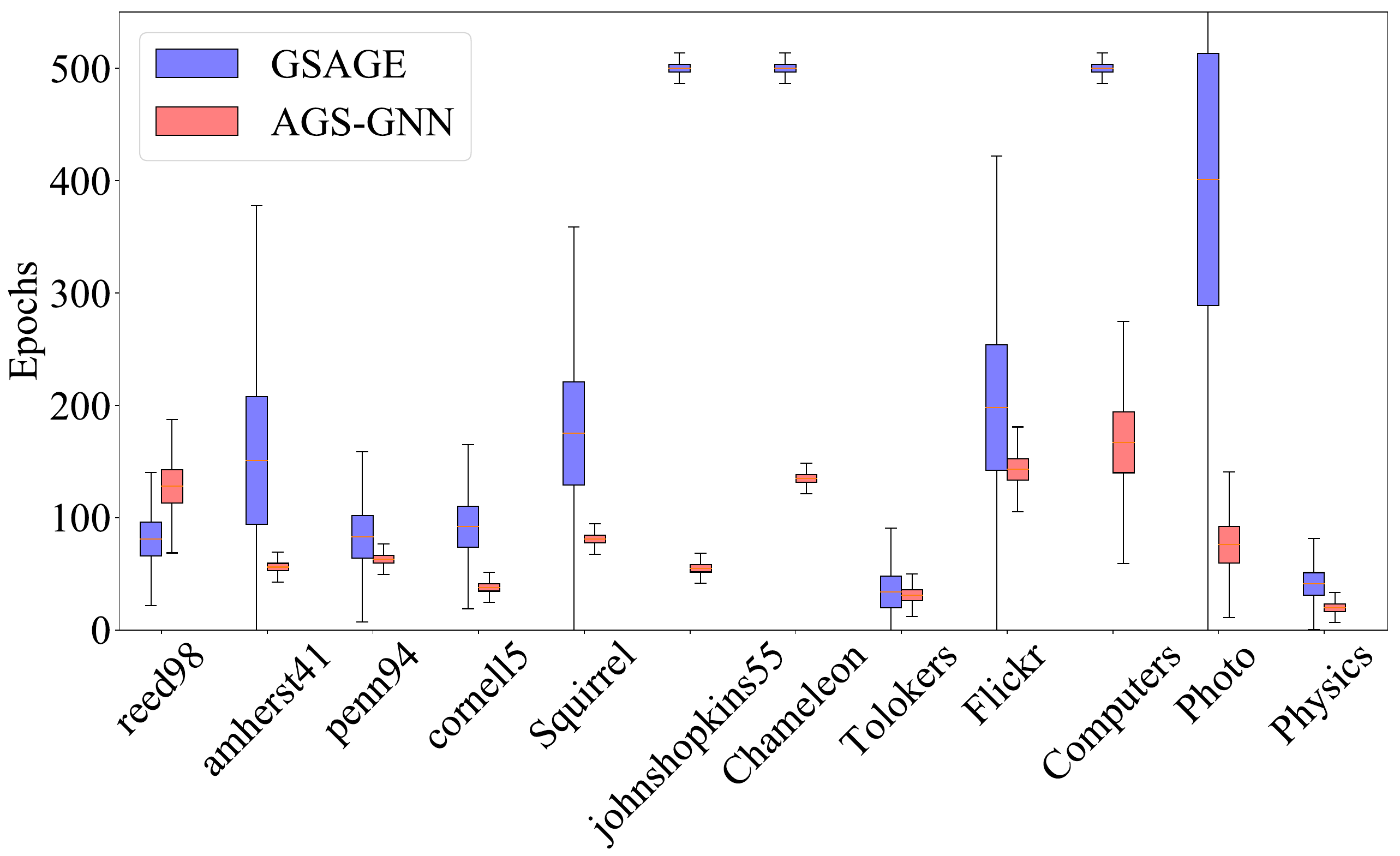}
 \caption{Num. of epochs for \ags{} and GSAGE to converge.}
 \label{fig:convergence_iteration}
\end{figure}

\section{Conclusions}
\label{sec:conclusion}
In this work, we proposed attribute-guided sampling that uses node features in an unsupervised and supervised fashion. We have shown that through a biased sampling of similar and diverse neighbors, we get improved performance in homophilic graphs and can handle challenging heterophilic graphs. We verify our claims through exhaustive experimentation on various benchmark datasets and methods. 
A limitation of our work is the time required for submodular optimization when the facility location function is used; the computation complexity is higher for dense graphs. 
We will optimize implementations in our future work. We will also build an end-to-end process for supervised sampling.

\bibliographystyle{ACM-Reference-Format}
\bibliography{gnn_bibliography}

\section{Appendix: Preliminaries }
\label{sec:prooftheorem}

\subsection{Homophily of similarity-based selection}
\label{subsec:prob_similar}
Let $t$ be an ego node of a graph with degree $d_t$, local node homophily $\gH_n(t)$, and label $y_t$. From the definition of homophily, the probability of selecting a neighbor $i\in \gN(t)$ with the same label as the ego node  uniformly at random is $P_\gU(y_i=y_t)=\gH_n(t)$,  where 
$\gU$ denotes to the distribution obtained by selecting a neighbor uniformly at random.
If features and labels correlate positively, and similarity is computed from the features, we can expect the following assumption to be valid: 
\begin{assumption}
\label{assump:homophily}
    For a node $t$,  the average similarity of neighbors with the same label as $t$  is greater than or equal to the average similarity of all neighbors. 
\end{assumption}

\begin{lemma}
If the probability of selecting a neighbor is proportional to its similarity to the ego node $t$, then the local node homophily of a sampled neighborhood $\gH_n'(t)\ge \gH_n(t)$. If the  sampling probability distribution is $\gS$ then $P_\gS(y_i=y_t)\ge P_\gU(y_i=y_t)$.
\end{lemma}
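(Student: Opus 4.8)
The plan is to reduce the statement to a comparison of two weighted averages, using only the definition of the similarity-proportional distribution and Assumption~\ref{assump:homophily}. First I would fix the ego node $t$ and partition its neighbors into the same-label set $A = \{i\in\gN(t) : y_i = y_t\}$ and its complement, recording that $|A| = \gH_n(t)\, d_t$ directly from the definition of local node homophily, so that $P_\gU(y_i=y_t) = |A|/d_t = \gH_n(t)$. Writing $s_i = s(\vx_i,\vx_t)$ for the (positive) similarity scores, the distribution $\gS$ that selects neighbor $i$ with probability proportional to $s_i$ assigns $P_\gS(i) = s_i / \sum_{j\in\gN(t)} s_j$, whence
\[
P_\gS(y_i=y_t) = \frac{\sum_{i\in A} s_i}{\sum_{j\in\gN(t)} s_j}.
\]

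Next I would rewrite this ratio in terms of the two averages named in Assumption~\ref{assump:homophily}. Letting $\bar s_A = \frac{1}{|A|}\sum_{i\in A} s_i$ denote the mean similarity over same-label neighbors and $\bar s = \frac{1}{d_t}\sum_{j\in\gN(t)} s_j$ the mean over all neighbors, the identities $\sum_{i\in A} s_i = |A|\,\bar s_A$ and $\sum_{j} s_j = d_t\,\bar s$ give
\[
P_\gS(y_i=y_t) = \frac{|A|\,\bar s_A}{d_t\,\bar s} = \frac{|A|}{d_t}\cdot\frac{\bar s_A}{\bar s} = P_\gU(y_i=y_t)\cdot\frac{\bar s_A}{\bar s}.
\]
Because every $s_i>0$ we have $\bar s>0$, and Assumption~\ref{assump:homophily} states exactly that $\bar s_A \ge \bar s$, so $\bar s_A/\bar s \ge 1$ and the claimed inequality $P_\gS(y_i=y_t)\ge P_\gU(y_i=y_t)=\gH_n(t)$ follows in one line.

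Finally I would connect this probability to the sampled-neighborhood homophily $\gH_n'(t)$. Drawing each neighbor slot independently from $\gS$, the expected fraction of same-label neighbors in the sample equals $P_\gS(y_i=y_t)$ by linearity of expectation, so $\E[\gH_n'(t)] = P_\gS(y_i=y_t) \ge \gH_n(t)$, which is the intended meaning of $\gH_n'(t)\ge\gH_n(t)$.

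I expect the only genuine subtlety — rather than a real obstacle — to be stating precisely in what sense $\gH_n'(t)$ exceeds $\gH_n(t)$: the inequality is an \emph{expected}-value statement over the sampling, and one should make explicit that $\gU$ is recovered as the special case $s_i\equiv\text{const}$, keeping $P_\gU(y_i=y_t)=|A|/d_t=\gH_n(t)$ consistent with the decomposition above. The decisive inequality itself is an immediate consequence of the positivity of the weights together with Assumption~\ref{assump:homophily}, and no further estimation is needed.
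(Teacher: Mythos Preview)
Your proof is correct and follows essentially the same route as the paper's: both express $P_\gS(y_i=y_t)$ as a ratio of similarity sums, invoke Assumption~\ref{assump:homophily} to compare the same-label average $\bar s_A$ with the overall average $\bar s$, and conclude the inequality directly. Your multiplicative decomposition $P_\gS = P_\gU\cdot(\bar s_A/\bar s)$ and your explicit remark that $\gH_n'(t)\ge\gH_n(t)$ holds in expectation are slightly cleaner than the paper's chain of inequalities, but the underlying argument is identical.
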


\begin{proof}
Consider an ego node $t=\{\vx_t,y_t\}$ where $\vx_t$, and $y_t$ are its feature and label, respectively. Let the feature and labels of the neighboring nodes of $t$ be $\gN(t)=\{(\vx_1,y_1),(
\vx_2,y_2),\cdots, (\vx_{d_t},y_{d_t})\}$.

Let  $s(\vx_i,\vx_t)$ be a  similarity function that measures how similar the feature of a neighbor $x_i$  is to the feature of the ego node  $\vx_t$. 
This function returns a positive value, and higher values indicate higher similarity. 

A probability distribution with probability mass function $p_\gS(i)$ assigns a probability to each neighbor $i$ based on its similarity to the ego node. The distribution should satisfy the following properties:
\begin{itemize}
    \item $p_\gS(i)\ge 0$ for all $i$;
    \item $\sum_{i=1}^{d_t} p_\gS(i)=1$;
    \item $p_\gS(i)$ is proportional to $s(\vx_i,\vx_t)$. 
    (An example is $p_\gS(i) = \frac{s(\vx_i,\vx_t)}{\sum_{j=1}^{d_t} s(\vx_j,\vx_t)}$.)
\end{itemize}


Let $I(y_i=y_t)$ be an indicator function that returns $1$ if $y_i=y_t$ and $0$ otherwise.
If a neighbor $i$ of a node $t$  is selected randomly then $p_\gU(i)=\frac{1}{d_t}$. Therefore the probability of selecting a neighbor randomly having the same label as the ego node is
\begin{align*}
    P_\gU(y_i=y_t) & = \sum_{i=1}^{d_t} p_\gU(i)\cdot I(y_i=y_t),  \\
                   & = \frac{1}{{d_t}}\cdot \sum_{i=1}^{d_t} I(y_i=y_t).
\end{align*}

If  the selection probability is based on similarity, then
\begin{align*}
    P_\gS(y_i=y_t) & = \sum_{i=1}^{d_t} p_\gS(i)\cdot I(y_i=y_t)\\
                   & = \sum_{i=1}^{d_t} \frac{s(\vx_i,\vx_t)}{\sum_{j=1}^{d_t} s(\vx_j,\vx_t)} \cdot I(y_i=y_t).    
\end{align*}

\begin{itemize}
    
\item  Let $n$ be the number of neighbors having the same label as ego node $t$, i.e., $n = \sum_{i=1}^{d_t} I(y_i=y_t)$.

\item  Let $s_t$ be the sum of similarities of all neighbors having the same label as $t$, i.e., $s_t=\sum_{i=1}^{d_t} (s(\vx_i,\vx_t)\cdot I(y_i=y_t)$. Also  let $s_d$ denote  the sum of  similarities of all neighbors of $t$, then $s_d=\sum_{j=1}^{d_t} s(\vx_j,\vx_t)$.

\item  If the features and labels are positively correlated, 
the average similarity of neighbors having the same label as the ego node should be higher than the average similarity of all neighbors of the ego node. That is, $\frac{s_t}{n} \ge \frac{s_d}{d_t}$.
\end{itemize}

We can now use these expressions and the substitutions shown below  to derive the result. From Assumption~\ref{assump:homophily}, 
\begin{align*}   
    \frac{s_t}{n} &\ge \frac{s_d}{d_t}\\ 
    \frac{s_t}{s_d} &\ge \frac{n}{d_t}\\    
    \frac{1}{\sum_{j=1}^{d_t} s(x_j,x_t)}\sum_{i=1}^{d_t} s(\vx_i,\vx_t)\cdot I(y_i=y_t) &\ge \frac{1}{d_t}\cdot \sum_{i=1}^{d_t} I(y_i=y_t)\\
    \sum_{i=1}^{d_t} \frac{s(\vx_i,\vx_t)}{\sum_{j=1}^{d_t} s(\vx_j,\vx_t)}\cdot I(y_i=y_t) &\ge  \sum_{i=1}^{d_t} \frac{1}{{d_t}}\cdot I(y_i=y_t) \\
    \sum_{i=1}^{d_t} p_\gS(i)\cdot I(y_i=y_t)& \ge \sum_{i=1}^{d_t} p_\gU(i)\cdot I(y_i=y_t)\\
    P_\gS(y_i=y_t)&\ge P_\gU(y_i=y_t)\\
    \gH_n'(t)&\ge \gH_n(t).
\end{align*}

\end{proof}

\subsection{Homophily of diversity-based selection}
\label{subsec:prob_diverse}

A submodular function is a set valued function that assigns a value to every subset of a ground set that satisfies the diminishing returns property, i.e., for two subsets $S$, $T$ with $S\subseteq T$ of a ground set $A$, and any  element $e \in A\setminus T$, 
$$f(S\cup{e}) - f(S) \geq f(T\cup{e}) - f(T).$$
The quantity on either side of the inequality is the \emph{marginal gain}. 
 An example of a submodular function is  the facility location function
\begin{equation}
    f(S, A) = \sum\limits_{y\in A} \max_{x\in S} \phi(x, y), 
    \label{eq:facility_location_function}
\end{equation}
where $S$ and $A$ are as above  and $\phi$ is a  similarity measure on the elements of $A$. 
The facility location function can be maximized by 
a Greedy algorithm, which starts with the empty subset and iteratively adds an element that gives the largest marginal gain to the function value until a cardinality  (or other) constraint on $S$ is reached. 

In this context, $S$ is the current set of selected nodes initialized with ego node $t$, and ground set $A = \gN(t)\cup \{t\}$. 
The marginal gain is $f_i = f(S\cup \{i\},A)-f(S,A)$ for each neighbor $i\in \gN(t)\setminus S$.  The neighbors are iteratively added based on marginal gain. 
If node features and labels are positively correlated, then we can expect the following assumption to be valid:
\begin{assumption}
\label{assump:hetero}
    The average marginal gain of the neighbors of a node $t$ with the same label as $t$  is less than or equal to the average marginal gain of all neighbors.
\end{assumption}

\begin{lemma}
If the probability of selecting a neighbor of the ego node $t$  is proportional to the marginal gain of the neighbor, then the local node homophily of a sampled neighborhood $\gH_n'(t)\le \gH_n(t)$. If the sampling probability distribution is $\gD$ then $P_\gD(y_i=y_t)\le P_\gU(y_i=y_t)$.
\end{lemma}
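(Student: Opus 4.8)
The plan is to mirror the proof of Lemma~\ref{lm:similar} verbatim, replacing the similarity score $s(\vx_i,\vx_t)$ by the marginal gain $f_i$ and flipping every inequality, since Assumption~\ref{assump:hetero} reverses the direction of Assumption~\ref{assump:homophily}. Concretely, I would first fix the marginal gains $f_i = f(S\cup\{i\},A) - f(S,A)$ produced by the Greedy run on $A = \gN(t)\cup\{t\}$ initialized with $S=\{t\}$. Because the facility location function of Eq.~\eqref{eq:facility_location_function} is monotone, each $f_i \ge 0$, so the assignment $p_\gD(i) = f_i / \sum_{j=1}^{d_t} f_j$ is a well-defined probability mass function on $\gN(t)$; positivity of the normalizer is the only regularity condition required, and it mirrors the three defining properties of $p_\gS$ in the similarity proof.

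Next I would introduce the same bookkeeping quantities as before. Let $n = \sum_{i=1}^{d_t} I(y_i=y_t)$ be the number of neighbors sharing the label of $t$, let $g_t = \sum_{i=1}^{d_t} f_i\, I(y_i=y_t)$ be the total marginal gain over those same-label neighbors, and let $g_d = \sum_{j=1}^{d_t} f_j$ be the total marginal gain over all neighbors. Under the uniform sampler, $p_\gU(i)=1/d_t$, so $P_\gU(y_i=y_t) = n/d_t = \gH_n(t)$; under $\gD$, we have $P_\gD(y_i=y_t) = \sum_{i=1}^{d_t} p_\gD(i)\, I(y_i=y_t) = g_t/g_d$.

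The heart of the argument is then a single cross-multiplication. Assumption~\ref{assump:hetero} states that the average marginal gain of same-label neighbors is at most the average over all neighbors, i.e. $g_t/n \le g_d/d_t$. Since $g_d>0$ and $d_t>0$, this rearranges to $g_t/g_d \le n/d_t$, which is exactly $P_\gD(y_i=y_t) \le P_\gU(y_i=y_t)$, and hence $\gH_n'(t) \le \gH_n(t)$, as claimed.

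I expect the only genuine subtlety, as opposed to the trivial algebra, to lie in justifying that the $f_i$ are well-defined fixed numbers rather than quantities that shift as the Greedy process augments $S$. I would resolve this by declaring $f_i$ to be the marginal gain \emph{recorded at the moment $i$ is selected}, so that each neighbor carries exactly one gain value; this is precisely the quantity used to build the ranking in \cref{subsec:rank-div}. With that convention the three sums above are unambiguous and Assumption~\ref{assump:hetero} can be stated directly on them, after which the remainder of the derivation is a verbatim sign-flip of the similarity lemma.
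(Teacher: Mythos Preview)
Your proposal is correct and follows essentially the same route as the paper's own proof: introduce the marginal-gain-proportional distribution $p_\gD(i)=f_i/\sum_j f_j$, define the same three bookkeeping quantities (your $n$, $g_t$, $g_d$ are the paper's $n$, $m_t$, $m_d$), invoke Assumption~\ref{assump:hetero} as $g_t/n \le g_d/d_t$, and cross-multiply to obtain $P_\gD(y_i=y_t)=g_t/g_d \le n/d_t = P_\gU(y_i=y_t)$. Your added remarks on monotonicity of the facility location function (ensuring $f_i\ge 0$ and $g_d>0$) and on fixing each $f_i$ as the gain recorded at the moment of selection are careful clarifications that the paper leaves implicit, but they do not alter the argument.
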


\begin{proof}

Let the  probability mass function of the distribution $\gD$ be proportional to the marginal gain value, $p_\gD(i)= \frac{f_i}{\sum_{j=1}^{{d_t}}f_j}$. The higher the marginal gain,  the higher is the selection probability. 


If a neighbor is selected randomly, then $p_\gU(i)=\frac{1}{d_t}$. Therefore, the probability of selecting a neighbor randomly having the same label as the ego node is
\begin{align*}
    P_\gU(y_i=y_t) &= \sum_{i=1}^{d_t} p_\gU(i)\cdot I(y_i=y_t) \\
                   &= \frac{1}{{d_t}}\cdot I(y_i=y_t).
\end{align*}

If the  selection probability is based on diversity, then
\begin{align*}
    P_\gD(y_i=y_t) & = \sum_{i=1}^{d_t} p_\gD(i)\cdot I(y_i=y_t) \\
                   & = \sum_{i=1}^{d_t} \frac{f_i}{\sum_{j=1}^{d_t} f_j} \cdot I(y_i=y_t). 
\end{align*}

\begin{itemize}
    
\item  As in the earlier subsection, let $n$ be the number of neighbors having the same label as the ego node $t$, i.e., $n = \sum_{i=1}^d I(y_i=y_t)$.

\item  Let $m_t$ be the sum of marginal gains of all neighbors having the same label as $t$, i.e., $m_t=\sum_{i=1}^{d_t} (f_i)\cdot I(y_i=y_t)$, and let $m_d$ denote the sum of all marginal gains of the neighbors $m_d=\sum_{j=1}^{d_t} f_j$.

\item  If the features and labels are positively correlated, 
the average marginal gains of neighbors having the same label as the ego node should be  less than or equal to the average marginal gains of all neighbors of the ego node. 
Hence  $\frac{m_t}{n} \le \frac{m_d}{{d_t}}$. 


\end{itemize}

We use the above expressions and the substitutions shown below get the desired result. From Assumption~\ref{assump:hetero},
\begin{align*}
    \frac{m_t}{n} &\le \frac{m_d}{{d_t}}\\
    \frac{m_t}{m_d} & \le \frac{n}{{d_t}}\\
    \frac{1}{\sum_{j=1}^d f_j}\sum_{i=1}^{d_t} f_i\cdot I(y_i=y_t) & \le \frac{1}{{d_t}}\cdot \sum_{i=1}^d I(y_i=y_t)\\
    \sum_{i=1}^{d_t} \frac{f_i}{\sum_{j=1}^{d_t} f_j}\cdot I(y_i=y_t) & \le \sum_{i=1}^{d_t} \frac{1}{{d_t}}\cdot I(y_i=y_t)\\     
    \sum_{i=1}^{d_t} p_\gD(i)\cdot I(y_i=y_t) & \le  \sum_{i=1}^{d_t} p_\gU(i)\cdot I(y_i=y_t)\\ 
    P_\gD(y_i=y_t) & \le P_\gU(y_i=y_t) \\
    \gH_n'(t) & \le \gH_n(t). 
\end{align*}

\end{proof}

\subsection{Empirical verification of the Lemmas}
\label{subsec:emphlemma}
Fig.~\ref{fig:squrrelhomophily} empirically verifies our assumptions that a subgraph selected from a similar neighborhood  increases the homophily,  and selecting a  diverse neighborhood using a  submodular function decreases homophily,  relative to the values in the  original graph.
For each vertex $u$, we take $\floor{d_u \cdot k^\prime}$ ($k^\prime\in [0,1]$) neighbors based on the ranking from similarity and diversity and compare against random selection.

\begin{figure}[!htbp]
\centering
\includegraphics[width=0.8\linewidth]{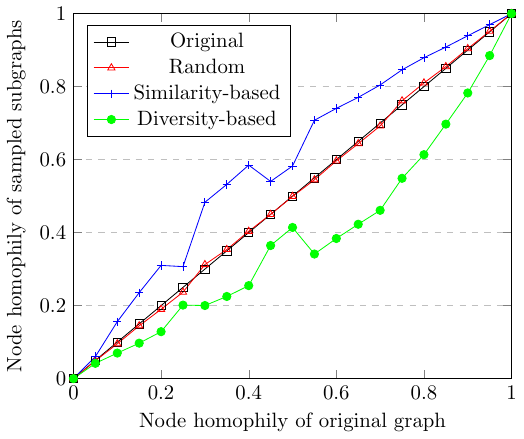}%
\caption{Empirical homophily comparison of different sampled subgraphs keeping $\floor{0.25 \cdot d_u}$  neighbors of node $u$. The synthetic graphs are from \texttt{squirrel} with $d=42$. The figure shows that similar neighbors increase $\gH_n$, diverse samples obtained from a submodular function  decrease $\gH_n$,  and random  selection keeps  $\gH_n$ the same as in the original  graph.}
\label{fig:squrrelhomophily}%
\end{figure}

\subsection{High-Pass and Low-Pass Filters for GNN}
\label{sec:gnnfilter}
We consider the following heterophilic graph in Fig.~\ref{fig:heterophilic_graph} to demonstrate how \emph{high-pass} filters can help in node classification in heterophilic graphs.
Assume a heterophilic graph $\gG$ has $18$ nodes of five different classes where nodes in the range $[0,14]$ are locally heterophilic and $[15,17]$ are locally homophilic. 
We assume the graph's node feature $\mX$ be the one-hot encoding of labels, thus the features and labels are (perfectly) correlated.
This is to show how \emph{low-pass} and \emph{high-pass} filters perform in an ideal scenario with heterophily.

\begin{figure}[!htbp]
\centering
\includegraphics[width=0.9\linewidth]{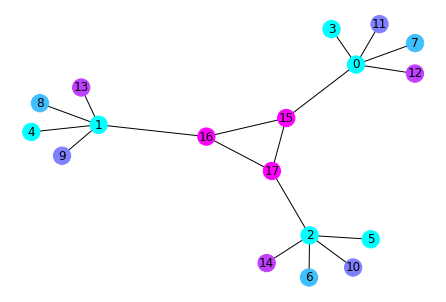}%
\caption{A heterophilic graph to demonstrate the performance of high-pass and low-pass filter. Here, nodes $0-14$ are locally heterophilic, and nodes $15-17$ are locally homophilic. In the figure, vertices with the same color correspond to the same class label.}
\label{fig:heterophilic_graph}
\end{figure}

\begin{table}[!htbp]

\[ \mX =
\begin{bmatrix}
1 &0 &0 &0 &0 \\
1 &0 &0 &0 &0 \\
1 &0 &0 &0 &0 \\
1 &0 &0 &0 &0 \\
1 &0 &0 &0 &0 \\
1 &0 &0 &0 &0 \\
0 &1 &0 &0 &0 \\
0 &1 &0 &0 &0 \\
0 &1 &0 &0 &0 \\
0 &0 &1 &0 &0 \\
0 &0 &1 &0 &0 \\
0 &0 &1 &0 &0 \\
0 &0 &0 &1 &0 \\
0 &0 &0 &1 &0 \\
0 &0 &0 &1 &0 \\
0 &0 &0 &0 &1 \\
0 &0 &0 &0 &1 \\
0 &0 &0 &0 &1 \\
\end{bmatrix}
\]
\end{table}

In the graph convolution step, the feature information of the neighbors gets aggregated with the ego node. So, we get different transformed representations of features depending on the representation of the underlying graph. 
For this example, we use a symmetric normalized adjacency matrix, $\mA_{sym}$ as a low-pass filter and a symmetric normalized graph Laplacian, $\mL_{sym}$ as a high-pass filter~\citep{ekambaram2014graph}. These filters are defined as,

\begin{align*}
    \mA_{sym} &=\mD^{-\frac{1}{2}}\mA\mD^{-\frac{1}{2}}\\
    \mL_{sym} &= \mI-\mD^{-\frac{1}{2}}\mA\mD^{-\frac{1}{2}}    
\end{align*}

If self-loops are added, then, 

\begin{align*}
    \Bar{\mA} &= \mA+\mI\\
    \Bar{\mA}_{sym} &=\Bar{\mD}^{-\frac{1}{2}}\Bar{\mA}\Bar{\mD}^{-\frac{1}{2}}\\
    \Bar{\mL}_{sym} &= \mI-\Bar{\mD}^{-\frac{1}{2}}\Bar{\mA}\Bar{\mD}^{-\frac{1}{2}}\\   
\end{align*}

Graphs under heterophily may perform well if the node features of the neighbors make two heterophilic nodes similar after aggregation. To evaluate this, we first compute aggregated feature information based on the filters and then inspect pairwise similarities.

\begin{align*}
    S(\mA,\mX) & = \mA\mX (\mA\mX )^T\\
    S(\Bar{\mA}_{sym},\mX) & = \Bar{\mA}_{sym}\mX (\Bar{\mA}_{sym}\mX )^T\\
    S(\Bar{\mL}_{sym},\mX) & = \Bar{\mL}_{sym}\mX (\Bar{\mL}_{sym}\mX )^T\\
\end{align*}

In these matrices, for a particular node $u$, if the average similarity of nodes with the same label as $y_u$ is greater than the average similarity of the nodes not having the same label as $y_u$, then we can expect that node to be appropriately classified (Definition 2 and Theorem 1 in~\citep{luan2022revisiting}).

\begin{table}[!htbp]
\centering
\resizebox{1.0\linewidth}{!}
{
\def\arraystretch{1.0}
\begin{tabular}{|l|llllllllllllllllll|}
\hline
\textbf{V/V} & \multicolumn{1}{l|}{\textbf{0}} & \multicolumn{1}{l|}{\textbf{1}} & \multicolumn{1}{l|}{\textbf{2}} & \multicolumn{1}{l|}{\textbf{3}} & \multicolumn{1}{l|}{\textbf{4}} & \multicolumn{1}{l|}{\textbf{5}} & \multicolumn{1}{l|}{\textbf{6}} & \multicolumn{1}{l|}{\textbf{7}} & \multicolumn{1}{l|}{\textbf{8}} & \multicolumn{1}{l|}{\textbf{9}} & \multicolumn{1}{l|}{\textbf{10}} & \multicolumn{1}{l|}{\textbf{11}} & \multicolumn{1}{l|}{\textbf{12}} & \multicolumn{1}{l|}{\textbf{13}} & \multicolumn{1}{l|}{\textbf{14}} & \multicolumn{1}{l|}{\textbf{15}} & \multicolumn{1}{l|}{\textbf{16}} & \multicolumn{1}{l|}{\textbf{17}} \\ \hline
\textbf{0} & \cellcolor[HTML]{A6C9EC}0.59 & \cellcolor[HTML]{A6C9EC}0.59 & \cellcolor[HTML]{A6C9EC}0.59 & \cellcolor[HTML]{A6C9EC}0.12 & \cellcolor[HTML]{A6C9EC}0.12 & \cellcolor[HTML]{A6C9EC}0.12 & -0.3 & -0.3 & -0.3 & -0.3 & -0.3 & -0.3 & -0.3 & -0.3 & -0.3 & -0.16 & -0.16 & -0.16 \\ \cline{1-1}
\textbf{1} & \cellcolor[HTML]{A6C9EC}0.59 & \cellcolor[HTML]{A6C9EC}0.59 & \cellcolor[HTML]{A6C9EC}0.59 & \cellcolor[HTML]{A6C9EC}0.12 & \cellcolor[HTML]{A6C9EC}0.12 & \cellcolor[HTML]{A6C9EC}0.12 & -0.3 & -0.3 & -0.3 & -0.3 & -0.3 & -0.3 & -0.3 & -0.3 & -0.3 & -0.16 & -0.16 & -0.16 \\ \cline{1-1}
\textbf{2} & \cellcolor[HTML]{A6C9EC}0.59 & \cellcolor[HTML]{A6C9EC}0.59 & \cellcolor[HTML]{A6C9EC}0.59 & \cellcolor[HTML]{A6C9EC}0.12 & \cellcolor[HTML]{A6C9EC}0.12 & \cellcolor[HTML]{A6C9EC}0.12 & -0.3 & -0.3 & -0.3 & -0.3 & -0.3 & -0.3 & -0.3 & -0.3 & -0.3 & -0.16 & -0.16 & -0.16 \\ \cline{1-1}
\textbf{3} & \cellcolor[HTML]{A6C9EC}0.12 & \cellcolor[HTML]{A6C9EC}0.12 & \cellcolor[HTML]{A6C9EC}0.12 & \cellcolor[HTML]{A6C9EC}0.04 & \cellcolor[HTML]{A6C9EC}0.04 & \cellcolor[HTML]{A6C9EC}0.04 & -0.06 & -0.06 & -0.06 & -0.06 & -0.06 & -0.06 & -0.06 & -0.06 & -0.06 & -0.04 & -0.04 & -0.04 \\ \cline{1-1}
\textbf{4} & \cellcolor[HTML]{A6C9EC}0.12 & \cellcolor[HTML]{A6C9EC}0.12 & \cellcolor[HTML]{A6C9EC}0.12 & \cellcolor[HTML]{A6C9EC}0.04 & \cellcolor[HTML]{A6C9EC}0.04 & \cellcolor[HTML]{A6C9EC}0.04 & -0.06 & -0.06 & -0.06 & -0.06 & -0.06 & -0.06 & -0.06 & -0.06 & -0.06 & -0.04 & -0.04 & -0.04 \\ \cline{1-1}
\textbf{5} & \cellcolor[HTML]{A6C9EC}0.12 & \cellcolor[HTML]{A6C9EC}0.12 & \cellcolor[HTML]{A6C9EC}0.12 & \cellcolor[HTML]{A6C9EC}0.04 & \cellcolor[HTML]{A6C9EC}0.04 & \cellcolor[HTML]{A6C9EC}0.04 & -0.06 & -0.06 & -0.06 & -0.06 & -0.06 & -0.06 & -0.06 & -0.06 & -0.06 & -0.04 & -0.04 & -0.04 \\ \cline{1-1}
\textbf{6} & -0.3 & -0.3 & -0.3 & -0.06 & -0.06 & -0.06 & \cellcolor[HTML]{F1A983}0.33 & \cellcolor[HTML]{F1A983}0.33 & \cellcolor[HTML]{F1A983}0.33 & \cellcolor[HTML]{FBE2D5}0.08 & \cellcolor[HTML]{FBE2D5}0.08 & \cellcolor[HTML]{FBE2D5}0.08 & \cellcolor[HTML]{FBE2D5}0.08 & \cellcolor[HTML]{FBE2D5}0.08 & \cellcolor[HTML]{FBE2D5}0.08 & \cellcolor[HTML]{FBE2D5}0.06 & \cellcolor[HTML]{FBE2D5}0.06 & \cellcolor[HTML]{FBE2D5}0.06 \\ \cline{1-1}
\textbf{7} & -0.3 & -0.3 & -0.3 & -0.06 & -0.06 & -0.06 & \cellcolor[HTML]{F1A983}0.33 & \cellcolor[HTML]{F1A983}0.33 & \cellcolor[HTML]{F1A983}0.33 & \cellcolor[HTML]{FBE2D5}0.08 & \cellcolor[HTML]{FBE2D5}0.08 & \cellcolor[HTML]{FBE2D5}0.08 & \cellcolor[HTML]{FBE2D5}0.08 & \cellcolor[HTML]{FBE2D5}0.08 & \cellcolor[HTML]{FBE2D5}0.08 & \cellcolor[HTML]{FBE2D5}0.06 & \cellcolor[HTML]{FBE2D5}0.06 & \cellcolor[HTML]{FBE2D5}0.06 \\ \cline{1-1}
\textbf{8} & -0.3 & -0.3 & -0.3 & -0.06 & -0.06 & -0.06 & \cellcolor[HTML]{F1A983}0.33 & \cellcolor[HTML]{F1A983}0.33 & \cellcolor[HTML]{F1A983}0.33 & \cellcolor[HTML]{FBE2D5}0.08 & \cellcolor[HTML]{FBE2D5}0.08 & \cellcolor[HTML]{FBE2D5}0.08 & \cellcolor[HTML]{FBE2D5}0.08 & \cellcolor[HTML]{FBE2D5}0.08 & \cellcolor[HTML]{FBE2D5}0.08 & \cellcolor[HTML]{FBE2D5}0.06 & \cellcolor[HTML]{FBE2D5}0.06 & \cellcolor[HTML]{FBE2D5}0.06 \\ \cline{1-1}
\textbf{9} & -0.3 & -0.3 & -0.3 & -0.06 & -0.06 & -0.06 & \cellcolor[HTML]{C1F0C8}0.08 & \cellcolor[HTML]{C1F0C8}0.08 & \cellcolor[HTML]{C1F0C8}0.08 & \cellcolor[HTML]{47D359}0.33 & \cellcolor[HTML]{47D359}0.33 & \cellcolor[HTML]{47D359}0.33 & \cellcolor[HTML]{C1F0C8}0.08 & \cellcolor[HTML]{C1F0C8}0.08 & \cellcolor[HTML]{C1F0C8}0.08 & \cellcolor[HTML]{C1F0C8}0.06 & \cellcolor[HTML]{C1F0C8}0.06 & \cellcolor[HTML]{C1F0C8}0.06 \\ \cline{1-1}
\textbf{10} & -0.3 & -0.3 & -0.3 & -0.06 & -0.06 & -0.06 & \cellcolor[HTML]{C1F0C8}0.08 & \cellcolor[HTML]{C1F0C8}0.08 & \cellcolor[HTML]{C1F0C8}0.08 & \cellcolor[HTML]{47D359}0.33 & \cellcolor[HTML]{47D359}0.33 & \cellcolor[HTML]{47D359}0.33 & \cellcolor[HTML]{C1F0C8}0.08 & \cellcolor[HTML]{C1F0C8}0.08 & \cellcolor[HTML]{C1F0C8}0.08 & \cellcolor[HTML]{C1F0C8}0.06 & \cellcolor[HTML]{C1F0C8}0.06 & \cellcolor[HTML]{C1F0C8}0.06 \\ \cline{1-1}
\textbf{11} & -0.3 & -0.3 & -0.3 & -0.06 & -0.06 & -0.06 & \cellcolor[HTML]{C1F0C8}0.08 & \cellcolor[HTML]{C1F0C8}0.08 & \cellcolor[HTML]{C1F0C8}0.08 & \cellcolor[HTML]{47D359}0.33 & \cellcolor[HTML]{47D359}0.33 & \cellcolor[HTML]{47D359}0.33 & \cellcolor[HTML]{C1F0C8}0.08 & \cellcolor[HTML]{C1F0C8}0.08 & \cellcolor[HTML]{C1F0C8}0.08 & \cellcolor[HTML]{C1F0C8}0.06 & \cellcolor[HTML]{C1F0C8}0.06 & \cellcolor[HTML]{C1F0C8}0.06 \\ \cline{1-1}
\textbf{12} & -0.3 & -0.3 & -0.3 & -0.06 & -0.06 & -0.06 & \cellcolor[HTML]{F2F2F2}0.08 & \cellcolor[HTML]{F2F2F2}0.08 & \cellcolor[HTML]{F2F2F2}0.08 & \cellcolor[HTML]{F2F2F2}0.08 & \cellcolor[HTML]{F2F2F2}0.08 & \cellcolor[HTML]{F2F2F2}0.08 & \cellcolor[HTML]{BFBFBF}0.33 & \cellcolor[HTML]{BFBFBF}0.33 & \cellcolor[HTML]{BFBFBF}0.33 & \cellcolor[HTML]{F2F2F2}0.06 & \cellcolor[HTML]{F2F2F2}0.06 & \cellcolor[HTML]{F2F2F2}0.06 \\ \cline{1-1}
\textbf{13} & -0.3 & -0.3 & -0.3 & -0.06 & -0.06 & -0.06 & \cellcolor[HTML]{F2F2F2}0.08 & \cellcolor[HTML]{F2F2F2}0.08 & \cellcolor[HTML]{F2F2F2}0.08 & \cellcolor[HTML]{F2F2F2}0.08 & \cellcolor[HTML]{F2F2F2}0.08 & \cellcolor[HTML]{F2F2F2}0.08 & \cellcolor[HTML]{BFBFBF}0.33 & \cellcolor[HTML]{BFBFBF}0.33 & \cellcolor[HTML]{BFBFBF}0.33 & \cellcolor[HTML]{F2F2F2}0.06 & \cellcolor[HTML]{F2F2F2}0.06 & \cellcolor[HTML]{F2F2F2}0.06 \\ \cline{1-1}
\textbf{14} & -0.3 & -0.3 & -0.3 & -0.06 & -0.06 & -0.06 & \cellcolor[HTML]{F2F2F2}0.08 & \cellcolor[HTML]{F2F2F2}0.08 & \cellcolor[HTML]{F2F2F2}0.08 & \cellcolor[HTML]{F2F2F2}0.08 & \cellcolor[HTML]{F2F2F2}0.08 & \cellcolor[HTML]{F2F2F2}0.08 & \cellcolor[HTML]{BFBFBF}0.33 & \cellcolor[HTML]{BFBFBF}0.33 & \cellcolor[HTML]{BFBFBF}0.33 & \cellcolor[HTML]{F2F2F2}0.06 & \cellcolor[HTML]{F2F2F2}0.06 & \cellcolor[HTML]{F2F2F2}0.06 \\ \cline{1-1}
\textbf{15} & -0.16 & -0.16 & -0.16 & -0.04 & -0.04 & -0.04 & \cellcolor[HTML]{F2CEEF}0.06 & \cellcolor[HTML]{F2CEEF}0.06 & \cellcolor[HTML]{F2CEEF}0.06 & \cellcolor[HTML]{F2CEEF}0.06 & \cellcolor[HTML]{F2CEEF}0.06 & \cellcolor[HTML]{F2CEEF}0.06 & \cellcolor[HTML]{F2CEEF}0.06 & \cellcolor[HTML]{F2CEEF}0.06 & \cellcolor[HTML]{F2CEEF}0.06 & \cellcolor[HTML]{D86DCD}0.1 & \cellcolor[HTML]{D86DCD}0.1 & \cellcolor[HTML]{D86DCD}0.1 \\ \cline{1-1}
\textbf{16} & -0.16 & -0.16 & -0.16 & -0.04 & -0.04 & -0.04 & \cellcolor[HTML]{F2CEEF}0.06 & \cellcolor[HTML]{F2CEEF}0.06 & \cellcolor[HTML]{F2CEEF}0.06 & \cellcolor[HTML]{F2CEEF}0.06 & \cellcolor[HTML]{F2CEEF}0.06 & \cellcolor[HTML]{F2CEEF}0.06 & \cellcolor[HTML]{F2CEEF}0.06 & \cellcolor[HTML]{F2CEEF}0.06 & \cellcolor[HTML]{F2CEEF}0.06 & \cellcolor[HTML]{D86DCD}0.1 & \cellcolor[HTML]{D86DCD}0.1 & \cellcolor[HTML]{D86DCD}0.1 \\ \cline{1-1}
\textbf{17} & -0.16 & -0.16 & -0.16 & -0.04 & -0.04 & -0.04 & \cellcolor[HTML]{F2CEEF}0.06 & \cellcolor[HTML]{F2CEEF}0.06 & \cellcolor[HTML]{F2CEEF}0.06 & \cellcolor[HTML]{F2CEEF}0.06 & \cellcolor[HTML]{F2CEEF}0.06 & \cellcolor[HTML]{F2CEEF}0.06 & \cellcolor[HTML]{F2CEEF}0.06 & \cellcolor[HTML]{F2CEEF}0.06 & \cellcolor[HTML]{F2CEEF}0.06 & \cellcolor[HTML]{D86DCD}0.1 & \cellcolor[HTML]{D86DCD}0.1 & \cellcolor[HTML]{D86DCD}0.1 \\ \cline{1-1}
\hline
\end{tabular}
}
\caption{$\Bar{\mA} = \mA+\mI$, Graph Laplacian $\Bar{\mL}_{sym} = \mI-\Bar{\mD}^{-\frac{1}{2}}\Bar{\mA}\Bar{\mD}^{-\frac{1}{2}}$, and the pairwise similarity weight after aggregation is, $S(\Bar{\mL}_{sym},\mX) = \Bar{\mL}_{sym}\mX (\Bar{\mL}_{sym}\mX )^T$. The highlighted values show that this high-pass filter can properly classify heterophilic nodes. Here, nodes $0-2$ are mapped in the same space and have high similarity weight since they have the same dissimilar neighborhood.}
\label{tab:highpass}
\end{table}

\begin{table}[!htbp]
\centering
\resizebox{1.0\linewidth}{!}
{
\def\arraystretch{1.0}
\begin{tabular}{|l|llllllllllllllllll|}

\hline
\textbf{V/V} & \multicolumn{1}{l|}{\textbf{0}} & \multicolumn{1}{l|}{\textbf{1}} & \multicolumn{1}{l|}{\textbf{2}} & \multicolumn{1}{l|}{\textbf{3}} & \multicolumn{1}{l|}{\textbf{4}} & \multicolumn{1}{l|}{\textbf{5}} & \multicolumn{1}{l|}{\textbf{6}} & \multicolumn{1}{l|}{\textbf{7}} & \multicolumn{1}{l|}{\textbf{8}} & \multicolumn{1}{l|}{\textbf{9}} & \multicolumn{1}{l|}{\textbf{10}} & \multicolumn{1}{l|}{\textbf{11}} & \multicolumn{1}{l|}{\textbf{12}} & \multicolumn{1}{l|}{\textbf{13}} & \multicolumn{1}{l|}{\textbf{14}} & \multicolumn{1}{l|}{\textbf{15}} & \multicolumn{1}{l|}{\textbf{16}} & \multicolumn{1}{l|}{\textbf{17}} \\ \hline
\textbf{0} & \cellcolor[HTML]{A6C9EC}0.5 & \cellcolor[HTML]{A6C9EC}0.5 & \cellcolor[HTML]{A6C9EC}0.5 & \cellcolor[HTML]{DAE9F8}0.36 & \cellcolor[HTML]{DAE9F8}0.36 & \cellcolor[HTML]{DAE9F8}0.36 & \cellcolor[HTML]{DAE9F8}0.28 & \cellcolor[HTML]{DAE9F8}0.28 & \cellcolor[HTML]{DAE9F8}0.28 & \cellcolor[HTML]{DAE9F8}0.28 & \cellcolor[HTML]{DAE9F8}0.28 & \cellcolor[HTML]{DAE9F8}0.28 & \cellcolor[HTML]{DAE9F8}0.28 & \cellcolor[HTML]{DAE9F8}0.28 & \cellcolor[HTML]{DAE9F8}0.28 & \cellcolor[HTML]{DAE9F8}0.25 & \cellcolor[HTML]{DAE9F8}0.25 & \cellcolor[HTML]{DAE9F8}0.25 \\ \cline{1-1}
\textbf{1} & \cellcolor[HTML]{A6C9EC}0.5 & \cellcolor[HTML]{A6C9EC}0.5 & \cellcolor[HTML]{A6C9EC}0.5 & \cellcolor[HTML]{DAE9F8}0.36 & \cellcolor[HTML]{DAE9F8}0.36 & \cellcolor[HTML]{DAE9F8}0.36 & \cellcolor[HTML]{DAE9F8}0.28 & \cellcolor[HTML]{DAE9F8}0.28 & \cellcolor[HTML]{DAE9F8}0.28 & \cellcolor[HTML]{DAE9F8}0.28 & \cellcolor[HTML]{DAE9F8}0.28 & \cellcolor[HTML]{DAE9F8}0.28 & \cellcolor[HTML]{DAE9F8}0.28 & \cellcolor[HTML]{DAE9F8}0.28 & \cellcolor[HTML]{DAE9F8}0.28 & \cellcolor[HTML]{DAE9F8}0.25 & \cellcolor[HTML]{DAE9F8}0.25 & \cellcolor[HTML]{DAE9F8}0.25 \\ \cline{1-1}
\textbf{2} & \cellcolor[HTML]{A6C9EC}0.5 & \cellcolor[HTML]{A6C9EC}0.5 & \cellcolor[HTML]{A6C9EC}0.5 & \cellcolor[HTML]{DAE9F8}0.36 & \cellcolor[HTML]{DAE9F8}0.36 & \cellcolor[HTML]{DAE9F8}0.36 & \cellcolor[HTML]{DAE9F8}0.28 & \cellcolor[HTML]{DAE9F8}0.28 & \cellcolor[HTML]{DAE9F8}0.28 & \cellcolor[HTML]{DAE9F8}0.28 & \cellcolor[HTML]{DAE9F8}0.28 & \cellcolor[HTML]{DAE9F8}0.28 & \cellcolor[HTML]{DAE9F8}0.28 & \cellcolor[HTML]{DAE9F8}0.28 & \cellcolor[HTML]{DAE9F8}0.28 & \cellcolor[HTML]{DAE9F8}0.25 & \cellcolor[HTML]{DAE9F8}0.25 & \cellcolor[HTML]{DAE9F8}0.25 \\ \cline{1-1}
\textbf{3} & \cellcolor[HTML]{DAE9F8}0.36 & \cellcolor[HTML]{DAE9F8}0.36 & \cellcolor[HTML]{DAE9F8}0.36 & \cellcolor[HTML]{A6C9EC}0.62 & \cellcolor[HTML]{A6C9EC}0.62 & \cellcolor[HTML]{A6C9EC}0.62 & \cellcolor[HTML]{DAE9F8}0.23 & \cellcolor[HTML]{DAE9F8}0.23 & \cellcolor[HTML]{DAE9F8}0.23 & \cellcolor[HTML]{DAE9F8}0.23 & \cellcolor[HTML]{DAE9F8}0.23 & \cellcolor[HTML]{DAE9F8}0.23 & \cellcolor[HTML]{DAE9F8}0.23 & \cellcolor[HTML]{DAE9F8}0.23 & \cellcolor[HTML]{DAE9F8}0.23 & \cellcolor[HTML]{DAE9F8}0.16 & \cellcolor[HTML]{DAE9F8}0.16 & \cellcolor[HTML]{DAE9F8}0.16 \\ \cline{1-1}
\textbf{4} & \cellcolor[HTML]{DAE9F8}0.36 & \cellcolor[HTML]{DAE9F8}0.36 & \cellcolor[HTML]{DAE9F8}0.36 & \cellcolor[HTML]{A6C9EC}0.62 & \cellcolor[HTML]{A6C9EC}0.62 & \cellcolor[HTML]{A6C9EC}0.62 & \cellcolor[HTML]{DAE9F8}0.23 & \cellcolor[HTML]{DAE9F8}0.23 & \cellcolor[HTML]{DAE9F8}0.23 & \cellcolor[HTML]{DAE9F8}0.23 & \cellcolor[HTML]{DAE9F8}0.23 & \cellcolor[HTML]{DAE9F8}0.23 & \cellcolor[HTML]{DAE9F8}0.23 & \cellcolor[HTML]{DAE9F8}0.23 & \cellcolor[HTML]{DAE9F8}0.23 & \cellcolor[HTML]{DAE9F8}0.16 & \cellcolor[HTML]{DAE9F8}0.16 & \cellcolor[HTML]{DAE9F8}0.16 \\ \cline{1-1}
\textbf{5} & \cellcolor[HTML]{DAE9F8}0.36 & \cellcolor[HTML]{DAE9F8}0.36 & \cellcolor[HTML]{DAE9F8}0.36 & \cellcolor[HTML]{A6C9EC}0.62 & \cellcolor[HTML]{A6C9EC}0.62 & \cellcolor[HTML]{A6C9EC}0.62 & \cellcolor[HTML]{DAE9F8}0.23 & \cellcolor[HTML]{DAE9F8}0.23 & \cellcolor[HTML]{DAE9F8}0.23 & \cellcolor[HTML]{DAE9F8}0.23 & \cellcolor[HTML]{DAE9F8}0.23 & \cellcolor[HTML]{DAE9F8}0.23 & \cellcolor[HTML]{DAE9F8}0.23 & \cellcolor[HTML]{DAE9F8}0.23 & \cellcolor[HTML]{DAE9F8}0.23 & \cellcolor[HTML]{DAE9F8}0.16 & \cellcolor[HTML]{DAE9F8}0.16 & \cellcolor[HTML]{DAE9F8}0.16 \\ \cline{1-1}
\textbf{6} & \cellcolor[HTML]{DAE9F8}0.28 & \cellcolor[HTML]{DAE9F8}0.28 & \cellcolor[HTML]{DAE9F8}0.28 & \cellcolor[HTML]{DAE9F8}0.23 & \cellcolor[HTML]{DAE9F8}0.23 & \cellcolor[HTML]{DAE9F8}0.23 & \cellcolor[HTML]{F1A983}0.33 & \cellcolor[HTML]{F1A983}0.33 & \cellcolor[HTML]{F1A983}0.33 & \cellcolor[HTML]{FBE2D5}0.08 & \cellcolor[HTML]{FBE2D5}0.08 & \cellcolor[HTML]{FBE2D5}0.08 & \cellcolor[HTML]{FBE2D5}0.08 & \cellcolor[HTML]{FBE2D5}0.08 & \cellcolor[HTML]{FBE2D5}0.08 & \cellcolor[HTML]{FBE2D5}0.06 & \cellcolor[HTML]{FBE2D5}0.06 & \cellcolor[HTML]{FBE2D5}0.06 \\ \cline{1-1}
\textbf{7} & \cellcolor[HTML]{DAE9F8}0.28 & \cellcolor[HTML]{DAE9F8}0.28 & \cellcolor[HTML]{DAE9F8}0.28 & \cellcolor[HTML]{DAE9F8}0.23 & \cellcolor[HTML]{DAE9F8}0.23 & \cellcolor[HTML]{DAE9F8}0.23 & \cellcolor[HTML]{F1A983}0.33 & \cellcolor[HTML]{F1A983}0.33 & \cellcolor[HTML]{F1A983}0.33 & \cellcolor[HTML]{FBE2D5}0.08 & \cellcolor[HTML]{FBE2D5}0.08 & \cellcolor[HTML]{FBE2D5}0.08 & \cellcolor[HTML]{FBE2D5}0.08 & \cellcolor[HTML]{FBE2D5}0.08 & \cellcolor[HTML]{FBE2D5}0.08 & \cellcolor[HTML]{FBE2D5}0.06 & \cellcolor[HTML]{FBE2D5}0.06 & \cellcolor[HTML]{FBE2D5}0.06 \\ \cline{1-1}
\textbf{8} & \cellcolor[HTML]{DAE9F8}0.28 & \cellcolor[HTML]{DAE9F8}0.28 & \cellcolor[HTML]{DAE9F8}0.28 & \cellcolor[HTML]{DAE9F8}0.23 & \cellcolor[HTML]{DAE9F8}0.23 & \cellcolor[HTML]{DAE9F8}0.23 & \cellcolor[HTML]{F1A983}0.33 & \cellcolor[HTML]{F1A983}0.33 & \cellcolor[HTML]{F1A983}0.33 & \cellcolor[HTML]{FBE2D5}0.08 & \cellcolor[HTML]{FBE2D5}0.08 & \cellcolor[HTML]{FBE2D5}0.08 & \cellcolor[HTML]{FBE2D5}0.08 & \cellcolor[HTML]{FBE2D5}0.08 & \cellcolor[HTML]{FBE2D5}0.08 & \cellcolor[HTML]{FBE2D5}0.06 & \cellcolor[HTML]{FBE2D5}0.06 & \cellcolor[HTML]{FBE2D5}0.06 \\ \cline{1-1}
\textbf{9} & \cellcolor[HTML]{DAE9F8}0.28 & \cellcolor[HTML]{DAE9F8}0.28 & \cellcolor[HTML]{DAE9F8}0.28 & \cellcolor[HTML]{DAE9F8}0.23 & \cellcolor[HTML]{DAE9F8}0.23 & \cellcolor[HTML]{DAE9F8}0.23 & \cellcolor[HTML]{C1F0C8}0.08 & \cellcolor[HTML]{C1F0C8}0.08 & \cellcolor[HTML]{C1F0C8}0.08 & \cellcolor[HTML]{47D359}0.33 & \cellcolor[HTML]{47D359}0.33 & \cellcolor[HTML]{47D359}0.33 & \cellcolor[HTML]{C1F0C8}0.08 & \cellcolor[HTML]{C1F0C8}0.08 & \cellcolor[HTML]{C1F0C8}0.08 & \cellcolor[HTML]{C1F0C8}0.06 & \cellcolor[HTML]{C1F0C8}0.06 & \cellcolor[HTML]{C1F0C8}0.06 \\ \cline{1-1}
\textbf{10} & \cellcolor[HTML]{DAE9F8}0.28 & \cellcolor[HTML]{DAE9F8}0.28 & \cellcolor[HTML]{DAE9F8}0.28 & \cellcolor[HTML]{DAE9F8}0.23 & \cellcolor[HTML]{DAE9F8}0.23 & \cellcolor[HTML]{DAE9F8}0.23 & \cellcolor[HTML]{C1F0C8}0.08 & \cellcolor[HTML]{C1F0C8}0.08 & \cellcolor[HTML]{C1F0C8}0.08 & \cellcolor[HTML]{47D359}0.33 & \cellcolor[HTML]{47D359}0.33 & \cellcolor[HTML]{47D359}0.33 & \cellcolor[HTML]{C1F0C8}0.08 & \cellcolor[HTML]{C1F0C8}0.08 & \cellcolor[HTML]{C1F0C8}0.08 & \cellcolor[HTML]{C1F0C8}0.06 & \cellcolor[HTML]{C1F0C8}0.06 & \cellcolor[HTML]{C1F0C8}0.06 \\ \cline{1-1}
\textbf{11} & \cellcolor[HTML]{DAE9F8}0.28 & \cellcolor[HTML]{DAE9F8}0.28 & \cellcolor[HTML]{DAE9F8}0.28 & \cellcolor[HTML]{DAE9F8}0.23 & \cellcolor[HTML]{DAE9F8}0.23 & \cellcolor[HTML]{DAE9F8}0.23 & \cellcolor[HTML]{C1F0C8}0.08 & \cellcolor[HTML]{C1F0C8}0.08 & \cellcolor[HTML]{C1F0C8}0.08 & \cellcolor[HTML]{47D359}0.33 & \cellcolor[HTML]{47D359}0.33 & \cellcolor[HTML]{47D359}0.33 & \cellcolor[HTML]{C1F0C8}0.08 & \cellcolor[HTML]{C1F0C8}0.08 & \cellcolor[HTML]{C1F0C8}0.08 & \cellcolor[HTML]{C1F0C8}0.06 & \cellcolor[HTML]{C1F0C8}0.06 & \cellcolor[HTML]{C1F0C8}0.06 \\ \cline{1-1}
\textbf{12} & \cellcolor[HTML]{DAE9F8}0.28 & \cellcolor[HTML]{DAE9F8}0.28 & \cellcolor[HTML]{DAE9F8}0.28 & \cellcolor[HTML]{DAE9F8}0.23 & \cellcolor[HTML]{DAE9F8}0.23 & \cellcolor[HTML]{DAE9F8}0.23 & \cellcolor[HTML]{F2F2F2}0.08 & \cellcolor[HTML]{F2F2F2}0.08 & \cellcolor[HTML]{F2F2F2}0.08 & \cellcolor[HTML]{F2F2F2}0.08 & \cellcolor[HTML]{F2F2F2}0.08 & \cellcolor[HTML]{F2F2F2}0.08 & \cellcolor[HTML]{BFBFBF}0.33 & \cellcolor[HTML]{BFBFBF}0.33 & \cellcolor[HTML]{BFBFBF}0.33 & \cellcolor[HTML]{F2F2F2}0.06 & \cellcolor[HTML]{F2F2F2}0.06 & \cellcolor[HTML]{F2F2F2}0.06 \\ \cline{1-1}
\textbf{13} & \cellcolor[HTML]{DAE9F8}0.28 & \cellcolor[HTML]{DAE9F8}0.28 & \cellcolor[HTML]{DAE9F8}0.28 & \cellcolor[HTML]{DAE9F8}0.23 & \cellcolor[HTML]{DAE9F8}0.23 & \cellcolor[HTML]{DAE9F8}0.23 & \cellcolor[HTML]{F2F2F2}0.08 & \cellcolor[HTML]{F2F2F2}0.08 & \cellcolor[HTML]{F2F2F2}0.08 & \cellcolor[HTML]{F2F2F2}0.08 & \cellcolor[HTML]{F2F2F2}0.08 & \cellcolor[HTML]{F2F2F2}0.08 & \cellcolor[HTML]{BFBFBF}0.33 & \cellcolor[HTML]{BFBFBF}0.33 & \cellcolor[HTML]{BFBFBF}0.33 & \cellcolor[HTML]{F2F2F2}0.06 & \cellcolor[HTML]{F2F2F2}0.06 & \cellcolor[HTML]{F2F2F2}0.06 \\ \cline{1-1}
\textbf{14} & \cellcolor[HTML]{DAE9F8}0.28 & \cellcolor[HTML]{DAE9F8}0.28 & \cellcolor[HTML]{DAE9F8}0.28 & \cellcolor[HTML]{DAE9F8}0.23 & \cellcolor[HTML]{DAE9F8}0.23 & \cellcolor[HTML]{DAE9F8}0.23 & \cellcolor[HTML]{F2F2F2}0.08 & \cellcolor[HTML]{F2F2F2}0.08 & \cellcolor[HTML]{F2F2F2}0.08 & \cellcolor[HTML]{F2F2F2}0.08 & \cellcolor[HTML]{F2F2F2}0.08 & \cellcolor[HTML]{F2F2F2}0.08 & \cellcolor[HTML]{BFBFBF}0.33 & \cellcolor[HTML]{BFBFBF}0.33 & \cellcolor[HTML]{BFBFBF}0.33 & \cellcolor[HTML]{F2F2F2}0.06 & \cellcolor[HTML]{F2F2F2}0.06 & \cellcolor[HTML]{F2F2F2}0.06 \\ \cline{1-1}
\textbf{15} & \cellcolor[HTML]{DAE9F8}0.25 & \cellcolor[HTML]{DAE9F8}0.25 & \cellcolor[HTML]{DAE9F8}0.25 & \cellcolor[HTML]{DAE9F8}0.16 & \cellcolor[HTML]{DAE9F8}0.16 & \cellcolor[HTML]{DAE9F8}0.16 & \cellcolor[HTML]{F2CEEF}0.06 & \cellcolor[HTML]{F2CEEF}0.06 & \cellcolor[HTML]{F2CEEF}0.06 & \cellcolor[HTML]{F2CEEF}0.06 & \cellcolor[HTML]{F2CEEF}0.06 & \cellcolor[HTML]{F2CEEF}0.06 & \cellcolor[HTML]{F2CEEF}0.06 & \cellcolor[HTML]{F2CEEF}0.06 & \cellcolor[HTML]{F2CEEF}0.06 & \cellcolor[HTML]{D86DCD}0.6 & \cellcolor[HTML]{D86DCD}0.6 & \cellcolor[HTML]{D86DCD}0.6 \\ \cline{1-1}
\textbf{16} & \cellcolor[HTML]{DAE9F8}0.25 & \cellcolor[HTML]{DAE9F8}0.25 & \cellcolor[HTML]{DAE9F8}0.25 & \cellcolor[HTML]{DAE9F8}0.16 & \cellcolor[HTML]{DAE9F8}0.16 & \cellcolor[HTML]{DAE9F8}0.16 & \cellcolor[HTML]{F2CEEF}0.06 & \cellcolor[HTML]{F2CEEF}0.06 & \cellcolor[HTML]{F2CEEF}0.06 & \cellcolor[HTML]{F2CEEF}0.06 & \cellcolor[HTML]{F2CEEF}0.06 & \cellcolor[HTML]{F2CEEF}0.06 & \cellcolor[HTML]{F2CEEF}0.06 & \cellcolor[HTML]{F2CEEF}0.06 & \cellcolor[HTML]{F2CEEF}0.06 & \cellcolor[HTML]{D86DCD}0.6 & \cellcolor[HTML]{D86DCD}0.6 & \cellcolor[HTML]{D86DCD}0.6 \\ \cline{1-1}
\textbf{17} & \cellcolor[HTML]{DAE9F8}0.25 & \cellcolor[HTML]{DAE9F8}0.25 & \cellcolor[HTML]{DAE9F8}0.25 & \cellcolor[HTML]{DAE9F8}0.16 & \cellcolor[HTML]{DAE9F8}0.16 & \cellcolor[HTML]{DAE9F8}0.16 & \cellcolor[HTML]{F2CEEF}0.06 & \cellcolor[HTML]{F2CEEF}0.06 & \cellcolor[HTML]{F2CEEF}0.06 & \cellcolor[HTML]{F2CEEF}0.06 & \cellcolor[HTML]{F2CEEF}0.06 & \cellcolor[HTML]{F2CEEF}0.06 & \cellcolor[HTML]{F2CEEF}0.06 & \cellcolor[HTML]{F2CEEF}0.06 & \cellcolor[HTML]{F2CEEF}0.06 & \cellcolor[HTML]{D86DCD}0.6 & \cellcolor[HTML]{D86DCD}0.6 & \cellcolor[HTML]{D86DCD}0.6 \\ \cline{1-1}
\hline
\end{tabular}
}
\caption{$\Bar{\mA} = \mA+\mI$, Symmetric normalized adjacency matrix of $\gG$, $\Bar{\mA}_{sym} =\Bar{\mD}^{-\frac{1}{2}}\Bar{\mA}\Bar{\mD}^{-\frac{1}{2}}$, and the pairwise similarity weight after aggregation is, $S(\Bar{\mA}_{sym},\mX) = \Bar{\mA}_{sym}\mX (\Bar{\mA}_{sym}\mX )^T$. The highlighted values show that this low-pass filter cannot properly classify heterophilic nodes. Here, nodes $0-2$ have positive similarity weights to all other nodes.}
\label{tab:lowpass}
\end{table}

In \ags{} based sampler, along with similarity based sampling, we also sample the neighborhood of vertices based on feature diversity. The feature diversity-based sampler will attempt to create the same diverse neighborhood of two heterophilic nodes, thus potentially mapping in the same vector space. 
In a low-pass filter, however, the neighboring nodes of the same label as the ego node with similar features help to map them in the same space; therefore, the feature-similarity-based sampler will work in conjunction with the low-pass filter.
\ags{} uses a dual channel of feature-similar and feature-diverse samples for each node in the graph and adaptively learns a better representation.

\section{Appendix: Additional Details on \ags{}}
\label{sec:apadditonaldetails}

\subsection{Probability Mass Functions}

Fig.~\ref{fig:densityfunction} shows some possible Probability Mass Functions that take the rank of neighbors as input and return weights for sampling probabilities.

\begin{figure}[!htbp]
\centering
\includegraphics[width=0.9\linewidth]{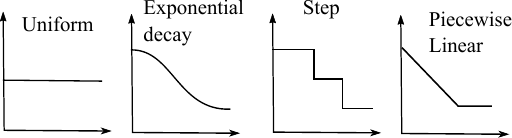}
\caption{Probability Mass Functions (PMFs) for weights used later for selection probabilities}
\label{fig:densityfunction}
\end{figure}

\subsection{Computation of sampling probability for similar samples}
\label{subsec:knnweights}
Algorithm~\ref{alg:nnweights} shows how the similarity based sampling algorithm with the \emph{step} function is used to rank and pre-compute the probability distribution. Here we can use any known similarity functions or a learned function. 
The pseudocode shows how step functions with three different selection probabilities are used to compute sampling probabilities.

\begin{algorithm}[H]
\caption{RankBySimilarity($\gG, \mX, \gS, \gP, \mathrm{*params}$)}

\SetKwInOut{Input}{Input}
\Input{Graph $\gG(\gV, \gE)$, Feature matrix $\mX \in \R^{n\times f}$, Similarity function $\gS$, Probability Mass Function, $\gP$}
\SetKwInOut{Output}{Output}

\begin{algorithmic}[1]

\IF{$\gP= \mathrm{'step'}$}
\STATE $\mathrm{k_1 = params['k_1'], k_2 = params['k_2']}$
\STATE $\mathrm{\lambda_1 = params['\lambda_1'],\lambda_2 = params['\lambda_2'],\lambda_3 = params['\lambda_3']}$
\ENDIF

\FOR{ $u \in \gV$}
    \STATE $\gN(u)$ is the neighbors of $u$   
    \STATE Similarity from $u$ to its neighbor, $S_u = \mathrm{\gS(\mX[u], \mX[\gN(u)])}$


    \STATE $\mathrm{top_{k1} = \floor{k_1\times|\gN(u)|}}$
    \STATE $\mathrm{top_{k2} = \floor{k_2\times|\gN(u)|}}$
    \STATE $\mathrm{l = |\gN(u)|-top_{k1}-top_{k2}}$
    
    \STATE Partition the similarities at $\mathrm{top_{k1}, top_{k2}}$ position in descending order of $S_u$ and rank them
    
    
    \STATE Weight $\mathrm{W_{u,v} = \lambda_1}$ to $\mathrm{top_{k1}}$ neighbors, $v \in \gN(u)$ 
    \STATE Weight $\mathrm{W_{u,v} = \lambda_2}$ to next $\mathrm{top_{k2}}$, $v \in \gN(u)$ 
    \STATE Weight $\mathrm{W_{u,v} = \lambda_3}$ to residual $l$ neighbors, $v \in \gN(u)$
    
    \STATE Sampling probability of an edge, $P_{u,v} = W_{u,v}/\sum_{v\in \gN(u)} W_{u,v}$
\ENDFOR
\STATE \Return $P$
\end{algorithmic}
\label{alg:nnweights}
\end{algorithm}


    

\subsection{Computation of sampling probability for diverse samples}
\label{subsec:submodular_funcs}
We can use submodular functions to rank the neighbors of a vertex. 
Multiple submodular functions are available for finding a useful subset from a larger set;  e.g., \texttt{`MaxCoverage`, `FeatureBased`, `GraphBased`, `FacilityLocation`}, etc. 
Algorithm~\ref{alg:subweights} shows the pseudocode for computing sampling probabilities using ranking by diversity. 
Algorithm~\ref{alg:lazygreddy} shows the pseudocode of a simplified Lazy 
Greedy algorithm using the facility location function. For distance measures such as   `Euclidean`, \texttt{distance.max()-distance} is used to convert into similarity measure. Note that the ego node $u$ is already taken as an initial set for submodular selections, and the remaining nodes are to be selected from the neighbors.


\begin{algorithm}[!htbp]
\caption{RankByDiversity ($\gG, \mX, \gV, \gS, \gP, \mathrm{*params}$)}

\SetKwInOut{Output}{Output}
\SetKwInOut{Input}{Input}

\Input{Graph $\gG(\gV, \gE)$, Feature matrix $\mX \in \R^{n\times f}$, Similarity function $\gS$, Probability Mass Function, $\gP$}


\begin{algorithmic}[1]
\FOR{ $u \in \gV$}
    \STATE $\gN(u)$ is the neighbors of $u$
    \STATE Ranks, $\mathrm{R_u = LazyGreedy(\gN(u), u, X_u|X_{\gN(u)})}$
    \STATE Assign weights $W_{u,v}$ for $v \in \gN(u)$ from the ranks $R_u$ using $\gP$. 
    \STATE Sampling probability of neighbors, $P_{u,v} = W_{u,v}/\sum_{v\in \gN(u)} W_{u,v}$ 
\ENDFOR
\STATE \Return $P$
\end{algorithmic}
\label{alg:subweights}
\end{algorithm}

\begin{algorithm}[!htbp]
\caption{LazyGreedy ($\gN(u), u, \mX', \gS$)}

\SetKwInOut{Input}{Input}

\Input{$\gN(u)$ is the neighbors of ego node $u$, $\mX'\in \R^{|\gN(u)\cup \{u\}|}$ is the feature of $\gN(u)\cup \{u\}$, Similarity function $\gS$}

\begin{algorithmic}[1]

\STATE $\mathrm{kernel = compute\_pairwise\_similarity(\mX', \gS)}$
\STATE Initial set, $S =\{u\}$
\STATE Ground set, $Y$ = $\{u\} \cup \gN(u)$
\STATE Gain of set $S$ is $\mathrm{S_{gain} = Gain (S, Y, kernel)}$ \COMMENT {Eq.\ref{eq:facility_location_function} is used}

\STATE Initialize max-heap $H$, where key is the marginal gain w.r.t $S$ and the element is neighboring vertex $v$

\WHILE{$H\neq\emptyset$}
    \STATE  $\mathrm{(gain, v)=H.pop()}$
    \STATE  $\mathrm{gain_{v} = Gain(S\cup \{v\}, Y, kernel)-S_{gain}}$ 
    \STATE $\mathrm{(gain_2,\cdot ) = H.top()}$    
    \IF {$\mathrm{gain_v\ge gain_2}$}        
        \STATE $\mathrm{S=S\cup\{v\}}$
        \STATE $\mathrm{S_{gain} = Gain (S, Y, kernel)}$
    \ELSE
        \STATE $\mathrm{H.push(gain_v, v)}$
    \ENDIF    
\ENDWHILE

\STATE Rank the vertices by marginal gain, $\mathrm{R_u = Rank(S)}$    
\STATE \Return $R_u$

\end{algorithmic}
\label{alg:lazygreddy}
\end{algorithm}

Ablation studies on different submodular functions are provided in Appendix~\ref{subsec:submdoularablation}, and more details on functions and implementations are provided in Apricot~\citep{schreiber2020apricot} (\href{https://apricot-select.readthedocs.io/en/latest/index.html}{url}). Similar to the pseudocode of \texttt{RankBySimilarity} in Algorithm~\ref{alg:nnweights}, we can also use a step function to reduce computation complexity by ranking only the top few items.

\subsection{Training and inference of a similarity learning model}
\label{subsec:edge_weight}

We use the Siamese~\citep{chicco2021siamese} model to predict the edge weights given two node features. Here are simplified equations showing the architecture used for the regression task of edge weight predictions. 

The network can be described by 
\begin{align*}
    \mathrm{mlp_1} & = \mathrm{MLP}(f, H_1),\\
    \mathrm{mlp_2} &= \mathrm{MLP}(H_1, H_2),\\
    \mathrm{mlp_3} &= \mathrm{MLP}(2*H_2,1). 
\end{align*}
The forward propagation is described by 
\begin{align*}
    \vx_1 &= \sigma_1(\mathrm{mlp_2}(\sigma_1(\mathrm{mlp_1}(\vx_u))))\\
    \vx_2 &=  \sigma_1(\mathrm{mlp_2}(\sigma_1(\mathrm{mlp_1}(\vx_v))))\\
    \vx &= \sigma_2(\mathrm{mlp_3}(\vx_1-\vx_2|
    \vx_1\times \vx_2)). \\   
\end{align*}
Here MLP corresponds to Multi-Layer Perceptron or linear neural network layers,  $\sigma_1$ refers to \emph{ReLU} activation function,  and $\sigma_2$ is \emph{Sigmoid} for predicting edge weight between $0$ to $1$. The \emph{MSELoss} function is used as an optimization function.

\begin{algorithm}[H]
\caption{LearnSimilarity ($\gG, \mX, \vy$)}

\SetKwInOut{Input}{Input}
\Input{Graph $\gG(\gV, \gE)$, Feature matrix $\mX \in \R^{n\times f}$, Label $\vy\in \sY^n$}

\begin{algorithmic}[1]

\STATE Extract the training subgraph, $\gG'$, containing training vertices only.

\STATE Initialize a model, $\mathrm{SIM_{\mW}}$

\WHILE{condition}
    \STATE  Batch $B$ contains an equal number of edges and non-edges in the $\gG'$
    \STATE Target label, $T = [(y_u = y_v): (u,v)\in B]$ 
    \STATE  Train, $\mathrm{SIM_{\mW}}$ using $B, T$
\ENDWHILE

\STATE Get edge weight, $\mathrm{\gE' = SIM_{\mW}(\gE)}$
\STATE \Return $\mathrm{SIM_{\mW}}, \gE'$
\end{algorithmic}
\label{alg:learnweights}
\end{algorithm}
Algorithm~\ref{alg:learnweights} shows the training process for the regression task of predicting edge weights between $0$ and  $1$. We form a mini-batch by sampling an equal number of edges and non-edges from the training subgraph. The target labels are computed using the labels of each endpoint of the edges.
For a few training nodes, training subgraphs mostly contain isolated vertices. In such a scenario, we can construct a new graph where all training nodes with the same class are connected and use that for training purposes.


\section{Appendix: AGS-GS: \ags{} with Graph Sampling}
\label{sec:graphsampling}

This section covers how feature-similarity and feature-diversity-based samplers can be used in graph sampling. The graph sampling paradigm is significantly faster than node sampling since it avoids neighborhood expansions.  

\subsection{\agsgsrw{}: Graph Sampling with Random Walk}
\label{subsec:agsgsrw}
A simple way to perform graph sampling is to use weighted random walks on the precomputed graphs, as shown in Fig.~\ref{fig:wrw_graphsampling}.

\begin{figure}[!htbp]
\centering
\subfloat[Weighted Random walk]{%
\includegraphics[width=1.0\linewidth]{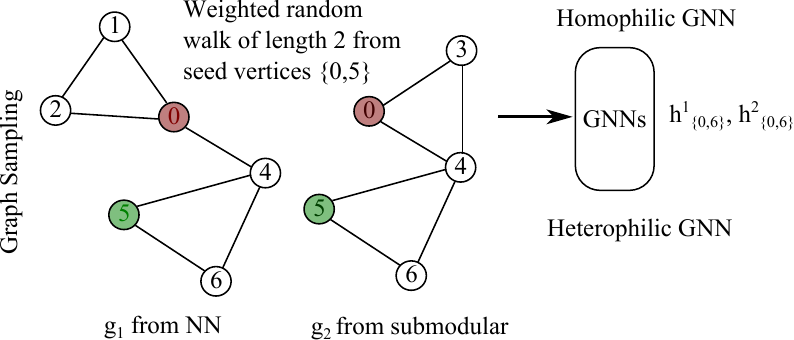}%
}
\caption{Graph Sampling using Weighted Random walk. The weights for sampling are computed based on the ranks by similarity and diversity.}
\label{fig:wrw_graphsampling}
\end{figure}

\begin{algorithm}[!htbp]
\caption{AGS-GS-RW ($\gG, \mX, \vy$) [Weighted Random Walk]}
\SetKwInOut{Input}{Input}
\Input{Graph $\gG(\gV, \gE)$, Feature matrix $\mX \in \R^{n\times f}$, Label, $\vy\in \sY^n$}
\label{alg:agsgraphsamplingrw}
\begin{algorithmic}[1]  
    \STATE  $\gS =  \mathrm{LearnSimilarity (\gG,\mX,\vy)}$
    \tcc{model for feature similarity, if an appropriate function not known, Alg.~\ref{alg:learnweights}}
    \STATE  $\gP =  \mathrm {'step'}$ \tcc{probability mass function}
    
    \STATE $\mathrm{R_1 = RankBySimilarity(\gG, \mX, \gS, \gP)}$ \tcc{Algorithm~\ref{alg:nnweights}}
    
    \STATE $\mathrm{R_1 = RankByDiversity(\gG, \mX, \gS, \gP)}$ \tcc{Algorithm~\ref{alg:subweights}}
    
    \FOR{$\mathrm{epoch}$ \textbf{in} $\mathrm{num\_epochs}$}
        \STATE $\mathrm{nodes = BatchOfSeedNodes(\gG)}$ \tcc{random nodes from $\gV$}
        \STATE $\mathrm{g_1 = SubgraphSample(R_1, nodes)}$ \tcc{for homophily}
        \STATE $\mathrm{g_2 = SubgraphSample(R_2, nodes)}$ \tcc{for heterophily}
        \STATE 
        \STATE - \tcc{take output for homophilic GNN, heterophilic GNN, and combined representation}
        $\mathrm{out_{g_1}, out_{g_2}, out_{com} = GNN_{\mW}(g_1, g_2)}$
        \STATE $l_1 = \mathrm{loss(out_{g_1},\vy[g_1.train])}$
        \STATE $l_2 = \mathrm{loss(out_{g_2},\vy[g_1.train])}$
        \STATE $l_3 = \mathrm{loss(out_{com},\vy[nodes.train])}$
        \STATE $l = l_1+l_2+l_3$

        \STATE Backpropagate from $l$ and update $\mW$        
    \ENDFOR
\end{algorithmic}
\end{algorithm}

Algorithm~\ref{alg:agsgraphsamplingrw} shows the pseudocode when the weighted random walk is performed. Here for a minibatch we first generate random seed nodes from $\gV$ and then start the random walk of  $k$ steps from these nodes to generate subgraphs. If we use a single channel, the training process can be similar to GSAINT. However, if two channels are used, the nodes of sampled subgraphs from two random walks will differ, and  there are  a few choices to compute loss: 
\begin{enumerate}[label=(\roman*)]
    \item Only compute the loss of training nodes from the seed nodes.
    \item Compute loss from common training nodes of the two sampled subgraphs.
    \item Combine the loss of training nodes from seed nodes, training nodes of the first and second subgraphs
\end{enumerate}

During inference, we can sample two subgraphs starting from the test nodes and check predictions from the combination network.

\subsection{\agsgsdis{}: \ags{} with edge-disjoint subgraph sampling}
\label{subsec:agsgsdisjoint}

\subsubsection{Edge Disjoint Sparse Graph for Graph Sampling}
We can use the pre-computed weighted graph from Section~\ref{subsec:precomputation} and go one step further in graph sampling and incorporate some heuristics into the sampling process. To do this, we take some high-quality sparse subgraphs with unique properties and compute edge-disjoint subgraphs. Subgraph selection examples include \texttt{Spanning Trees}, \texttt{$k$-Nearest Neighbor}, \texttt{$b$-matching subgraph}, and \texttt{Spectral Sparsifier}.
Algorithm~\ref{alg:disjointgraph} shows the process of computing edge-disjoint subgraphs and their corresponding weights. Note that the weight of the residual subgraph is set to zero.

Consider the heuristic of ensuring connectivity among sampled nodes. We 
compute $k$-edge-disjoint Maximum Spanning Trees (MSTs) for a graph, sample a few of them, and combine these subgraphs to get a sparse representation of the original graph. 
Note that the best-quality sparse graphs will be in in the order of selection
when similarity and diversity-based edge weights are considered for edge-disjoint subgraph computation. Therefore, the importance of the sparse graph quality should also be considered in the sparse graph sampling.
Fig.~\ref{fig:graphsampling_paradigm} shows how an edge-disjoint subgraph can be used to sample a sparse representation. In the example scenario, the original graph $\gG$ is split into $k = 4$ edge-disjoint subgraphs $\{(g_1, w_1), (g_1, w_1), (g_1, w_1),(g_4, w_4 )\}$. The $w_i$ is the weight of the subgraph used for sampling.
Here $g_4 = \gG-g_1-g_2-g_3$ is the residual subgraph. In the {\tt DisjointSubgraphSample} process, we randomly sample $k=2$ of these subgraphs from $g_1, g_2, g_3$, excluding the residual graph. We combine these sampled subgraphs and add random edges from the residual graph.

\begin{figure}[!htbp]
 \centering
 \includegraphics[width=1.0\linewidth, angle = 0 ]{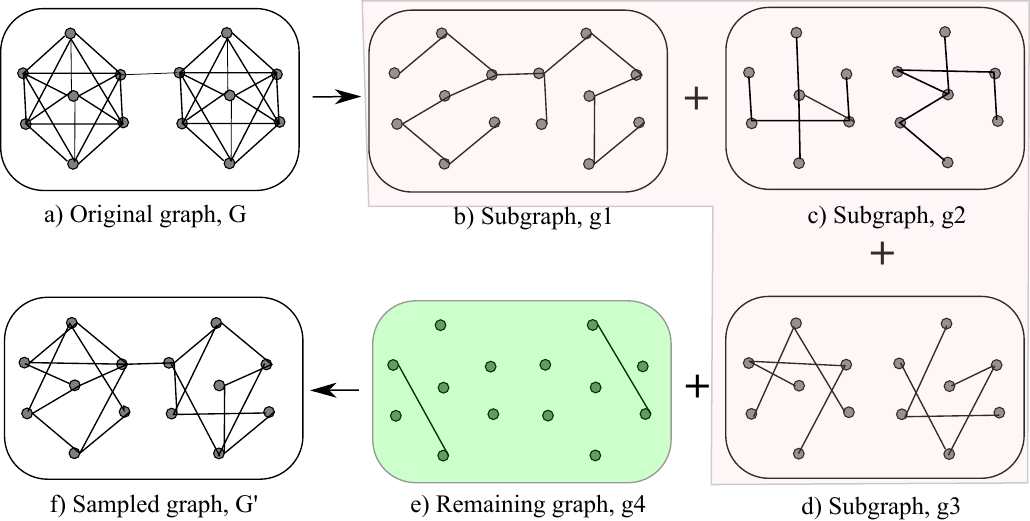}
 \caption{We split the original graph $G$ into a collection of disjoint edges (here, a spanning tree is used), and then we sample such subgraphs to get a sparse graph, $G' = g_1+g_2$ for an epoch/batch.}
 \label{fig:graphsampling_paradigm}
\end{figure}

\begin{algorithm}[!htbp]
\caption{Disjoint Graphs ($\gG, \mX, R, K, \mathrm{\gF = 'MST_{max}'}$)}
\SetKwInOut{Input}{Input}
\Input{Graph $\gG(\gV, \gE)$, Feature matrix $\mX \in \R^{n\times f}$, $R$ is the rank from similarity or diversity of the neighborhood, $K$ is the number of edge-disjoint graph, {\tt Subgraph} selection $\gF$ could be MST, $k$-NN, $b$-matching, or combinations of these}


\begin{algorithmic}[1]

\STATE Get edge weights, $W$, from rank $R$ using Probability Mass Function $\gP$

\STATE Graph collections, $\mathrm{\gG_{col}=\{\}}$, Graph Weights, $\mathrm{\gG_{w}=\{\}}$

\WHILE{True}
    \STATE  Compute, $\mathrm{Sub_\gG = \gF(\gG, W)}$
    \STATE  $\mathrm{\gG_{col} = \gG_{col}
    \cup Sub_\gG}$
    \STATE - \tcc{ make sure the sparse graph weight doesn't become too small}
    \STATE $\mathrm{\gG_w = 
    \gG_w \cup max(Weight(Sub_\gG), K\times1e^{-3})}$
    \STATE $\mathrm{\gG = \gG - Sub_\gG}$  
    
    \STATE $K = K-1$    
    \IF{$K = 0$}
        \STATE $\mathrm{\gG_{col} = \gG_{col} \cup Sub_\gG}$        
        \STATE $\mathrm{\gG_w = 
    \gG_w \cup \{0\}}$
        \STATE break
    \ENDIF
    
\ENDWHILE
\STATE \Return $\mathrm{\gG_{col}, \gG_w}$
\end{algorithmic}
\label{alg:disjointgraph}
\end{algorithm}

\subsubsection{AGS-GS-Disjoint: \ags{} with edge-disjoint graph sampling}
Algorithm~\ref{alg:agsgraphsampling} shows overall \ags{} as a graph sampling process for training when disjoint graphs sampling with heuristics are used. For simplicity, we include the pseudocode on a single channel; the dual-channel implementation is similar to the Random Walk version. 
Here, we sample a sparse graph at the start of the epoch and perform random walk-based graph sampling as a mini-batch process to ensure that the subgraph remains small for GNN iterations. The second step is unnecessary if the initial sampled sparse graph fits in the  GPU memory.
We can also directly perform random-walk on the collection of edge-disjoint subgraphs instead of the two-stage process.

\begin{algorithm}[!htbp]
\caption{AGS-GS-Disjoint ($\gG, \mX, \vy$) [Disjoint subgraph sample* (single channel)]}
\SetKwInOut{Input}{Input}
\Input{Graph $\gG(\gV, \gE)$, Feature matrix $\mX \in \R^{n\times f}$, Label, $\vy\in \sY^n$}

\BlankLine
\begin{algorithmic}[1]  

    \STATE  $\gS =  \mathrm{LearnSimilarity (\gG,\mX,\vy)}$
    \tcc{model for feature similarity, if an appropriate function not known, Alg.~\ref{alg:learnweights}}
    \STATE  $\gP =  \mathrm {'step'}$ \tcc{probability mass function}
    
    \STATE $\mathrm{R = RankBySimilarity(\gG, \mX, \gS, \gP)}$ \tcc{Algorithm~\ref{alg:nnweights}}
    
    \STATE $\mathrm{(\gG_{col}, \gG_w) = DisjointGraphs( \gG, \mX, R, K, \gF = `ST_{max}`)}$ \tcc{Algorithm~\ref{alg:disjointgraph}}
    
    \FOR{$\mathrm{epoch}$ \textbf{in} $\mathrm{num\_epochs}$}    
        \STATE - \tcc{sample a sparse graph using the process explained in Fig.~\ref{fig:graphsampling_paradigm}}
        
        \STATE $\mathrm{\gG' = DisjointSubgraphSample(\gG_{col}, \gG_w, *params)}$

        \FOR{$\mathrm{mini\_batch}$ \textbf{in} $\mathrm{num\_batches}$}  
            \STATE $\mathrm{nodes = BatchOfSeedNodes(\gG')}$
            \STATE $\mathrm{g = SubgraphSample(\gG',nodes)}$\tcc{weighted random walk on the sparse graph}
            \STATE $\mathrm{output = GNN_{\mW}(g)}$
            \STATE $l = \mathrm{loss(output, \vy[g.train])}$
            \STATE Backpropagate from $l$ and update $\mW$
        \ENDFOR
    \ENDFOR
\end{algorithmic}
\label{alg:agsgraphsampling}
\end{algorithm}

\section{Appendix: Dataset Details}
\label{sec:appendixdataset}

\subsection{Dataset Description}
\label{subsec:appdatasetdescription}

\subsubsection{Dataset with Heterophily:} \texttt{Cornell, Texas}, and \texttt{Wisconsin} are three sub-datasets from the \texttt{WebKB} set~\citep{pei2020geom}. \texttt{Chameleon} and \texttt{Squirrel} are web page datasets collected from Wikipedia~\citep{rozemberczki2021multi}. The \texttt{actor} is an actor co-occurrence network~\citep{pei2020geom}, where nodes are actors and edges represent two actors whose names occur together on the same Wikipedia page. 
Recently ~\citep{lim2021large} collected and released a series of large-scale benchmark datasets from diverse areas which include a Wikipedia web page dataset (\texttt{Wiki}), two citation network datasets (\texttt{ArXiv-year} and \texttt{Snap-Patents}), online social network datasets (\texttt{\justify Penn94, Pokec, Genius}, and \texttt{Twitch-Gamers}).
We also use the dataset from ~\citep{lim2021large}, which includes \texttt{\justify reed98, amherst41, cornell5, Yelp} and \texttt{johnshopkins55}. 
In~\citep{platonov2023critical}, the authors proposed new benchmark datasets, \texttt{\justify Roman-empire, Amazon-ratings, Minesweeper, Tolokers}, and \texttt{Questions} to show the effect of homophilic and heterophilic GNNs.
We also converted a few multi-label multiclass classification problems (\texttt{\justify  Flickr, AmazonProducts}) to single-label multiclass node classification problems. Through this process,  their homophily values are reduced,   rendering them heterophilic.

\subsubsection{Dataset with Homophily:}
\texttt{\justify Cora~\citep{sen2008collective}, Citeseer}~\citep{giles1998citeseer} and \texttt{pubmed}~\citep{namata2012query} are three classic \emph{paper citation network} benchmarks. For \texttt{Cora} and \texttt{Citeseer}, class labels correspond to paper topics, while for \texttt{pubmed}, labels correspond to  the type of diabetes.  
\texttt{Coauthor-cs} and \texttt{Coauthor-physics}~\citep{shchur2018pitfalls} are coauthorship networks, where nodes represent authors and an edge connects two authors if they co-authored a paper. Node labels correspond to fields of study. The \texttt{Amazon-computers} and \texttt{Amazon-photo}~\citep{shchur2018pitfalls} are co-purchasing networks with nodes representing products and edges representing two products that are frequently bought together. Here labels correspond to product categories. \texttt{Reddit}~\citep{hamilton2017inductive} is one of the most popular large homophilic benchmark datasets. The graph contains Reddit posts belonging to different communities; the node label is the community, or subreddit, that a post belongs to. \texttt{Reddit2}~\citep{zeng2019graphsaint} is the sparse version of the original \texttt{Reddit} graph with average degree $100$ instead of the original $492$. 
The \texttt{dblp}~\citep{fu2020magnn} is from a subset of the DBLP computer science bibliography website.


\subsection{Homophily measures of Dataset}
\label{subsec:appendixhomophily_measure}
\begin{table*}[t]
\centering
\resizebox{0.8\linewidth}{!}{
\def\arraystretch{1.2}
\begin{tabular}{c|rrcc|ccc>{\columncolor[gray]{0.8}}cccccc}
\toprule
{Dataset} &
  {${N}$} &
  {${E}$} &
  {${d}$} &
  {${C}$} &
  {${\gH_{n}}$} &
  {${\gH_{e}}$} &
  {${\gH_{i}}$} &
  {${\gH_{a}}\downarrow$} &
  {${\gH_{agg}^{LP}}$} &
  {${\gH_{agg}^{HP}}$} &
  {${\gH_{t}}$} &
  {${\gH_{u}}$} &
  {${\gA}$} \\
  \midrule
Cornell        & 183     & 298         & 1.63   & 5   & 0.11 & 0.13 & 0.15  & -0.42 & 0.62  & 0.61  & 0.05  & 0.01  & -0.38 \\
Texas          & 183     & 325         & 1.78   & 5   & 0.07 & 0.11 & 0.00  & -0.26 & 0.55  & 0.56  & 0.09  & 0.00  & -0.35 \\
Wisconsin      & 251     & 515         & 2.05   & 5   & 0.17 & 0.20 & 0.08  & -0.20 & 0.50  & 0.51  & 0.06  & 0.00  & -0.27 \\
reed98         & 962     & 37,624      & 39.11  & 2   & 0.45 & 0.45 & $-$ & -0.10 & 0.55  & 0.54  & $-$ & $-$ & 0.02  \\
amherst41      & 2,235   & 181,908     & 81.39  & 2   & 0.47 & 0.46 & $-$ & -0.07 & 0.50  & 0.58  & $-$ & $-$ & 0.06  \\
penn94         & 41,554  & 2,724,458   & 65.56  & 2   & 0.48 & 0.47 & $-$ & -0.06 & $-$ & $-$ & $-$ & $-$ & 0.00  \\
Roman-empire   & 22,662  & 65,854      & 2.91   & 18  & 0.05 & 0.05 & 0.02  & -0.05 & $-$ & $-$ & 0.31  & 0.00  & -0.03 \\
cornell5       & 18,660  & 1,581,554   & 84.76  & 2   & 0.48 & 0.48 & $-$ & -0.04 & $-$ & $-$ & $-$ & $-$ & 0.02  \\
Squirrel       & 5,201   & 217,073     & 41.74  & 5   & 0.09 & 0.22 & 0.04  & -0.01 & 0.43  & 0.55  & 0.68  & 0.28  & 0.37  \\
johnshopkins55 & 5,180   & 373,172     & 72.04  & 2   & 0.50 & 0.50 & $-$ & 0.00  & 0.49  & 0.56  & $-$ & $-$ & 0.08  \\
AmazonProducts &
  1,569,960 &
  264,339,468 &
  168.37 &
  107 &
  0.48 &
  0.09 &
  0.01 &
  0.01 &
  $-$ &
  $-$ &
  $-$ &
  $-$ &
  -0.03 \\
Actor          & 7,600   & 30,019      & 3.95   & 5   & 0.16 & 0.22 & 0.01  & 0.01  & 0.36  & 0.36  & 0.34  & 0.11  & -0.11 \\
Minesweeper    & 10,000  & 78,804      & 7.88   & 2   & 0.68 & 0.68 & 0.01  & 0.01  & 0.58  & 0.59  & $-$ & 0.86  & 0.39  \\
Questions      & 48,921  & 307,080     & 6.28   & 2   & 0.90 & 0.84 & 0.08  & 0.02  & $-$ & $-$ & $-$ & 0.23  & -0.15 \\
Chameleon      & 2,277   & 36,101      & 15.85  & 5   & 0.10 & 0.24 & 0.06  & 0.03  & 0.51  & 0.53  & 0.55  & 0.17  & -0.11 \\
Tolokers       & 11,758  & 1,038,000   & 88.28  & 2   & 0.63 & 0.59 & 0.18  & 0.09  & $-$ & $-$ & $-$ & 0.92  & -0.08 \\
Flickr         & 89,250  & 899,756     & 10.08  & 7   & 0.32 & 0.32 & 0.07  & 0.09  & $-$ & $-$ & 0.47  & 0.01  & -0.04 \\
Yelp           & 716,847 & 13,954,819  & 19.47  & 100 & 0.52 & 0.44 & 0.02  & 0.12  & $-$ & $-$ & $-$ & $-$ & 0.10  \\
Amazon-ratings & 24,492  & 186,100     & 7.60   & 5   & 0.38 & 0.38 & 0.13  & 0.14  & $-$ & $-$ & 0.37  & 0.03  & -0.09 \\
genius         & 421,961 & 984,979     & 2.33   & 2   & 0.48 & 0.62 & 0.22  & 0.17  & $-$ & $-$ & $-$ & 0.03  & -0.11 \\
\hline
cora           & 19,793  & 126,842     & 6.41   & 70  & 0.59 & 0.57 & 0.50  & 0.56  & $-$ & $-$ & 0.09  & 0.00  & -0.05 \\
CiteSeer       & 3,327   & 9,104       & 2.74   & 6   & 0.71 & 0.74 & 0.63  & 0.67  & 0.72  & 0.73  & 0.03  & 0.00  & 0.05  \\
dblp           & 17,716  & 105,734     & 5.97   & 4   & 0.81 & 0.83 & 0.65  & 0.68  & $-$ & $-$ & 0.05  & 0.01  & -0.01 \\
Computers      & 13,752  & 491,722     & 35.76  & 10  & 0.79 & 0.78 & 0.70  & 0.68  & $-$ & $-$ & 0.15  & 0.00  & -0.06 \\
pubmed         & 19,717  & 88,648      & 4.50   & 3   & 0.79 & 0.80 & 0.66  & 0.69  & $-$ & $-$ & 0.04  & 0.05  & -0.04 \\
Reddit         & 232,965 & 114,615,892 & 491.99 & 41  & 0.81 & 0.76 & 0.65  & 0.74  & $-$ & $-$ & 0.28  & 0.00  & 0.11  \\
cora\_ml       & 2,995   & 16,316      & 5.45   & 7   & 0.81 & 0.79 & 0.74  & 0.75  & 0.84  & 0.78  & 0.06  & 0.00  & -0.07 \\
Cora           & 2,708   & 10,556      & 3.90   & 7   & 0.83 & 0.81 & 0.77  & 0.77  & 0.85  & 0.81  & 0.04  & 0.00  & -0.07 \\
Reddit2        & 232,965 & 23,213,838  & 99.65  & 41  & 0.81 & 0.78 & 0.69  & 0.77  & $-$ & $-$ & 0.21  & 0.00  & 0.10  \\
CS             & 18,333  & 163,788     & 8.93   & 15  & 0.83 & 0.81 & 0.75  & 0.78  & $-$ & $-$ & 0.07  & 0.00  & 0.11  \\
Photo          & 7,650   & 238,162     & 31.13  & 8   & 0.84 & 0.83 & 0.77  & 0.79  & 0.77  & 0.81  & 0.13  & 0.00  & -0.04 \\
Physics        & 34,493  & 495,924     & 14.38  & 5   & 0.92 & 0.93 & 0.85  & 0.87  & $-$ & $-$ & 0.04  & 0.00  & 0.20  \\
citeseer       & 4,230   & 10,674      & 2.52   & 6   & 0.96 & 0.95 & 0.94  & 0.94  & 0.96  & 0.95  & 0.00  & 0.00  & -0.08\\
\bottomrule
\end{tabular}
}
\caption{Properties of all datasets used in the experiments and their different homophily measure. The datasets are sorted (in ascending order) based on their adjusted homophily, $h_{adj}$. The '$-$' represents the value that is not computed due to memory complexity or an error in computation. The datasets are termed heterophily and homophily based on the $h_{adj}$ cutoff $0.5$}
\label{tab:dataset_description}
\end{table*}

Here we provide more homophily measures than included in the paper, and provide additional details. 

\begin{enumerate}
    
\item {\em Edge homophily}~\citep{zhu2020beyond}:  is defined as the fraction of edges in a graph that join nodes that have the same class label 
\begin{equation}
\gH_{e} = \frac{|(u,v):(u,v)\in \gE  \land y_u = y_v|}{|\gE|}.    
\end{equation}
Recall that $y_u, y_v$ represent labels of node $u$ and $v$, respectively.

\item {\em Node homophily}~\citep{pei2020geom}: 
\begin{equation}
\gH_{n} = \frac{1}{|\gV|}\sum_{u\in V}\frac{|(u,v): v\in N(u)  \land y_u = y_v|}{|\gN(u)|}.
\end{equation}
Here $\gN(u)$ represents neighbors of node $u$.

\item {\em Class insensitive edge homophily}~\citep{lim2021large}: Edge homophily is modified to be insensitive to the number of classes and size of each class as follows. 
\begin{equation}
\gH_{i} = \frac{1}{c-1}\sum_{k=1}^{c} max(0,\gH_k-\frac{|c_k|}{|V|}), 
\end{equation}
where $c$ denotes the number of classes, $|c_k|$ denotes the number of nodes of class $k$, and $\gH_k$ denotes the edge homophily ratio of nodes of class $k$.

\end{enumerate}

These homophily score range between $[0,1]$.
The definitions of homophily scores given above  may not represent heterophilic graphs well; hence we consider a few more. In addition, we define two more intuitive definitions to understand the distribution of labels in the neighborhood of a vertex.

\begin{enumerate}
\item {\em Aggregated Homophily}~\citep{luan2022revisiting}: is the only measure that considers associated node features. It is defined as
\begin{align}
\centering
\gH_{agg} &= [2S_{agg}(S(\mA, \mX))-1]_+\\ 
S = S(\mA, \mX) &= (\mA\mX) \times (\mA\mX)^T\\
S_{agg}(S(\mA, \mX)) &= 
\frac{1}{|V|}\{v|\mu_{u}(S_{v,u}|y_u = y_v)\ge \mu_{u}(S_{v,u}|y_u \ne y_v).\}
\end{align}
Here 
$S(\mA, \mX)$ is an $N\times N$ matrix, and the method computes the average similarity weights for edges that share the same label for each row. Moreover, it counts the expected number of correctly classified nodes with a single-layer GNN.

\item {\em Adjusted Homophily}~\citep{platonov2022characterizing} is defined as 
\begin{equation}
\gH_{a} = \frac{\gH_{e}-\sum_{k=1}^{C} D_k^2/(2|\gE|^2)}{1-\sum_{k=1}^C D_k^2/2|\gE|^2}; 
\end{equation}
here $d_v$ denotes the degree of node $v$ and  $D_k = \sum_{v:y_v=k}d_v$.  

\item {\em Entropy Score}: A vertex may have edges connecting it to clusters with labels that differ from its own, and we evaluate if these edges are connected to specific clusters or distributed across all cluster labels. Entropy is an excellent measure for this, and it is defined as
\begin{equation}
\gH_{t} = \sum_{u\in V}\mathrm{Entropy}(u)/ |\gV|.
\end{equation}

\item {\em Uniformity Score}: Another metric with the same inspiration comes from checking the distribution of neighboring node labels of the vertex.
\begin{equation} 
\gH_{u} = \sum_{u\in V}\mathrm{Uniformity(u)}/ |\gV|, 
\end{equation}
where $\mathrm{Uniformity(u)}$ does the Chi-square test with expected uniform distribution with $95\%$ confidence. 

\end{enumerate}

Table~\ref{tab:dataset_description} shows different homophily measures of all the datasets used for experimentation. Here, $\gA$ refers to the degree assortativity coefficient of the graph.
Assortivity measures how similar the nodes are in terms of some degree. It is calculated as the correlation coefficient between the degree values for pairs of connected nodes. A positive assortivity means that nodes tend to connect with other nodes that have similar degrees. In contrast, a negative assortivity means that nodes tend to connect with other nodes with different degrees.

\begin{equation}
    \gA = \frac{\sum_{ij} (e_{ij} - p_i q_j)}{\sqrt{(\sum_i p_i^2 - (\sum_i p_i)^2) (\sum_j q_j^2 - (\sum_j q_j)^2)}}. 
\end{equation}
Here $e_{ij}$ is the fraction of edges in the network that join nodes of degrees $i$ and $j$, $p_i$ is the fraction of edges connecting nodes of degree $i$ to each other, and  $q_j$ is the fraction of edges that connect nodes of degree $j$ to each other.


\subsection{Additional Dataset Properties}
\label{subsec:additionalproperties}

\begin{table}[!htbp]
\centering
\resizebox{1.0\linewidth}{!}
{
\def\arraystretch{1.2}
\begin{tabular}{c|rcccll}
\toprule
{Dataset} &
  \multicolumn{1}{c}{{Feat.}} &
  {Isolated} &
  {Self-loop} &
  {Directed} &
  \multicolumn{1}{c}{{Tr./Va./Te.}} &
  \multicolumn{1}{l}{{Context}} \\
  \midrule
Cornell        & 1,703 & No  & Yes & No  & 0.48/0.32/0.20 & Web Pages        \\
Texas          & 1,703 & No  & Yes & No  & 0.48/0.32/0.20 & Web Pages        \\
Wisconsin      & 1,703 & No  & Yes & No  & 0.48/0.32/0.20 & Web Pages        \\
reed98         & 745   & No  & No  & Yes & 0.60/0.20/0.20 & Social Network                 \\
amherst41      & 1,193 & No  & No  & Yes & 0.60/0.20/0.20 & Social Network                 \\
penn94         & 4,814 & No  & No  & Yes & 0.47/0.23/0.23 & Social Network                 \\
Roman-empire   & 300   & No  & No  & Yes & 0.50/0.25/0.25 & Wikipedia                 \\
cornell5       & 4,735 & No  & No  & Yes & 0.60/0.20/0.20 & Web pages                 \\
Squirrel       & 2,089 & No  & Yes & No  & 0.48/0.32/0.20 & Wikipedia       \\
johnshopkins55 & 2,406 & No  & No  & Yes & 0.60/0.20/0.20 & Web Pages                 \\
AmazonProducts & 200   & Yes & Yes & Yes & 0.80/0.05/0.15 & Reviews                 \\
Actor          & 932   & No  & Yes & No  & 0.48/0.32/0.20 & Actors in Movies \\
Minesweeper    & 7     & No  & No  & Yes & 0.50/0.25/0.25 & Synthetic                 \\
Questions      & 301   & No  & No  & Yes & 0.50/0.25/0.25 & Yandex Q                 \\
Chameleon      & 2,325 & No  & Yes & No  & 0.48/0.32/0.20 & Wiki Pages       \\
Tolokers       & 10    & No  & No  & Yes & 0.50/0.25/0.25 & Toloka Platform                 \\
Flickr         & 500   & No  & No  & Yes & 0.50/0.25/0.25 & Social network                \\
Yelp           & 300   & No  & Yes & Yes & 0.75/0.15/0.10 & Review                 \\
Amazon-ratings & 300   & No  & No  & Yes & 0.50/0.25/0.25 & Co-purchase network                 \\
genius         & 12    & Yes & No  & No  & 0.60/0.20/0.20 & Social Network                 \\
cora           & 8,710 & No  & No  & Yes & 0.60/0.20/0.20 & Citation  Network               \\
CiteSeer       & 3,703 & Yes & No  & Yes & 0.04/0.15/0.30 & Citation  Network               \\
dblp           & 1,639 & No  & No  & Yes & 0.60/0.20/0.20 & Citation  Network               \\
Computers      & 767   & Yes & No  & Yes & 0.60/0.20/0.20 & Co-purchase Network              \\
pubmed         & 500   & No  & No  & Yes & 0.60/0.20/0.20 & Social Network                 \\
Reddit         & 602   & No  & No  & Yes & 0.66/0.10/0.24 & Social Network                 \\
cora\_ml       & 2,879 & No  & No  & Yes & 0.60/0.20/0.20 & Citation Network                \\
Cora           & 1,433 & No  & No  & Yes & 0.05/0.18/0.37 & Citation Network                \\
Reddit2        & 602   & Yes & No  & Yes & 0.66/0.10/0.24 & Social Network                 \\
CS             & 6,805 & No  & No  & Yes & 0.60/0.20/0.20 & Co-author Network                 \\
Photo          & 745   & Yes & No  & Yes & 0.60/0.20/0.20 & Co-purchase Network                 \\
Physics        & 8,415 & No  & No  & Yes & 0.60/0.20/0.20 & Co-author Network                 \\
citeseer       & 602   & No  & No  & Yes & 0.60/0.20/0.20 & Citation  Network              \\
\bottomrule
\end{tabular}
}
\caption{Additional details of the dataset used for experiments}
\label{tab:additonaldataset}
\end{table}

Additional dataset properties
are included in Table~\ref{tab:additonaldataset}. The table includes feature size, whether isolated nodes exist, whether self-loops are present, and whether the graphs are directed or undirected. We also include train, test, validation split provided from the benchmark or our random split size and the context from which  the graphs arise.


\subsection{Feature Similarity versus Label Matching}
\label{subsec:featurevslabel}
\begin{assumption}
    The nodes with {\em similar features} tend to have a {\em similar labels}~\citep{van2020survey}. 
\end{assumption} 
Here, we provide evidence to support this assumption. 
Table~\ref{tab:featurevslabel} shows the {\em Pearson correlation} between feature similarity and labels for our benchmark dataset. The Pearson correlation between two vectors $\vx$ and $\vy$ with $n$ elements is,
\begin{equation}
r = \frac{\sum_{i=1}^{n} (x_i - \bar{x})(y_i - \bar{y})}{\sqrt{\sum_{i=1}^{n}(x_i - \bar{x})^2}\sqrt{\sum_{i=1}^{n}(y_i - \bar{y})^2}}. 
\end{equation}
Here $\bar{x}$ and $\bar{y}$ are the means  of the vectors $\vx$ and $\vy$, respectively. In Table~\ref{tab:featurevslabel}, we show a comparison between {\em Cosine Similarity}, {\em Euclidean distance}, and {\em Learned model} for similarity measure. 

\begin{table*}[!htbp]
\centering
\resizebox{1.0\linewidth}{!}
{
\def\arraystretch{1.0}

\begin{tabular}{c|ccc|ccc|cccc}
\hline
 &
   {Rand.} &
   {Rand.} &
   {Rand.} &
   {Balance.} &
   {Balance.} &
   {Balance.} &
   {Edge} &
   {Edge} &
   {Edge} &
   {Edge} \\
 {Dataset} &
   {Learned} &
   {Cosine Sim.} &
  \multicolumn{1}{l|}{ {Euclidean$\times-1$}} &
   {Learned} &
   {Cosine Sim.} &
  \multicolumn{1}{l|}{ {Euclidean$\times-1$}} &
   {Learned} &
   {Learned (Balanced)} &
  \multicolumn{1}{l}{ {Cosine Sim}} &
   {Euclidean$\times-1$} \\
  \hline
Cornell &
  \cellcolor[HTML]{FFC7CE}{\color[HTML]{9C0006} 0.71} &
  0.29 &
  0.19 &
  \cellcolor[HTML]{FFC7CE}{\color[HTML]{9C0006} 0.72} &
  0.29 &
  0.19 &
  0.44 &
  \cellcolor[HTML]{FFC7CE}{\color[HTML]{9C0006} 0.63} &
  0.30 &
  0.09 \\
Texas &
  \cellcolor[HTML]{FFC7CE}{\color[HTML]{9C0006} 0.73} &
  0.28 &
  0.21 &
  \cellcolor[HTML]{FFC7CE}{\color[HTML]{9C0006} 0.77} &
  0.28 &
  0.21 &
  0.50 &
  0.52 &
  \cellcolor[HTML]{FFC7CE}{\color[HTML]{9C0006} 0.58} &
  0.47 \\
Wisconsin &
  \cellcolor[HTML]{FFC7CE}{\color[HTML]{9C0006} 0.77} &
  0.45 &
  0.29 &
  \cellcolor[HTML]{FFC7CE}{\color[HTML]{9C0006} 0.77} &
  0.45 &
  0.29 &
  \cellcolor[HTML]{FFC7CE}{\color[HTML]{9C0006} 0.48} &
  0.47 &
  0.39 &
  0.31 \\
reed98 &
  \cellcolor[HTML]{FFC7CE}{\color[HTML]{9C0006} 0.38} &
  -0.02 &
  -0.01 &
  \cellcolor[HTML]{FFC7CE}{\color[HTML]{9C0006} 0.37} &
  -0.02 &
  -0.01 &
  0.36 &
  \cellcolor[HTML]{FFC7CE}{\color[HTML]{9C0006} 0.36} &
  0.04 &
  0.05 \\
amherst41 &
  \cellcolor[HTML]{FFC7CE}{\color[HTML]{9C0006} 0.44} &
  -0.02 &
  -0.01 &
  \cellcolor[HTML]{FFC7CE}{\color[HTML]{9C0006} 0.45} &
  -0.02 &
  -0.01 &
  0.44 &
  \cellcolor[HTML]{FFC7CE}{\color[HTML]{9C0006} 0.44} &
  0.01 &
  0.01 \\
penn94 &
  \cellcolor[HTML]{FFC7CE}{\color[HTML]{9C0006} 0.37} &
  -0.03 &
  -0.03 &
  \cellcolor[HTML]{FFC7CE}{\color[HTML]{9C0006} 0.24} &
  0.17 &
  0.20 &
  0.41 &
  \cellcolor[HTML]{FFC7CE}{\color[HTML]{9C0006} 0.41} &
  0.07 &
  0.07 \\
Roman-empire &
  \cellcolor[HTML]{FFC7CE}{\color[HTML]{9C0006} 0.60} &
  0.26 &
  0.06 &
  \cellcolor[HTML]{FFC7CE}{\color[HTML]{9C0006} 0.34} &
  0.24 &
  0.09 &
  \cellcolor[HTML]{FFC7CE}{\color[HTML]{9C0006} 0.29} &
  0.22 &
  0.10 &
  0.06 \\
cornell5 &
  \cellcolor[HTML]{FFC7CE}{\color[HTML]{9C0006} 0.44} &
  -0.02 &
  -0.01 &
  \cellcolor[HTML]{FFC7CE}{\color[HTML]{9C0006} 0.40} &
  0.15 &
  0.19 &
  \cellcolor[HTML]{FFC7CE}{\color[HTML]{9C0006} 0.44} &
  0.44 &
  0.05 &
  0.05 \\
Squirrel &
  \cellcolor[HTML]{FFC7CE}{\color[HTML]{9C0006} 0.16} &
  0.02 &
  0.02 &
  \cellcolor[HTML]{FFC7CE}{\color[HTML]{9C0006} 0.15} &
  0.02 &
  0.02 &
  \cellcolor[HTML]{FFC7CE}{\color[HTML]{9C0006} 0.19} &
  0.18 &
  0.05 &
  -0.05 \\
johnshopkins55 &
  \cellcolor[HTML]{FFC7CE}{\color[HTML]{9C0006} 0.42} &
  -0.02 &
  -0.02 &
  \cellcolor[HTML]{FFC7CE}{\color[HTML]{9C0006} 0.31} &
  0.18 &
  0.21 &
  0.41 &
  \cellcolor[HTML]{FFC7CE}{\color[HTML]{9C0006} 0.42} &
  0.00 &
  0.00 \\
AmazonProducts &
  \cellcolor[HTML]{FFC7CE}{\color[HTML]{9C0006} 0.29} &
  0.02 &
  -0.01 &
  0.01 &
  \cellcolor[HTML]{FFC7CE}{\color[HTML]{9C0006} 0.02} &
  0.00 &
  \cellcolor[HTML]{FFC7CE}{\color[HTML]{9C0006} 0.23} &
  -0.11 &
  0.12 &
  - \\
Actor &
  \cellcolor[HTML]{FFC7CE}{\color[HTML]{9C0006} 0.22} &
  0.00 &
  -0.05 &
  \cellcolor[HTML]{FFC7CE}{\color[HTML]{9C0006} 0.20} &
  0.01 &
  0.02 &
  \cellcolor[HTML]{FFC7CE}{\color[HTML]{9C0006} 0.22} &
  0.22 &
  0.02 &
  -0.05 \\
Minesweeper &
  \cellcolor[HTML]{FFC7CE}{\color[HTML]{9C0006} 0.03} &
  0.00 &
  -0.01 &
  \cellcolor[HTML]{FFC7CE}{\color[HTML]{9C0006} 0.00} &
  0.00 &
  0.00 &
  \cellcolor[HTML]{FFC7CE}{\color[HTML]{9C0006} 0.09} &
  0.08 &
  0.03 &
  0.03 \\
Questions &
  \cellcolor[HTML]{FFC7CE}{\color[HTML]{9C0006} 0.29} &
  0.00 &
  0.03 &
  0.01 &
  \cellcolor[HTML]{FFC7CE}{\color[HTML]{9C0006} 0.04} &
  0.02 &
  \cellcolor[HTML]{FFC7CE}{\color[HTML]{9C0006} 0.17} &
  0.17 &
  0.00 &
  -0.03 \\
Chameleon &
  \cellcolor[HTML]{FFC7CE}{\color[HTML]{9C0006} 0.43} &
  0.04 &
  0.05 &
  \cellcolor[HTML]{FFC7CE}{\color[HTML]{9C0006} 0.43} &
  0.04 &
  0.05 &
  \cellcolor[HTML]{FFC7CE}{\color[HTML]{9C0006} 0.31} &
  0.29 &
  0.09 &
  -0.04 \\
Tolokers &
  \cellcolor[HTML]{FFC7CE}{\color[HTML]{9C0006} 0.22} &
  -0.04 &
  -0.04 &
  0.05 &
  0.06 &
  0.06 &
  0.20 &
  \cellcolor[HTML]{FFC7CE}{\color[HTML]{9C0006} 0.20} &
  -0.03 &
  -0.03 \\
Flickr &
  \cellcolor[HTML]{FFC7CE}{\color[HTML]{9C0006} 0.07} &
  0.03 &
  0.02 &
  \cellcolor[HTML]{FFC7CE}{\color[HTML]{9C0006} 0.02} &
  0.02 &
  0.00 &
  0.04 &
  0.05 &
  \cellcolor[HTML]{FFC7CE}{\color[HTML]{9C0006} 0.05} &
  0.02 \\
Yelp &
  \cellcolor[HTML]{FFC7CE}{\color[HTML]{9C0006} 0.48} &
  0.08 &
  0.09 &
  \cellcolor[HTML]{FFC7CE}{\color[HTML]{9C0006} 0.10} &
  0.03 &
  0.05 &
  0.35 &
  \cellcolor[HTML]{FFC7CE}{\color[HTML]{9C0006} 0.35} &
  0.20 &
  0.28 \\
Amazon-ratings &
  \cellcolor[HTML]{FFC7CE}{\color[HTML]{9C0006} 0.05} &
  0.00 &
  0.00 &
  0.00 &
  \cellcolor[HTML]{FFC7CE}{\color[HTML]{9C0006} 0.00} &
  0.00 &
  \cellcolor[HTML]{FFC7CE}{\color[HTML]{9C0006} 0.05} &
  -0.03 &
  -0.03 &
  -0.03 \\
genius &
  0.00 &
  \cellcolor[HTML]{FFC7CE}{\color[HTML]{9C0006} 0.46} &
  -0.02 &
  0.01 &
  \cellcolor[HTML]{FFC7CE}{\color[HTML]{9C0006} 0.28} &
  0.01 &
  \cellcolor[HTML]{FFC7CE}{\color[HTML]{9C0006} 0.05} &
  0.05 &
  0.04 &
  -0.17 \\
cora &
  \cellcolor[HTML]{FFC7CE}{\color[HTML]{9C0006} 0.22} &
  0.10 &
  0.01 &
  0.04 &
  \cellcolor[HTML]{FFC7CE}{\color[HTML]{9C0006} 0.11} &
  0.01 &
  0.12 &
  -0.09 &
  \cellcolor[HTML]{FFC7CE}{\color[HTML]{9C0006} 0.16} &
  0.03 \\
CiteSeer &
  \cellcolor[HTML]{FFC7CE}{\color[HTML]{9C0006} 0.24} &
  0.18 &
  0.04 &
  \cellcolor[HTML]{FFC7CE}{\color[HTML]{9C0006} 0.21} &
  0.18 &
  0.04 &
  0.15 &
  0.11 &
  \cellcolor[HTML]{FFC7CE}{\color[HTML]{9C0006} 0.15} &
  0.10 \\
dblp &
  \cellcolor[HTML]{FFC7CE}{\color[HTML]{9C0006} 0.50} &
  0.10 &
  -0.01 &
  \cellcolor[HTML]{FFC7CE}{\color[HTML]{9C0006} 0.48} &
  0.11 &
  0.02 &
  \cellcolor[HTML]{FFC7CE}{\color[HTML]{9C0006} 0.40} &
  0.33 &
  0.01 &
  -0.06 \\
Computers &
  \cellcolor[HTML]{FFC7CE}{\color[HTML]{9C0006} 0.68} &
  -0.04 &
  0.09 &
  \cellcolor[HTML]{FFC7CE}{\color[HTML]{9C0006} 0.48} &
  0.05 &
  0.05 &
  0.37 &
  \cellcolor[HTML]{FFC7CE}{\color[HTML]{9C0006} 0.41} &
  -0.10 &
  0.06 \\
pubmed &
  \cellcolor[HTML]{FFC7CE}{\color[HTML]{9C0006} 0.47} &
  0.17 &
  0.08 &
  \cellcolor[HTML]{FFC7CE}{\color[HTML]{9C0006} 0.61} &
  0.20 &
  0.05 &
  0.28 &
  \cellcolor[HTML]{FFC7CE}{\color[HTML]{9C0006} 0.29} &
  0.12 &
  0.08 \\
Reddit &
  \cellcolor[HTML]{FFC7CE}{\color[HTML]{9C0006} 0.48} &
  0.18 &
  0.02 &
  \cellcolor[HTML]{FFC7CE}{\color[HTML]{9C0006} 0.23} &
  0.14 &
  0.02 &
  \cellcolor[HTML]{FFC7CE}{\color[HTML]{9C0006} 0.21} &
  0.15 &
  0.13 &
  0.02 \\
cora\_ml &
  \cellcolor[HTML]{FFC7CE}{\color[HTML]{9C0006} 0.66} &
  0.13 &
  0.00 &
  \cellcolor[HTML]{FFC7CE}{\color[HTML]{9C0006} 0.65} &
  0.13 &
  0.00 &
  0.26 &
  \cellcolor[HTML]{FFC7CE}{\color[HTML]{9C0006} 0.59} &
  0.12 &
  -0.04 \\
Cora &
  \cellcolor[HTML]{FFC7CE}{\color[HTML]{9C0006} 0.23} &
  0.14 &
  0.00 &
  \cellcolor[HTML]{FFC7CE}{\color[HTML]{9C0006} 0.24} &
  0.14 &
  0.00 &
  0.14 &
  \cellcolor[HTML]{FFC7CE}{\color[HTML]{9C0006} 0.14} &
  0.13 &
  0.03 \\
Reddit2 &
  \cellcolor[HTML]{FFC7CE}{\color[HTML]{9C0006} 0.17} &
  0.07 &
  0.03 &
  0.01 &
  \cellcolor[HTML]{FFC7CE}{\color[HTML]{9C0006} 0.04} &
  0.00 &
  -0.09 &
  0.01 &
  0.09 &
  \cellcolor[HTML]{FFC7CE}{\color[HTML]{9C0006} 0.37} \\
CS &
  \cellcolor[HTML]{FFC7CE}{\color[HTML]{9C0006} 0.86} &
  0.57 &
  0.01 &
  \cellcolor[HTML]{FFC7CE}{\color[HTML]{9C0006} 0.54} &
  0.45 &
  0.05 &
  0.41 &
  \cellcolor[HTML]{FFC7CE}{\color[HTML]{9C0006} 0.51} &
  0.29 &
  0.20 \\
Photo &
  \cellcolor[HTML]{FFC7CE}{\color[HTML]{9C0006} 0.63} &
  0.08 &
  0.07 &
  \cellcolor[HTML]{FFC7CE}{\color[HTML]{9C0006} 0.66} &
  0.08 &
  0.06 &
  0.28 &
  \cellcolor[HTML]{FFC7CE}{\color[HTML]{9C0006} 0.51} &
  -0.15 &
  0.12 \\
Physics &
  \cellcolor[HTML]{FFC7CE}{\color[HTML]{9C0006} 0.81} &
  0.48 &
  0.10 &
  \cellcolor[HTML]{FFC7CE}{\color[HTML]{9C0006} 0.84} &
  0.44 &
  0.06 &
  0.52 &
  \cellcolor[HTML]{FFC7CE}{\color[HTML]{9C0006} 0.53} &
  0.17 &
  0.14 \\
citeseer &
  \cellcolor[HTML]{FFC7CE}{\color[HTML]{9C0006} 0.69} &
  0.27 &
  0.06 &
  \cellcolor[HTML]{FFC7CE}{\color[HTML]{9C0006} 0.66} &
  0.27 &
  0.06 &
  0.26 &
  \cellcolor[HTML]{FFC7CE}{\color[HTML]{9C0006} 0.29} &
  0.09 &
  0.02\\\hline
\end{tabular}
}
\caption{Pearson correlation coefficient between feature similarity and label matching. The table shows results in three categories: 1) evaluation done on nodes taken randomly, the model is also trained with random training node samples; 2) training and evaluation done where nodes are taken from each class of equal size; 3) evaluation done only to the given edges, the learned model is trained both random sampling and balanced sampling (from the training nodes).}
\label{tab:featurevslabel}
\end{table*}

To compute the similarity between two node features, we use cosine similarity and Euclidean distances. We also train a similarity function learning regression model that learns from training nodes and evaluates on test nodes. Here is a brief explanation of each column in the table.

The three categories are based on how we evaluated or computed the Pearson correlations of the nodes.

\begin{enumerate}
    \item  In the first category, we take random sets of test nodes and consider all pair of edges to compute the Pearson correlation between them. 

\begin{enumerate}
    \item \textit{Rand Learned:} We train a regression model to learn similarity functions based on training subgraphs. Then, we evaluate $r$ on random test nodes.
    
    \item \textit{Rand Cosine Sim.} We consider the same random test nodes and use cosine similarity measures.
    
    \item \textit{Rand Euclidean $\times-1$}: We consider the negative Euclidean distance of two node features since the distance is minimum when similarity is maximum.
\end{enumerate}

\item  The problem with the previous type of evaluation is that the number of edges with non-matching labels at the endpoints will be high due to the consideration of all pairs, which introduces a bias in the  Pearson correlation computation.
To handle that, we evaluated on random sets of test nodes but with an equal number of edges and non-edges.

\begin{enumerate}
    \item \textit{Balance. Learned:} The regression model is trained on a training subgraph with an equal number of positive and negative edges.
    
    \item \textit{Balance. Cosine Sim:} The Pearson correlation with cosine similarity is evaluated on the same number of equal edges and non-edges.
    
    \item \textit{Balance. Euclidean$\times-1$:} The same evaluation is done with the negative of the Euclidean distance.
\end{enumerate}

\item The previous two categories were evaluated on random sets of training nodes. In this category, we computed Pearson correlation only on the given edges for cosine similarity and Euclidean distance. For learned models, we evaluated on test subgraph edges only.

\begin{enumerate}
    \item \textit{Edge Learned:} We trained a regression model with training edges and set a target with equal endpoint labels as one and zero otherwise.
    
    \item \textit{Edge Learned Balanced:} We trained the regression model with equal training edges and non-edges from the training subgraph.
    
    \item \textit{Edge Cosine Sim.:} We evaluated only on given edges using cosine similarity.
    
    \item \textit{Edge Euclidean$\times-1$:}  We evaluated only on given edges using the negative of the Euclidean distance.
\end{enumerate}

\end{enumerate}

The table shows that the feature similarity and labels are positively correlated for all datasets. We specifically have a higher correlation value on the trained model. It highlights the importance of training a model for computing  similar and diverse neighborhoods of a subgraph in the pre-computation step.
This experiment also validates the importance of our supervised sampling strategy.

\FloatBarrier
\subsection{Local Node Homophily Distribution of Dataset}
\label{subsec:distribution}

Figs.~\ref{fig:homophily_distributions_1} and \ref{fig:homophily_distributions_2} show a histogram of \emph{local node homophily, $\gH_n(u)$} values of all vertices of a graph. 
For example, Fig.~\ref{subfig:homophily_Cornell} shows that most of the nodes of \texttt{Cornell} graphs have local node homophily $0$ and only a few have $1$, meaning the graph is strongly heterophilic with the majority of nodes having no neighbors of the same label as ego node. 
In contrast, \texttt{Reddit} (in Fig.~\ref{subfig:homophily_Reddit}) has the most nodes with high homophily. 
We can use a single channel \ags{} based on a feature-similarity sampler for graphs like \texttt{Reddit}, where we have only a few heterophilic nodes. 
In contrast, when a graph consists of both local homophilic and heterophilic (as in \texttt{Amherst41} in Fig.~\ref{subfig:homophily_amherst41}), we need both feature-similarity and feature-diversity samplers.
Therefore, the histogram plots of graphs give us a good insight into the distribution of local homophily. It explains when and why dual channel samplers work for homophilic and heterophilic graphs.


\begin{figure*}[htbp]
\centering

\newcommand{\datasetfig}[1]{
    \subfloat[#1]{
        \includegraphics[width=0.22\linewidth]{Figures/HomophilyDist/homophily_#1.pdf}
        \label{subfig:homophily_#1}
    }
}

\begin{tabular}{cccccccccccccccc}
    \datasetfig{Cornell} &
    \datasetfig{Texas} &
    \datasetfig{Wisconsin} &
    \datasetfig{reed98} \\
    
    \datasetfig{amherst41} &
    \datasetfig{penn94} &
    \datasetfig{Roman-empire} &
    \datasetfig{cornell5} \\

    \datasetfig{Squirrel} &
    \datasetfig{johnshopkins55} &
    \datasetfig{AmazonProducts} &
    \datasetfig{Actor} \\
    
    \datasetfig{Minesweeper} &
    \datasetfig{Questions} &
    \datasetfig{Chameleon} &
    \datasetfig{Tolokers} \\

    \datasetfig{Flickr} &
    \datasetfig{Yelp} &
    \datasetfig{Amazon-ratings} &
    \datasetfig{genius}
\end{tabular}

\caption{\emph{Local Node Homophily} distributions of our benchmark datasets (Part 1)}
\label{fig:homophily_distributions_1}
\end{figure*}

\begin{figure*}[htbp]
\centering

\newcommand{\datasetfig}[2]{%
    \subfloat[#1]{%
        \includegraphics[width=0.22\linewidth]{Figures/HomophilyDist/homophily_#2.pdf}%
        \label{subfig:homophily_#2}%
    }%
}

\begin{tabular}{cccccccccccccccc}
    
    \datasetfig{cora}{cora} &
    \datasetfig{CiteSeer}{CiteSeer} &
    \datasetfig{dblp}{dblp} &
    \datasetfig{Computers}{Computers} \\
    
    \datasetfig{pubmed}{pubmed} &
    \datasetfig{Reddit}{Reddit} &
    \datasetfig{cora\_ml}{cora_ml} &
    \datasetfig{Cora}{Cora} \\
    
    \datasetfig{Reddit2}{Reddit2} &
    \datasetfig{CS}{CS} &
    \datasetfig{Photo}{Photo} &
    \datasetfig{Physics}{Physics} \\
    
    \datasetfig{citeseer}{citeseer} &
    \datasetfig{pokec}{pokec} &
    \datasetfig{arxiv-year}{arxiv-year} &
    \datasetfig{snap-patents}{snap-patents} \\
    
    \datasetfig{twitch-gamer}{twitch-gamer} &
    \datasetfig{wiki}{wiki} \\
\end{tabular}

\caption{\emph{Local Node Homophily} distributions of our benchmark datasets (Part 2)}
\label{fig:homophily_distributions_2}
\end{figure*}

\FloatBarrier

\section{Appendix: Detailed Experimental Results}
\label{sec:appendixexperiment}

\subsection{AGS-GNN performance comparison}
\label{subsec:AGSNSresults}

Table~\ref{tab:smallhetero} shows numerical results of the algorithms for small heterophilic graphs ($N<100K$). The performance plot in Fig.~\ref{fig:smallhetero} (\S\ref{sec:experiments}) is drawn based on the data from this table.

Table~\ref{tab:smallhomophilic} shows the comparison for small homophilic graphs ($N<100K$). The performance plot in Fig.~\ref{fig:smallhomo} (\S\ref{sec:experiments}) is drawn based on the data from this table.

Table~\ref{tab:large_heterophilic_graphs} shows the comparison of algorithms for large heterophilic graphs ($N>=100K$). The performance plot in Fig.~\ref{fig:largehetero} (\S\ref{sec:experiments}) is drawn based on the data from this table.

Table~\ref{tab:large_homophilic_graphs} compares the algorithms for large heterophilic graphs ($N>=100K$). The performance plot in Fig.~\ref{fig:largehomo} (\S\ref{sec:experiments}) is drawn based on the data from this table.

\begin{table*}[!htbp]
\centering

\begin{center}
\begin{small}

\resizebox{0.7\linewidth}{!}
{
\def\arraystretch{1.2}
\begin{tabular}{c|cc|cc|cc|cc|cc|cc}
\toprule
\multirow{2}{*}{\begin{tabular}[c]{@{}c@{}}Small Heterophilic\\ Graphs\end{tabular}} &
  \multicolumn{2}{c|}{GSAGE} &
  \multicolumn{2}{c|}{GSAINT} &
  \multicolumn{2}{c|}{LINKX$\dag$} &
  \multicolumn{2}{c|}{ACMGCN} &
  \multicolumn{2}{c|}{\agsns{}} &
  \multicolumn{2}{c}{\agsgs{}} \\
 &
  $\mu$ &
  $\sigma$ &
  $\mu$ &
  $\sigma$ &
  $\mu$ &
  $\sigma$ &
  $\mu$ &
  $\sigma$ &
  $\mu$ &
  $\sigma$ &
  $\mu$ &
  $\sigma$ \\
  \midrule
Cornell &
    71.35 &
  6.07 &
  67.03 &
  3.15 &
  \textbf{76.76} &
  {4.32} &
  74.59 &
  1.32 &
  74.59 &
  2.16 &
  70.27 &
  4.83 \\
Texas &
  77.30 &
  5.57 &
  79.46 &
  6.53 &
  81.62 &
  2.02 &
  84.32 &
  7.13 &
  \textbf{84.86} &
  {6.19} &
  80.00 &
  5.57 \\
Wisconsin &
  79.61 &
  3.64 &
  79.61 &
  6.86 &
  83.53 &
  4.74 &
  84.31 &
  3.28 &
  81.96 &
  4.71 &
  \textbf{85.10} &
  {6.49} \\
reed98 &
  61.87 &
  0.53 &
  64.15 &
  0.69 &
  66.63 &
  1.37 &
  66.11 &
  1.25 &
  \textbf{66.74} &
  {1.37} &
  64.66 &
  0.89 \\
amherst41 &
  66.62 &
  0.33 &
  69.57 &
  0.71 &
  78.64 &
  0.35 &
  78.12 &
  0.30 &
  \textbf{79.19} &
  {0.47} &
  77.14 &
  0.90 \\
penn94 &
  75.65 &
  0.42 &
  75.11 &
  0.33 &
  \textbf{85.92} &
  {0.32} &
  85.38 &
  0.53 &
  76.06 &
  0.41 &
  81.56 &
  0.45 \\
Roman-empire &
  79.52 &
  0.42 &
  77.51 &
  0.47 &
  59.14 &
  0.45 &
  71.42 &
  0.39 &
  \textbf{80.49} &
  {0.48} &
  75.38 &
  0.25 \\
cornell5 &
  69.22 &
  0.12 &
  68.10 &
  0.15 &
  80.10 &
  0.27 &
  78.43 &
  0.50 &
  \textbf{82.84} &
  {0.01} &
  74.84 &
  0.35 \\
Squirrel &
  38.66 &
  1.24 &
  39.14 &
  1.45 &
  35.91 &
  1.09 &
  \textbf{72.06} &
  {2.21} &
  68.24 &
  0.97 &
  51.73 &
  1.30 \\
johnshopkins55 &
  67.37 &
  0.54 &
  67.43 &
  0.20 &
  \textbf{79.63} &
  {0.16} &
  77.37 &
  0.61 &
  78.13 &
  0.42 &
  75.93 &
  0.48 \\
Actor &
  34.82 &
  0.55 &
  35.24 &
  0.81 &
  33.93 &
  0.82 &
  34.42 &
  1.08 &
  \textbf{36.55} &
  {0.93} &
  34.88 &
  0.63 \\
Minesweeper &
  \textbf{85.74} &
  {0.25} &
  85.46 &
  0.49 &
  80.02 &
  0.03 &
  80.33 &
  0.23 &
  85.56 &
  0.28 &
  85.25 &
  0.71 \\
Questions &
  97.13 &
  0.01 &
  97.18 &
  0.04 &
  97.06 &
  0.03 &
  97.02 &
  0.00 &
  \textbf{97.27} &
  {0.04} &
  97.23 &
  0.04 \\
Chameleon &
  51.18 &
  2.70 &
  52.32 &
  2.47 &
  50.18 &
  2.01 &
  \textbf{75.81} &
  {1.67} &
  73.46 &
  2.29 &
  66.67 &
  1.65 \\
Tolokers &
  79.15 &
  0.32 &
  78.89 &
  0.37 &
  80.07 &
  0.53 &
  80.45 &
  0.54 &
  \textbf{80.52} &
  {0.41} &
  80.50 &
  0.61 \\
Flickr &
  50.86 &
  0.32 &
  50.28 &
  0.11 &
  \textbf{53.81} &
  {0.31} &
  52.19 &
  0.24 &
  51.52 &
  0.13 &
  50.79 &
  0.13 \\
Amazon-ratings &
  48.08 &
  0.38 &
  52.21 &
  0.27 &
  52.68 &
  0.26 &
  52.94 &
  0.23 &
  \textbf{53.21} &
  {0.46} &
  52.25 &
  0.34 \\
  \bottomrule
\end{tabular}
}
\end{small}
\end{center}

\caption{The table shows micro $F_1$-measure in small heterophilic graphs.}
\label{tab:smallhetero}
\end{table*}

\begin{table*}[!htbp]
\centering
\resizebox{0.7\linewidth}{!}
{
\def\arraystretch{1.2}
\begin{tabular}{c|cc|cc|cc|cc|cc|cc}
\toprule
\multirow{2}{*}{\begin{tabular}[c]{@{}c@{}}Small Homophilic\\ Graphs\end{tabular}} &
  \multicolumn{2}{c|}{GSAGE} &
  \multicolumn{2}{c|}{GSAINT} &
  \multicolumn{2}{c|}{LINKX$\dag$} &
  \multicolumn{2}{c|}{ACMGCN} &
  \multicolumn{2}{c|}{\agsns{}} &
  \multicolumn{2}{c}{\agsgs{}} \\
          & $\mu$          & $\sigma$      & $\mu$ & $\sigma$ & $\mu$ & $\sigma$ & $\mu$ & $\sigma$ & $\mu$          & $\sigma$      & $\mu$ & $\sigma$ \\\midrule
cora      & 52.59          & 0.25          & 65.85 & 0.27     & 57.18 & 0.16     & 66.79 & 0.24     & \textbf{69.32} & {0.11} & 69.18 & 0.25     \\
CiteSeer  & \textbf{71.14}          & {0.10}          & 65.84 & 3.86     & 44.20 & 2.49     & 55.42 & 3.43     & 69.33 & 0.31 & 68.14 & 0.56     \\
dblp      & 85.80 & 0.16 & 85.65 & 0.21     & 80.47 & 0.04     & 85.68 & 0.24     & \textbf{85.97} & {0.11} & 85.40 & 0.05     \\
Computers & 91.31          & 0.11          & 91.77 & 0.14     & 91.38 & 0.17     & 92.03 & 0.48     & \textbf{92.18} & {0.08} & 91.28 & 0.14     \\
pubmed    & 89.00          & 0.09          & 88.57 & 0.14     & 85.05 & 0.25     & 83.79 & 0.15     & \textbf{89.34} & {0.07} & 87.41 & 0.17     \\
cora\_ml   & \textbf{88.71} & {0.23} & 87.21 & 0.37     & 80.67 & 0.32     & 85.28 & 0.12     & 87.70          & 0.08          & 87.95 & 0.41     \\
Cora      & 80.56          & 0.52          & 79.12 & 1.22     & 59.28 & 3.80     & 71.16 & 0.98     & \textbf{81.13} & {0.90} & 79.54 & 0.14     \\
CS        & 95.13          & 0.19          & 95.81 & 0.07     & 94.48 & 0.12     & 94.80 & 0.34     & 95.18 & 0.06 & \textbf{95.81} & {0.06}     \\
Photo     & \textbf{96.58} & {0.09} & 96.51 & 0.10     & 95.48 & 0.15     & 96.21 & 0.11     & 96.56          & 0.16          & 95.86 & 0.12     \\
Physics   & 96.64          & 0.05          & 96.81 & 0.09     & 96.23 & 0.05     & 96.11 & 0.24     & 96.62          & 0.04          & \textbf{96.90} & {0.04 }    \\
citeseer  & 95.15          & 0.20          & 95.32 & 0.16     & 88.53 & 0.27     & 94.26 & 0.19     & \textbf{95.40} & {0.10} & 94.61 & 0.45  \\\bottomrule  
\end{tabular}
}
\caption{The table shows micro $F_1$-measure in small homophilic graphs.}
\label{tab:smallhomophilic}
\end{table*}

\begin{table*}[!htbp]
\centering
\resizebox{0.6\linewidth}{!}
{
\def\arraystretch{1.2}
\begin{tabular}{c|cc|cc|cc|cc|cc}
\toprule
\multirow{2}{*}{\begin{tabular}[c]{@{}c@{}}Large Heterophilic\\ Graphs\end{tabular}} &
  \multicolumn{2}{c|}{GSAGE} &
  \multicolumn{2}{c|}{GSAINT} &
  \multicolumn{2}{c|}{LINKX} &
  \multicolumn{2}{c|}{\agsns{}} &
  \multicolumn{2}{c}{\agsgs{}} \\
               & $\mu$          & $\sigma$      & $\mu$ & $\sigma$ & $\mu$ & $\sigma$ & $\mu$          & $\sigma$      & $\mu$ & $\sigma$ \\\midrule
genius         & 81.76          & 0.33 & 82.09          & 0.19 & 82.59          & 0.02 & \textbf{82.84} & {0.01} & 81.36          & 0.52 \\
pokec          & 68.91          & 0.03 & 68.18          & 0.12 & \textbf{70.57} &{ 0.3}  & 70.08 & 0.00 & 65.77          & 0.00    \\
arxiv-year     & 47.5           & 0.15 & 40.04          & 0.32 & 49.89          & 0.18 & \textbf{50.27} & {0.10} & 38.14          & 0.00    \\
snap-patents   & \textbf{48.36} & {0.01}  & 32.86          & 0.11 & 43.19          & 1.86 & \textbf{48.22} & {0.02} & 28.68          & 0.00    \\
twitch-gamer   & \textbf{61.41} & {0.00 }  & 61.39 & 0.32 & 59.62          & 0.18 & 61.38 & 0.05 & 60.9           & 0.00    \\
AmazonProducts & 62.96          & 0.00    & \textbf{75.25} & {0.05} & 50.66          & 0.28 & 73.78          & 0.01 & 75.07          & 0.06 \\
Yelp           & 65.15          & 0.00    & 77.06 & 0.07 & 52.84          & 2.4  & 75.82          & 0.01 & \textbf{77.09} & {0.08 }\\
\bottomrule
\end{tabular}
}
\caption{The table shows micro $F_1$-measure in large heterophilic graphs.}
\label{tab:large_heterophilic_graphs}
\end{table*}

\begin{table*}[!htbp]
\centering
\resizebox{0.6\linewidth}{!}
{
\def\arraystretch{1.2}
\begin{tabular}{c|cc|cc|cc|cc|cc}
\toprule
\multirow{2}{*}{\begin{tabular}[c]{@{}c@{}}Large Homophilic\\ Graphs\end{tabular}} &
  \multicolumn{2}{c|}{GSAGE} &
  \multicolumn{2}{c|}{GSAINT} &
  \multicolumn{2}{c|}{LINKX} &
  \multicolumn{2}{c|}{\agsns{}} &
  \multicolumn{2}{c}{\agsgs{}} \\
               & $\mu$          & $\sigma$      & $\mu$ & $\sigma$ & $\mu$ & $\sigma$ & $\mu$          & $\sigma$      & $\mu$ & $\sigma$ \\\midrule
Reddit         & 95.17 & 0.01    & 93.91          & 0.03 & 92.53          & 0.26 & \textbf{95.64} & {0.06} & 94.64          & 0.03 \\
Reddit2        & 88.94          & 0.32 & 73.3           & 2.2  & 86.54          & 0.44 & \textbf{92.23} & {0.11} & 70.14          & 0.82\\
Reddit0.525 & 91.31 & 0.06 & 90.46 & 0.05 & 87.13 & 0.18 & \textbf{91.91} & {0.06} & 91.40 & 0.05 \\
Reddit0.425 & 89.33 & 0.07 & 89.13 & 0.06 & 84.38 & 0.61 & \textbf{90.27} & {0.1} & 90.17 & 0.20 \\
Reddit0.325 & 87.19 & 0.12 & 87.69 & 0.15 & 81.97 & 0.31 & \textbf{88.48} & {0.05} & 87.50 & 0.05\\
\bottomrule
\end{tabular}
}
\caption{The table shows micro $F_1$-measure in large homophilic graphs.}
\label{tab:large_homophilic_graphs}
\end{table*}

\subsection{AGS-GNN vs other homophilic and heterophilic GNNs}
\label{subsec:othermethods}

\textbf{Homophilic GNNs:} Some notable homophilic GNNs from the literature are Graph Isomorphic Network (GIN)~\citep{xu2018powerful}, Graph Attention Network (GAT)~\citep{velivckovic2017graph}, Graph Convolutional Network (GCN)~\citep{kipf2016semi}. 
For scaling widely used method for node sampling is  GSAGE~\citep{hamilton2017inductive} and for sampling based methods ClusterGCN~\citep{chiang2019cluster} and GSAINT~\citep{zeng2019graphsaint}. 

\textbf{Heterophlic GNNs:} Currently, the SOTA performance in heterophilic graph is from ACM-GCN~\citep{luan2022revisiting}. Some other notable heterophilic GNNs in the literature are
H2GCN~\citep{zhu2020beyond}, 
MixHop~\citep{abu2019mixhop},
GPRGNN~\citep{chien2020adaptive}, GCNII~\citep{chen2020simple},
However, most of these are not scalable for large graphs as they need to process entire graphs for improved performance. 
LINKX~\citep{lim2021large} is the only scalable GNN for heterophilic graphs. However, the input dimension of MLP depends on the number of nodes and may cause scaling issues for very large graphs. 
Some other related GNN for heterophilic graphs are jumping knowledge networks (GCNJK, GATJK)\citep{xu2018representation}, and APPNP~\citep{gasteiger2018predict}. The method uses only node features, MLP~\citep{goodfellow2016deep}. Uses only graph structure, label
propagation, LINK~\citep{zheleva2009join}, MultiLP~\citep{zhou2003learning}.
And simple network methods, Simple Graph Convolutional Network (SGC)~\citep{wu2019simplifying}, C\&S~\citep{huang2020combining}.

Table.~\ref{tab:smallothergnns} shows the performance of AGS relative to the $18$ recent algorithms for  small heterophilic graphs. We can see that ACM-GCN, AGS-NS, and LINKX are the best-performing, with AGS-NS the best among them.

\begin{table*}[!htbp]
\centering
\resizebox{0.9\linewidth}{!}
{
\def\arraystretch{1.2}
\begin{tabular}{c|cc|cc|cc|cc|cc|cc|cc|cc|cc}
\toprule
\textbf{} &
  \multicolumn{2}{c|}{{Cornell}} &
  \multicolumn{2}{c|}{{Texas}} &
  \multicolumn{2}{c|}{{Wisconsin}} &
  \multicolumn{2}{c|}{{reed98}} &
  \multicolumn{2}{c|}{{amherst41}} &
  \multicolumn{2}{c|}{{penn94}} &
  \multicolumn{2}{c|}{{Roman-empire}} &
  \multicolumn{2}{c|}{{cornell5}} &
  \multicolumn{2}{c}{{Squirrel}} \\
{GNNs} &
  \textbf{$\mu$} &
  \textbf{$\sigma$} &
  \textbf{$\mu$} &
  \textbf{$\sigma$} &
  \textbf{$\mu$} &
  \textbf{$\sigma$} &
  \textbf{$\mu$} &
  \textbf{$\sigma$} &
  \textbf{$\mu$} &
  \textbf{$\sigma$} &
  \textbf{$\mu$} &
  \textbf{$\sigma$} &
  \textbf{$\mu$} &
  \textbf{$\sigma$} &
  \textbf{$\mu$} &
  \textbf{$\sigma$} &
  \textbf{$\mu$} &
  \textbf{$\sigma$} \\
  \midrule
GSAGE &
  71.35 &
  6.07 &
  77.3 &
  5.57 &
  79.61 &
  3.64 &
  61.87 &
  0.53 &
  66.62 &
  0.33 &
  75.65 &
  0.42 &
  79.52 &
  0.42 &
  69.22 &
  0.12 &
  38.66 &
  1.24 \\
GCN &
  49.19 &
  4.32 &
  60 &
  6.26 &
  55.69 &
  3.64 &
  60.83 &
  0.53 &
  57.23 &
  1.91 &
  61.78 &
  0.35 &
  42.78 &
  1.02 &
  57.62 &
  0.11 &
  28.07 &
  0.79 \\
GAT &
  56.76 &
  7.05 &
  61.62 &
  3.15 &
  56.08 &
  4.22 &
  59.59 &
  0.46 &
  57.09 &
  0.54 &
  62.15 &
  0.25 &
  41.73 &
  0.73 &
  57.64 &
  0.5 &
  35.52 &
  0.79 \\
GIN &
  52.43 &
  6.3 &
  61.08 &
  4.71 &
  56.08 &
  6.27 &
  60.1 &
  0.87 &
  59.82 &
  1.45 &
  63.18 &
  0.24 &
  47.34 &
  0.26 &
  58.74 &
  0.18 &
  30.55 &
  0.49 \\
GSAINT &
  67.03 &
  3.15 &
  79.46 &
  6.53 &
  79.61 &
  6.86 &
  64.15 &
  0.69 &
  69.57 &
  0.71 &
  75.11 &
  0.33 &
  77.51 &
  0.47 &
  68.1 &
  0.15 &
  39.14 &
  1.45 \\
LINKX &
  \textbf{76.76} &
  {4.32} &
  81.62 &
  2.02 &
  83.53 &
  4.74 &
  66.63 &
  1.37 &
  79.64 &
  0.35 &
  \textbf{85.92} &
  {0.32} &
  59.14 &
  0.45 &
  80.1 &
  0.27 &
  35.91 &
  1.09 \\
ACM-GCN &
  74.59 &
  1.32 &
  84.32 &
  7.13 &
  \textbf{84.31} &
  {3.28} &
  66.11 &
  1.25 &
  78.12 &
  0.3 &
  85.38 &
  0.53 &
  71.42 &
  0.39 &
  78.43 &
  0.5 &
  72.06 &
  2.21 \\
LINK &
  54.59 &
  4.32 &
  64.86 &
  7.83 &
  56.47 &
  6.61 &
  60.62 &
  2.84 &
  70.11 &
  1.31 &
  79.25 &
  0.42 &
  8.59 &
  0.5 &
  71.49 &
  0.44 &
  \textbf{75.41} &
  {1.22} \\
MLP &
  70.27 &
  4.83 &
  67.57 &
  5.92 &
  80.39 &
  4.47 &
  41.14 &
  0.9 &
  51.81 &
  2.75 &
  71.22 &
  0.57 &
  65.31 &
  0.32 &
  61.12 &
  1.45 &
  31.22 &
  0.88 \\
CS &
  69.73 &
  5.77 &
  69.73 &
  5.51 &
  79.22 &
  6.86 &
  44.66 &
  3.23 &
  49.71 &
  3.39 &
  70.86 &
  0.76 &
  65.42 &
  0.3 &
  60.12 &
  0.96 &
  31.01 &
  0.81 \\
SGC &
  46.49 &
  5.24 &
  52.43 &
  5.57 &
  53.73 &
  2.66 &
  58.86 &
  2.38 &
  68.86 &
  2.48 &
  68.29 &
  0.18 &
  42.32 &
  0.63 &
  69.14 &
  0.66 &
  24.32 &
  0.46 \\
GPRGNN &
  50.81 &
  6.26 &
  55.68 &
  8.82 &
  60.78 &
  4.64 &
  50.67 &
  3.95 &
  58.08 &
  1.92 &
  75.43 &
  0.99 &
  69.68 &
  0.3 &
  63.75 &
  0.96 &
  26.94 &
  1.13 \\
APPNP &
  45.41 &
  7.53 &
  52.43 &
  7.17 &
  50.2 &
  6.02 &
  53.37 &
  1.99 &
  62.55 &
  2.22 &
  73.54 &
  0.41 &
  56.99 &
  0.26 &
  66.78 &
  0.79 &
  24.17 &
  1.64 \\
MIXHOP &
  62.7 &
  9.27 &
  54.05 &
  9.21 &
  72.94 &
  6.49 &
  57.41 &
  2.61 &
  68.99 &
  2.07 &
  78.08 &
  1.18 &
  78.49 &
  0.24 &
  70.09 &
  0.98 &
  32.8 &
  1.33 \\
GCNJK &
  45.95 &
  7.25 &
  47.03 &
  6.3 &
  50.2 &
  2.66 &
  58.34 &
  3.01 &
  71.41 &
  2.74 &
  78.33 &
  1.32 &
  58.32 &
  0.46 &
  68.85 &
  0.77 &
  26.13 &
  0.64 \\
GATJK &
  54.59 &
  3.15 &
  52.97 &
  7.57 &
  56.86 &
  4.11 &
  61.35 &
  4.23 &
  70.29 &
  1.11 &
  79.47 &
  0.32 &
  71.54 &
  0.67 &
  69.7 &
  0.31 &
  31.2 &
  0.72 \\
LINKConcat &
  76.22 &
  6.26 &
  74.05 &
  2.76 &
  81.57 &
  4.04 &
  59.79 &
  0.96 &
  76.06 &
  1.47 &
  83.81 &
  0.28 &
  12.06 &
  1.61 &
  76.87 &
  0.33 &
  67.36 &
  1.57 \\
GCNII &
  60.36 &
  8.92 &
  54.95 &
  1.27 &
  71.9 &
  7.22 &
  57.51 &
  2.77 &
  70.32 &
  1.48 &
  80.21 &
  0.4 &
  74.82 &
  0.26 &
  71.97 &
  0.28 &
  32.69 &
  0.84 \\
\agsns{} &
  74.59 &
  2.16 &
  \textbf{84.86} &
  {6.19} &
  81.96 &
  4.71 &
  \textbf{66.74} &
  {1.37} &
  \textbf{79.19} &
  {0.47} &
  76.06 &
  0.41 &
  \textbf{80.49} &
  {0.48} &
  \textbf{82.84} &
  {0.01} &
  68.24 &
  0.97 \\
  \midrule
 &
  \multicolumn{2}{c}{{johnshopkins55}} &
  \multicolumn{2}{c|}{{Actor}} &
  \multicolumn{2}{c|}{{Minesweeper}} &
  \multicolumn{2}{c|}{{Questions}} &
  \multicolumn{2}{c|}{{Chameleon}} &
  \multicolumn{2}{c|}{{Tolokers}} &
  \multicolumn{2}{c|}{{Flickr}} &
  \multicolumn{2}{c|}{{Amazon-ratings}} &
   &
   \\   
\textbf{} &
  \textbf{$\mu$} &
  \textbf{$\sigma$} &
  \textbf{$\mu$} &
  \textbf{$\sigma$} &
  \textbf{$\mu$} &
  \textbf{$\sigma$} &
  \textbf{$\mu$} &
  \textbf{$\sigma$} &
  \textbf{$\mu$} &
  \textbf{$\sigma$} &
  \textbf{$\mu$} &
  \textbf{$\sigma$} &
  \textbf{$\mu$} &
  \textbf{$\sigma$} &
  \textbf{$\mu$} &
  \textbf{$\sigma$} &
  \textbf{} &
  \textbf{} \\
  \midrule
GSAGE &
  67.37 &
  0.54 &
  34.82 &
  0.55 &
  \textbf{85.74} &
  {0.25} &
  97.13 &
  0.01 &
  51.18 &
  2.7 &
  79.15 &
  0.32 &
  50.86 &
  0.32 &
  48.08 &
  0.38 &
   &
   \\
GCN &
  61.81 &
  0.95 &
  29.18 &
  0.78 &
  80.12 &
  0.06 &
  97.02 &
  0.00 &
  41.89 &
  2.15 &
  78.17 &
  0.01 &
  49.05 &
  0.06 &
  43.63 &
  0.43 &
   &
   \\
GAT &
  60.33 &
  0.72 &
  30.64 &
  0.8 &
  80.00 &
  0.00 &
  97.02 &
  0.00 &
  48.11 &
  1.82 &
  78.16 &
  0.00 &
  48.29 &
  0.16 &
  39.21 &
  0.37 &
   &
   \\
GIN &
  63.4 &
  0.31 &
  28.09 &
  0.25 &
  80.74 &
  0.24 &
  97.1 &
  0.01 &
  40.13 &
  2.21 &
  78.29 &
  0.09 &
  47.2 &
  0.88 &
  45.85 &
  0.4 &
   &
   \\
GSAINT &
  67.43 &
  0.2 &
  35.24 &
  0.81 &
  85.46 &
  0.49 &
  97.18 &
  0.04 &
  52.32 &
  2.47 &
  78.89 &
  0.37 &
  50.28 &
  0.11 &
  52.21 &
  0.27 &
   &
   \\
LINKX &
  \textbf{79.63} &
  {0.16} &
  33.93 &
  0.82 &
  80.02 &
  0.03 &
  97.06 &
  0.03 &
  50.18 &
  2.01 &
  80.07 &
  0.53 &
  \textbf{53.81} &
  {0.31} &
  52.68 &
  0.26 &
   &
   \\
ACM-GCN &
  77.37 &
  0.61 &
  34.42 &
  1.08 &
  80.33 &
  0.23 &
  97.02 &
  0.00 &
  75.81 &
  1.67 &
  80.45 &
  0.54 &
  52.19 &
  0.24 &
  52.94 &
  0.23 &
   &
   \\
LINK &
  70.95 &
  1.37 &
  22.54 &
  1.54 &
  70.74 &
  0.72 &
  96.41 &
  0.1 &
  \textbf{79.08} &
  {1.29} &
  78.13 &
  0.56 &
  51.18 &
  0.02 &
  51.43 &
  0.2 &
   &
   \\
MLP &
  52.26 &
  2.31 &
  33.91 &
  0.7 &
  80.00 &
  0.00 &
  97.14 &
  0.06 &
  40.26 &
  2.96 &
  78.52 &
  0.14 &
  46.6 &
  0.07 &
  41.35 &
  0.46 &
   &
   \\
CS &
  52.61 &
  2.68 &
  32.41 &
  0.68 &
  80.00 &
  0.00 &
  97.14 &
  0.04 &
  37.87 &
  1.19 &
  78.34 &
  0.04 &
  46.6 &
  0.14 &
  40.61 &
  0.45 &
   &
   \\
SGC &
  69.75 &
  0.64 &
  27.91 &
  0.71 &
  82.22 &
  0.18 &
  97.07 &
  0.03 &
  33.29 &
  1.41 &
  77.48 &
  0.98 &
  44.46 &
  0.11 &
  41.27 &
  0.1 &
   &
   \\
GPRGNN &
  58.65 &
  1.47 &
  29.49 &
  0.97 &
  85.67 &
  0.31 &
  97.08 &
  0.05 &
  34.17 &
  2.51 &
  79.41 &
  0.51 &
  51.69 &
  0.19 &
  46.28 &
  0.49 &
   &
   \\
APPNP &
  63.2 &
  1.48 &
  25.75 &
  0.4 &
  79.95 &
  0.11 &
  97.06 &
  0.04 &
  31.97 &
  1.7 &
  78.03 &
  0.26 &
  49.65 &
  0.19 &
  46.69 &
  0.43 &
   &
   \\
MIXHOP &
  68.86 &
  2.26 &
  32.21 &
  0.86 &
  85.54 &
  0.38 &
  97.15 &
  0.03 &
  40.88 &
  2.53 &
  \textbf{81.82} &
  {0.4} &
  52.23 &
  0.31 &
  46.61 &
  0.99 &
   &
   \\
GCNJK &
  69.21 &
  0.62 &
  26.14 &
  0.68 &
  84.86 &
  0.43 &
  97.1 &
  0.02 &
  34.43 &
  0.84 &
  81.51 &
  0.57 &
  53.17 &
  0.18 &
  45.18 &
  0.26 &
   &
   \\
GATJK &
  68.32 &
  1.09 &
  26.42 &
  0.91 &
  84.66 &
  0.37 &
  96.96 &
  0.17 &
  44.25 &
  1.89 &
  81.29 &
  0.38 &
  51.35 &
  0.32 &
  49.41 &
  0.61 &
   &
   \\
LINKConcat &
  74.48 &
  1.22 &
  28.09 &
  1.65 &
  62.72 &
  2.5 &
  96.79 &
  0.17 &
  69.04 &
  1.97 &
  77.96 &
  0.91 &
  46.71 &
  1.73 &
  51 &
  0.37 &
   &
   \\
GCNII &
  70.53 &
  0.61 &
  26.29 &
  0.27 &
  80.92 &
  0.38 &
  97.07 &
  0.03 &
  32.68 &
  1.17 &
  81.8 &
  0.61 &
  52.74 &
  0.24 &
  47.74 &
  0.24 &
   &
   \\
\agsns{} &
  78.13 &
  0.42 &
  \textbf{36.55} &
  {0.93} &
  85.56 &
  0.28 &
  \textbf{97.27} &
  {0.04} &
  73.46 &
  2.29 &
  80.52 &
  0.41 &
  51.52 &
  0.13 &
  \textbf{53.21} &
  {0.46} &
   &
  \\\bottomrule
\end{tabular}
}
\caption{F1-Score of other homophilic and heterophilic GNN performance on small heterophilic graphs (node $<100k$).}
\label{tab:smallothergnns}
\end{table*}



\section{Appendix: Ablation Study Results}
\label{subsec:ablation}

\subsection{AGS with existing GNNs}

Table~\ref{tab:agsgnnvariants} compares the performance cof AGS with existing GNNs (GSAGE, ChebNet, GSAINT, GIN, GAT, and GCN) evaluated on the heterophilic graphs (\texttt{Reed98, Roman-empire, Actor, Minesweeper, Tolokers}). 
These results show that AGS with GSAGE, ChebNet, and GSAINT performed the best.
Fig.~\ref{fig:agsgnnvariants} in the ablation study (\S\ref{subsec:ablation_study}) is based on the result from this table.

\begin{table*}[!htbp]
\centering
\resizebox{0.7\linewidth}{!}
{
\def\arraystretch{1.0}
\begin{tabular}{l|cc|cc|cc|cc|cc|cc}
\toprule
Methods &
  \multicolumn{2}{c|}{AGS-GSAGE} &
  \multicolumn{2}{c|}{AGS-GSAINT (rw)} &
  \multicolumn{2}{c|}{AGS-Chebnet} &
  \multicolumn{2}{c|}{AGS-GCN} &
  \multicolumn{2}{c|}{AGS-GAT} &
  \multicolumn{2}{c|}{AGS-GIN} \\\midrule
Dataset      & $\mu$            & $\sigma$ & $\mu$   & $\sigma$ & $\mu$            & $\sigma$        & $\mu$   & $\sigma$ & $\mu$   & $\sigma$ & $\mu$   & $\sigma$ \\\midrule
reed98       & 63.32          & 0.76          & 55.44          & 3.04          & \textbf{64.56} & {0.84} & 56.58 & 1.20 & 55.85 & 1.33 & 59.48 & 2.00 \\
Roman-empire & 77.35          & 0.43          & 74.39          & 0.41          & \textbf{77.44} & {0.24} & 44.36 & 1.76 & 46.57 & 0.75 & 45.91 & 0.26 \\
Actor        & \textbf{35.36} & {0.49} & 29.97          & 0.52          & 33.76          & 0.73          & 28.64 & 0.75 & 26.58 & 1.01 & 26.74 & 0.30 \\
Minesweeper & \textbf{84.45} & {0.41} & 82.31 & 0.68 & \textbf{84.45} & {0.62} & 80.21 & 0.08 & 80.08 & 0.16 & 80.88 & 0.13 \\
Tolokers     & 78.33          & 0.14          & \textbf{78.69} & {0.39} & 78.31          & 0.35          & 78.16 & 0.00 & 78.16 & 0.00 & 78.38 & 0.11\\\bottomrule
\end{tabular}
}
\caption{$F_1$ score of heterophilic graphs using AGS sampler with different underlying GNNs. The best-performing models are AGS-GSAGE and AGS-CHEB.}
\label{tab:agsgnnvariants}
\end{table*}

\subsection{Different similarity functions for Nearest Neighbor and Submodular optimization}
\label{subsec:submdoularablation}

Table~\ref{tab:submodular_functions} shows a performance comparison of different submodular and nearest neighbor samplers in heterophilic graphs. Here, we generate three synthetic graphs of average degree $200$ with strong ($0.05$), moderate ($0.25$), and weak ($0.50$) heterophily. 
The underlying GNN is ChebNet, and only the samplers are changed. Neighborhood samplers of two hops, with  sizes  set as $k = [25, 25]$.

\begin{table*}[htbp]
\centering
\resizebox{0.7 \linewidth}{!}
{
\def\arraystretch{1.2}
\begin{tabular}{l|cc|cc|cc}
\toprule
           & \multicolumn{2}{c|}{Cora1000.050.110True} & \multicolumn{2}{c|}{Cora1000.250.110True} & \multicolumn{2}{c}{Cora1000.500.110True} \\
Samplers                 & $\mu$   & $\sigma$ & $\mu$   & $\sigma$ & $\mu$            & $\sigma$        \\\midrule
\tt{Wholegraph}               & 61.20 & 0.16     & 66.49 & 0.25     & 95.45          & 0.41          \\
\tt{Random}                   & 56.61 & 0.33     & 61.16 & 1.08     & 88.61          & 0.66          \\
\tt{knncosine}                & 53.02 & 0.88     & 66.67 & 0.30     & \textbf{90.72} & {0.52} \\
\tt{knneuclidean}             & 52.95 & 0.85     & 63.53 & 0.70     & 88.82          & 0.67          \\
\tt{submodularcosine}         & 56.65 & 0.75     & 58.77 & 0.45     & 84.23          & 0.79          \\
\tt{submodulareuclidean}      & 56.16 & 1.42     & 59.05 & 0.36     & 87.72          & 0.53          \\
\tt{fastlink}                 & 45.96 & 0.58     & 64.76 & 0.68     & 84.87          & 1.01          \\
\tt{link-nn}                  & 44.20 & 0.33     & 59.44 & 0.62     & 75.73          & 0.76          \\
\textbf{\tt{link-sub}} & \textbf{65.26}      & {0.35}      & \textbf{67.69}      & {0.57}      & 84.09               & 0.65               \\
\tt{apricotfacilityeuclidean} & 56.12 & 0.73     & 59.05 & 0.28     & 87.44          & 0.50          \\
\tt{apricotfacilitycosine}    & 56.40 & 0.54     & 59.79 & 0.49     & 86.56          & 0.26          \\
\tt{apricotcoverage}          & 55.24 & 0.71     & 59.58 & 0.81     & 85.19          & 1.02          \\
\tt{apricotfeature}           & 54.99 & 0.47     & 57.07 & 0.43     & 75.49          & 0.77          \\
\tt{apricotgraph}             & 56.40 & 0.54     & 59.79 & 0.49     & 86.56          & 0.26 \\\bottomrule
\end{tabular}
}
\caption{Performance of different sub-modular functions and nearest neighbors on Synthetic Cora graphs.}
\label{tab:submodular_functions}
\end{table*}

We consider the following sampler variants.

\begin{itemize}
    \item {\tt Wholegraph}: All neighbors of vertices are taken.     
    \item {\tt Random}: $25$ Random neighbors are taken from each node's first and second hop.
    \item {\tt knncosine}: Similarity sampler with $k$-Nearest Neighbor using Cosine similarity function. 
    \item {\tt knneuclidean}: $k$-Nearest Neighbor sampler with Euclidean distance function. Note that $distance.max()-distance$ operation is performed to convert it into similarity.
    \item {\tt submodularcosine}: submodular optimization with facility location using cosine similarity as $\phi$ is used. The general form of the facility location function is:
   
   $f(X, Y) = \sum\limits_{y \in Y} \max_{x \in X} \phi(x, y)$

   \item {\tt submodulareuclidean}: submodular optimization with facility location using Euclidean distance measures is used. The distance is also converted into similarity.

   \item {\tt 
   fastlink}: The learned weight from the Siamese model using regression task directly used for sampling.

   \item {\tt 
   link-nn}: The learned weight is used with the nearest neighbor ranking and weight assignment. 

   \item {\tt 
   link-sub}: The learned function generates a pairwise similarity kernel for the submodular optimization facility location function.

   \item {\tt apricotfacilityeuclidean}: This is the same as {\tt submodulareuclidean}, but the implementation and optimization operations are performed using the Apricot library.

   \item {\tt apricotfacilitycosine}: This is the same as {\tt submodularcosine}, but the implementation and optimization operations are performed using the Apricot library.

   \item {\tt apricotcoverage}: Maximum Coverage submodular function is used. The general form of coverage function is:

   $f(X) = \sum\limits_{d=1}^{D} \min \left( \sum\limits_{n=1}^{N} X_{i, d}, 1 \right)$

   \item {\tt apricotfeature}: Feature-based submodular function is used. The general form of feature-based function is,

   $f(X) = \sum\limits_{d=1}^{D}\phi\left(\sum\limits_{i=1}^{N} X_{i, d}\right) $

   This experiment uses the $\phi = \mathrm{sqrt}$ concave function.

   \item {\tt apricotgraph}: Graph-based submodular function is used. The general form of Graph cut selection is:

   	$f(X, V) = \lambda\sum_{v \in V} \sum_{x \in X} \phi(x, v) - \sum_{x, y \in X} \phi(x, y)$
    
\end{itemize}

\end{document}